\let\parSym\S
\def\S{\mathcal{S}}
\crefname{section}{\parSym}{\parSym\parSym}
\Crefname{section}{\parSym}{\parSym\parSym}
\crefname{appendix}{\parSym}{\parSym\parSym}
\newcommand{\gray}{\color[rgb]{0.5,0.5,0.5}}
\newcommand{\red}{\color[rgb]{1,0,0}}
\newcommand{\Real}{\mathbb{R}}
\newcommand{\Y}{\mathcal{Y}}
\newcommand{\V}{\mathcal{V}}
\newcommand{\E}{\mathcal{E}}
\renewcommand{\H}{\mathcal{H}}
\newcommand{\G}{\mathcal{G}}
\newcommand{\U}{\mathcal{U}}
\newcommand{\M}{\mathcal{M}}
\renewcommand{\H}{\mathcal{H}}
\renewcommand*{\paragraph}[1]{\par\noindent{\normalsize\bf #1}\,\xspace}
\newcommand{\revisit}[1][]{%
\ifthenelse{\equal{#1}{}}{
\ensuremath{\red \triangle}\xspace}{%
{\ensuremath{\red \rhd}\xspace}%
{\gray #1}%
{\ensuremath{\red \lhd}\xspace}%
}%
}
\DeclareMathOperator*\argmin{arg\,min}
\DeclareMathOperator*\nb{Nb}
\newcommand\R{\mathbb R}
\newcommand\BR{\mathbb R}
\newcommand\SG{\mathcal G}
\newcommand\SV{\mathcal V}
\newcommand\SE{\mathcal E}
\newcommand\SM{\mathcal M}
\newcommand\SY{\mathcal Y}
\newcommand\SI{\mathcal I}
\newcommand\SU{\mathcal U}
\newcommand*{\QED}{\hfill\ensuremath{\square}}%
\newcommand\Diff{\texttt{MSD} }
\newcommand\HaSh{\texttt{MPLP}++ }
\DeclareMathAlphabet\mathbfcal{OMS}{cmsy}{b}{n}
\newcommand\ThA[2]{\theta_{#1}^{\mathcal{#2}}}
\begin{document}
\pagestyle{headings}
\mainmatter
\def\ECCV18SubNumber{2108}  

\def\mytitle{MPLP++: Fast, Parallel Dual Block-Coordinate Ascent for Dense Graphical Models}
\title{\mytitle} 


\author{Siddharth Tourani$^1$, Alexander Shekhovtsov$^2$, Carsten Rother$^1$, Bogdan Savchynskyy$^1$}

\institute{$1$ Visual Learning Lab, Uni. Heidelberg, \\$2$ Centre for Machine Perception, Czech Technical University in Prague}

\maketitle

\begin{abstract}
Dense, discrete Graphical Models with pairwise  potentials are a powerful class of models which are employed in state-of-the-art computer vision and bio-imaging applications. This work introduces a new MAP-solver, based on the popular Dual Block-Coordinate Ascent principle. Surprisingly, by making a small change to the low-performing solver, the Max Product Linear Programming (MPLP) algorithm~\cite{NIPS2007_3200}, we derive the new solver MPLP++ that significantly outperforms all existing solvers by a large margin, including the state-of-the-art solver Tree-Reweighted  Sequential  (TRW-S) message-passing algorithm~\cite{kolmogorov2006convergent}. Additionally, our solver is highly parallel, in contrast to TRW-S, which gives a further boost in performance with the proposed GPU and multi-thread CPU implementations. We verify the superiority of our algorithm on dense problems from publicly available benchmarks, as well, as a new benchmark for 6D Object Pose estimation. We also provide an ablation study with respect to graph density.
\keywords{Graphical models, MAP-Inference, Block-Coordinate-Ascent, Dense Graphs, Message Passing Algorithms}
\end{abstract}

\section{Introduction}

Undirected discrete graphical models with dense neighbourhood structure are known to be much more expressive than their sparse counterparts. A striking example is the fully-connected Conditional Random Field (CRF) model with Gaussian pairwise potentials~\cite{krahenbuhl2011efficient},  significantly improving the image segmentation field, once an efficient solver for the model was proposed. More recently, various applications in computer vision and bio-imaging have  successfully used fully-connected or densely-connected, pairwise models with non-Gaussian potentials. Non-Gaussian potentials naturally arise from application-specific modelling or the necessity of robustness potentials. A prominent application of the non-Gaussian fully-connected CRF case achieved state-of-the-art performance in 6D object pose estimation problem~\cite{michel2017global}, with an efficient {\em application-specific} solver. Other examples of densely connected models were proposed in the area of stereo-reconstruction \cite{kolmogorov2006comparison}, body pose estimation~\cite{bergtholdt2010study,nowozin2011decision,kirillov2016joint}, bio-informatics~\cite{kainmueller2014active} etc.

An efficient solver is a key condition to make such expressive models efficient in practice. This work introduces such a solver, which outperforms all existing methods for a class of dense and semi-dense problems with non-Gaussian potentials. This includes Tree-Reweighted Sequential (TRW-S) message passing, which is typically used for general pairwise models. We would like to emphasize that efficient solvers for this class are highly desirable, even in the age of deep learning. The main reason is that such expressive graphical models can encode information which is often hard to learn from data, since very large training datasets are needed to learn the application specific prior knowledge. In the the above mentioned 6D object pose estimation task, the pairwise potentials encode length-consistency between the observed data and the known 3D model and the unary potentials are learned from data. Other forms of combining graphical models with CNNs such as Deep-Structured-Models~\cite{chen2015learning} can also benefit from the proposed solver.

Linear Programming (LP) relaxation is a powerful technique that can solve exactly all known tractable maximum a posteriori (MAP) inference  problems for undirected graphical models (those known to be polynomially solvable)~\cite{kolmogorov2015power}. Although there are multiple algorithms addressing the MAP inference, which we discuss in~\cref{sec:related}, the linear programs obtained by relaxing the MAP-inference problem are not any simpler than general linear programs~\cite{Prusa-PAMI-2017}. This implies that algorithms solving it exactly are bounded by the computational complexity of the general LP and do not scale well to problems of large size. Since LP relaxations need not be tight, solving it optimally is often impractical. On the other hand, block coordinate ascent (BCA) algorithms for the LP dual problem form an approach delivering fast and practically useful approximate solutions. TRW-S~\cite{kolmogorov2006convergent} is probably the most well-known and efficient solver of this class, as shown in~\cite{kappes-2015-ijcv}. Since due to the graph density the model size grows quadratically with the number of variables, a scalable solver must inevitably be highly parallelizable to be of practical use. Our work improves another well-known BCA algorithm of this type,
MPLP~\cite{NIPS2007_3200} (Max Product Linear Programming algorithm) and proposes a parallel implementation as explained next.

\paragraph{Contribution}
We present a new state-of-the-art parallel BCA algorithm of the same structure as MPLP, \ie an elementary step of our algorithm updates all dual variables related to a single graph edge.
We explore the space of such edge-wise updates and propose that a different update rule inspired by~\cite{Discrete-Continuous-16} can be employed, which significantly improves the practical performance of MPLP. Our method with the new update is termed MPLP++. The difference in the updates stems from the fact that the optimization in the selected block of variables is non-unique and the way the update utilizes the degrees of freedom to which the block objective is not sensitive significantly affects subsequent updates and thus the whole performance.

We propose the following theoretical analysis. We show that MPLP++ converges towards arc-consistency, similarly to the convergence result of~\cite{schlesingera2011diffusion} for min-sum diffusion. We further show, that given any starting point, an iteration of the MPLP++ algorithm, which processes all edges of the graph in a specified order always results in a better objective than the same iteration of MPLP. For multiple iterations this is not theoretically guaranteed, but empirically observed in all our test cases. All proofs relating to the main paper are given in the supplement.

Another important aspect that we address is parallelization. TRW-S is known as a ``sequential'' algorithm. However it admits parallelization, especially for bipartite graphs \cite{kolmogorov2006convergent}, which is exploited in specialized implementations~\cite{choi2012hardware,hurkat2015fast} for $4$-connected grid graphs. The parallelization there gives a speed-up factor of $O(n)$, where $n$ is the number of nodes in the graph. A parallel implementation for dense graphs has not been proposed.
We observe that in MPLP a group of non-incident edges can be updated in parallel. We pre-compute a schedule maximizing the number of edges that can be processed in parallel using an exact or a greedy maximum matching. The obtainable theoretical speed-up is at least $n/2$ for {\em any} graph, including dense ones.
We consider two parallel implementations, suitable for CPU and GPU architectures respectively. A further speed-up is possible by utilizing parallel algorithms for lower envelopes in message passing (see~\cref{sec:bca} and~\cite{Chen-98}).

The new MPLP++ method consistently outperforms all its competitors, including TRW-S~\cite{kolmogorov2006convergent}, in the case of densely (not necessarily fully) connected graphs, even in the sequential setting: In our experiments it is $2$ to $10$ times faster than TRW-S and $5$ to $150$ times faster than MPLP depending on the dataset and the required solution precision. The empirical comparison is conducted on several datasets. As there are only few publicly available ones, we have created a new dataset related to the 6D pose estimation problem~\cite{michel2017global}. Our code and the new benchmark dataset will be made publicly available together with the paper. 

\section{Related work}\label{sec:related}
The general MAP inference problem for discrete graphical models (formally defined in~\cref{sec:preliminaries}) is NP-hard and is also hard to approximate~\cite{Li2016a}. A natural linear programming relaxation is obtained by formulating it as a 0-1 integer linear program (ILP) and relaxing the integrality constraints~\cite{schlesinger1976syntactic} (see also the recent review~\cite{werner2007linear}). A hierarchy of relaxations is known~\cite{Zivny-Werner-Prusa-ASP-MIT2014}, from which the so-called Base LP relaxation, also considered in our work, is the simplest one. It was shown~\cite{kolmogorov2015power,Thapper-12} that this relaxation is tight for all tractable subclasses of the problem. For many other classes of problems it provides approximation guarantees, reviewed~\cite{Li2016a}. A large number of specialized algorithms were developed for this relaxation.

Apart from general LP solvers, a number of specialized algorithms exist that take advantage of the problem structure and guarantee convergence to an optimal solution of the LP relaxation. This includes proximal~\cite{Ravikumar10,MartinsICML11,MeshiGloversonECML11,SchmidtEMMCVPR11},
dual sub-gradient~\cite{storvik2000lagrangian,SchlGig_12_usim2007,komodakis2007mrf}, bundle~\cite{kappes2012bundle}, mirror-descent~\cite{luong2012solving} smoothing-based~\cite{savchynskyy2011study,savchynskyy2012efficient,meshi2012convergence} and (quasi-) Newton~\cite{kannan2017newton} methods.

However, as it was shown in~\cite{Prusa-Werner-15-Universality}, the linear programs arising from the relaxation of the MAP inference have the same computational complexity as general LPs. At the same time, if the problem is not known to belong to a tractable class, solving the relaxation to optimality may be of low practical utility. A comparative study~\cite{kappes-2015-ijcv} notes that TRW-S~\cite{kolmogorov2006convergent} is the most efficient solver for the relaxation in practice. It belongs to the class of BCA methods for the LP dual that includes also MPLP~\cite{NIPS2007_3200}, min-sum diffusion~\cite{kovalevsky1975diffusion,schlesingera2011diffusion} and DualMM~\cite{Discrete-Continuous-16} algorithms. BCA methods are not guaranteed to solve the LP dual to  optimality. They may get stuck in a suboptimal point and be unable to compute primal LP solutions unless integer solutions are found (which is not always the case). However, they scale extremely well, take advantage of fast dynamic programming techniques, solve exactly all submodular problems~\cite{schlesinger2000some} and provide good approximate solutions in general vision benchmarks.
Suboptimal solutions of the relaxation can also be employed in exact solvers~\cite{cooper2010soft, savchynskyy2013global} using cutting plane or branch-and-cut techniques and in methods identifying a part of optimal solution~\cite{Shekhovtsov-IRI-PAMI,Swoboda-PAMI-16}.

\section{Preliminaries}\label{sec:preliminaries}

\paragraph{Notation}
$\SG=(\SV,\SE)$ denotes an undirected graph, with vertex set $\SV$ (we assume $\SV = \{1,\dots |\SV|\}$) and edge set $\SE$. The notation $uv\in\SE$ will mean that $\{u,v\}\in\SE$ and $u < v$ with respect to the order of $\SV$.
Each node $u\in\SV$ is associated with a label from a finite {\em set of labels} $\SY$ (for brevity \Wlog we will assume that it is the same set for all nodes). The label space for a pair of nodes $uv\in\E$ is $\Y^2$ and for all nodes it is $\Y^\V$.

For each node and edge the {\em unary} $\theta_u : \SY \to \R$, $u\in\SV$ and {\em pairwise} cost functions $\theta_{uv} : \SY^2 \to \BR$, $uv\in\SE$ assign a cost to a label or label pair respectively. 
Let $\SI=(\V\times\Y) \cup (\E\times\Y^2)$ be the index set enumerating all labels and label pairs in neighbouring graph nodes. Let the {\em cost vector} $\theta\in\BR^{\SI}$ contain all values of the functions $\theta_u$ and $\theta_{uv}$ as its coordinates.

The {\em MAP-inference} problem for the graphical model defined by a triple $(\SG,\SY^{\SV},\theta)$ consists in finding the labelling with the smallest total cost, \ie:

\begin{equation}\label{equ:energy-min}
 y^*=\argmin_{y\in\Y^\V}\left[E(y|\theta):=\sum_{v\in\SV}\theta_v(y_v)+\sum_{uv\in\SE}\theta_{uv}(y_{uv})\right]\,.
\end{equation}

This problem is also known as {\em energy minimization} for graphical models and is closely related to weighted and valued constraint satisfaction problems. The total cost $E$ is also often called {\em energy}.

The problem~\eqref{equ:energy-min} is in general NP-hard and is also hard to approximate~\cite{Li2016a}. A number of approaches to tackle it in different practical scenarios is reviewed in~\cite{kappes-2015-ijcv,hurley2016multi}. One of the widely applicable techniques is based on (approximately) solving its linear programming (LP) relaxation as discussed in \cref{sec:related}.

\paragraph{Dual Problem}
Most existing solvers for the LP relaxation tackle its dual form, which we introduce now. This is because the LP dual has much fewer variables than the primal and can also be written in the form of unconstrained concave (piecewise-linear) maximization. 
It is based on the fact that the representation of the energy function $E(y|\theta)$ using unary $\theta_u$ and pairwise $\theta_{uv}$ costs  is not unique.  There exist other costs $\hat{\theta}\in\BR^{\SI}$ such that  $E(y|\hat{\theta})=E(y|\theta)$ for all labelings $y\in\SY^{\SV}$.  
  
It is known~(see e.g. ~\cite{werner2007linear}) and straightforward to check that such \emph{equivalent} costs can be obtained with an arbitrary vector~$\phi:=(\phi_{v \rightarrow u}(s)\in\BR \mid u\in\SV,\ v\in\nb(u),\ s\in\SY)$, where $\nb(u)$ is the set of neighbours of $u$ in $\G$, as follows:
\begin{align}\label{equ:reparametrization}
  \hat{\theta}_u(s)\equiv \theta^{\phi}_u(s) &:=\theta_u(s) + \sum\nolimits_{v\in\nb(u)}\phi_{v \rightarrow u}(s)\\
 \hat{\theta}_{uv}(s,t) \equiv \theta^{\phi}_{uv}(s,t) &:=\theta_{uv}(s,t) - \phi_{v \rightarrow u}(s) - \phi_{u \rightarrow v}(t)\,.\nonumber
\end{align}
The cost vector~$\theta^{\phi}$ is called \emph{reparametrized} and the vector~$\phi$ is known as \emph{reparametrization}.
Costs related by~\eqref{equ:reparametrization} are also called \emph{equivalent}.
Other established terms for reparametrization are \emph{equivalence preserving}~\cite{cooper2004arc} or \emph{equivalent transformations}~\cite{schlesinger1976syntactic}. 

By swapping $\min$ and $\sum$ operations in~\eqref{equ:energy-min} one obtains a lower bound on the energy $D(\theta^{\phi}) \le E(y|\theta)$ for all $y$, which reads
\begin{equation}\label{equ:LP-lower-bound}
  D(\theta^{\phi}):=\sum_{u\in\SV}\min_{s\in\SY}\theta^{\phi}_u(s) + \sum_{uv\in\SE}\min_{(s,t)\in\SY^2}\theta^{\phi}_{uv}(s,t)\,.
\end{equation}
Although the energy $E(y|\theta)$ remains the same for all equivalent cost vectors (\eg $E(y|\theta)=E(y|\theta^{\phi})$), the lower bound $D(\theta^{\phi})$ depends on reparametrization, which is~ $D(\theta)\neq D(\theta^{\phi})$.
Therefore, a natural maximization problem arises as maximization of the lower bound over all equivalent costs: $\max_{\phi}D(\theta^{\phi})$.
It is known~(e.g. \cite{werner2007linear}) that this maximization problem can be seen as a dual formulation of the LP relaxation of~\eqref{equ:energy-min}. 
We will write $D(\phi)$ to denote $D(\theta^{\phi})$, since the cost vector $\theta$ is clear from the context.The function $D(\phi)$ is concave, piece-wise linear and therefore non-smooth. In many applications the dimensionality of $\phi$ often exceeds $10^5$ to $10^6$ and the respective dual problem $\max_{\phi} D(\phi)$ is large scale. 

\section{Dual Block-Coordinate Ascent}\label{sec:bca}
As we discussed in \cref{sec:related}, BCA methods, although not guaranteed to solve the dual to the optimality, provide good solutions for many practical instances and scale very well to large problems. The fastest such methods are represented by methods working with chain subproblems~\cite{kolmogorov2006convergent}, \cite{Discrete-Continuous-16} or their generalizations~\cite{kolmogorov2015new}.

The TRW-S algorithm can be seen as updating a block of dual variables $(\phi_{v \rightarrow u},\ v\in\nb(u))$ ``attached'' to a node $u$, during each elementary step. The same block of variables is also used in the {\em min-sum diffusion} algorithm~\cite{schlesingera2011diffusion}, as well as in {\em the convex message passing}~\cite{Hazan08norm-productbelief}, and~\cite{kolmogorov2015new} gives a generalization of such methods. 

However, the update coefficients in TRW-S are related to the density of the graph and its advantage diminishes when the graph becomes dense.
We show that for dense graphs updating a block of dual variables $\phi_{u \leftrightarrow v} = (\phi_{v \rightarrow u}, \phi_{u \rightarrow v})$ associated to an edge $uv\in\SE$ can be more efficient. Such updates were previously used in the MPLP algorithm~\cite{NIPS2007_3200}. We show that our MPLP++ updates differ in detail but bring a significant improvement in performance.

\subsubsection{Block Optimality Condition}
For further analysis of BCA algorithms we will require a sufficient condition of optimality \wrt the selected block of dual variables. The restriction of the dual to the block of variables $\phi_{u \leftrightarrow v}$ is given by the function:
\begin{equation}\label{restricted-dual}
D_{uv}(\phi_{u \leftrightarrow v}):= \min\limits_{st\in\SY^2}\theta^{\phi}_{uv}(s,t)+ \min\limits_{s\in\SY}\theta^{\phi}_{u}(s)+\min\limits_{t\in\SY}\theta^{\phi}_{v}(t)\,,  
\end{equation}
Maximizing $D_{uv}$ is equivalent to performing a BCA \wrt $\phi_{u \leftrightarrow v}$ for $D(\phi)$. 
The necessary and sufficient condition of maximum of $D_{uv}$ are given by the following.

\begin{restatable}{proposition}{TblockOpt}\label{prop:block-optimality-condition}
Reparametrization $\phi_{u \leftrightarrow v}$ maximizes $D_{uv}(\cdot)$ iff there exist $(s,t)\in\Y^2$ such that
$s$ minimizes $\theta^{\phi}_{u}(\cdot)$, $t$ minimizes $\theta^{\phi}_{v}(\cdot)$ and
$(s,t)$ minimizes $\theta^{\phi}_{uv}(\cdot,\cdot)$.
\end{restatable}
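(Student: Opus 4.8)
The plan is to exploit a single invariance: for any fixed label pair $(s,t)\in\Y^2$, the block reparametrization $\phi_{u\leftrightarrow v}=(\phi_{v\rightarrow u},\phi_{u\rightarrow v})$ cancels out of the sum $\theta^{\phi}_u(s)+\theta^{\phi}_v(t)+\theta^{\phi}_{uv}(s,t)$. Indeed, by \eqref{equ:reparametrization} the terms $+\phi_{v\rightarrow u}(s)$ and $+\phi_{u\rightarrow v}(t)$ appearing in $\theta^{\phi}_u(s)$ and $\theta^{\phi}_v(t)$ are exactly cancelled by the terms $-\phi_{v\rightarrow u}(s)$ and $-\phi_{u\rightarrow v}(t)$ in $\theta^{\phi}_{uv}(s,t)$, so this sum equals a constant $c(s,t):=\theta_u(s)+\theta_v(t)+\theta_{uv}(s,t)+\sum_{w\in\nb(u)\setminus\{v\}}\phi_{w\rightarrow u}(s)+\sum_{w\in\nb(v)\setminus\{u\}}\phi_{w\rightarrow v}(t)$ that does not depend on $\phi_{u\leftrightarrow v}$. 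Since in \eqref{restricted-dual} each of the three minima is no larger than the corresponding value at any chosen labels, I obtain the key inequality $D_{uv}(\phi_{u\leftrightarrow v})\le c(s_0,t_0)$ for every block reparametrization and every $(s_0,t_0)\in\Y^2$, the right-hand side being a block-independent constant.

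Sufficiency is then immediate. If the stated condition holds for some pair $(s,t)$, then each of the three minima in \eqref{restricted-dual} is attained at $(s,t)$, so $D_{uv}(\phi_{u\leftrightarrow v})=\theta^{\phi}_u(s)+\theta^{\phi}_v(t)+\theta^{\phi}_{uv}(s,t)=c(s,t)$. Combining this with the key inequality applied to an arbitrary competitor $\phi'_{u\leftrightarrow v}$ at the same pair $(s,t)$ gives $D_{uv}(\phi'_{u\leftrightarrow v})\le c(s,t)=D_{uv}(\phi_{u\leftrightarrow v})$, so $\phi_{u\leftrightarrow v}$ is a maximizer.

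For necessity I will first show that the bound $\max D_{uv}\le\min_{(s_0,t_0)}c(s_0,t_0)$ is tight, by constructing one maximizer meeting the condition. Choosing $\phi_{v\rightarrow u}$ and $\phi_{u\rightarrow v}$ so as to flatten the node terms (e.g.\ setting them to cancel the frozen parts of $\theta^{\phi}_u$ and $\theta^{\phi}_v$) makes every label a minimizer of both $\theta^{\phi}_u(\cdot)$ and $\theta^{\phi}_v(\cdot)$, while $\theta^{\phi}_{uv}(\cdot,\cdot)$ then coincides with $c(\cdot,\cdot)$ up to an additive constant and is therefore minimized at any pair attaining $\min_{(s_0,t_0)}c(s_0,t_0)$; this reparametrization satisfies the condition, hence (by the sufficiency direction just proved) is optimal with value $\min_{(s_0,t_0)}c(s_0,t_0)$. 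Consequently every maximizer $\phi_{u\leftrightarrow v}$ attains this same value. Picking $(s^*,t^*)\in\argmin_{(s_0,t_0)}c(s_0,t_0)$, the equality $D_{uv}(\phi_{u\leftrightarrow v})=c(s^*,t^*)=\theta^{\phi}_u(s^*)+\theta^{\phi}_v(t^*)+\theta^{\phi}_{uv}(s^*,t^*)$ forces each of the three minima in \eqref{restricted-dual} to be attained at $(s^*,t^*)$, which is exactly the asserted condition with $(s,t)=(s^*,t^*)$.

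The conceptual crux is the invariance of $\theta^{\phi}_u(s)+\theta^{\phi}_v(t)+\theta^{\phi}_{uv}(s,t)$, which reduces the statement to an elementary LP-duality argument and makes the upper bound transparent; no general subgradient or concavity machinery is needed. The only genuinely delicate point is the necessity direction, where one must exclude the case in which the node minimizers and the edge minimizer all exist but no single pair is common to all three. This is precisely what tightness of the bound rules out: since the optimal value equals $\min_{(s_0,t_0)}c(s_0,t_0)$, the equality case pins the three separate minima down to a common optimal pair.
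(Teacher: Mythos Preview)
Your proof is correct and takes a genuinely different route from the paper. The paper handles sufficiency by citing \cite{schlesinger1976syntactic} and argues necessity via supergradient calculus: it writes down the components of a supergradient of the concave piecewise-linear function $D_{uv}$ with respect to $\phi_{u\leftrightarrow v}$ and observes that $0$ lies in the superdifferential exactly when the unary minimizers and the pairwise minimizer coincide. Your argument instead rests on the single-line invariance $\theta^{\phi}_u(s)+\theta^{\phi}_v(t)+\theta^{\phi}_{uv}(s,t)=c(s,t)$, which immediately yields the uniform upper bound $D_{uv}(\cdot)\le\min_{s,t}c(s,t)$; sufficiency is then equality in this bound, and for necessity you exhibit the ``flattened'' reparametrization to show the bound is tight and then use the standard sum-of-inequalities-equals-sum argument to pin all three minima to a common pair $(s^*,t^*)$. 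Your approach is fully self-contained and avoids any nonsmooth analysis, at the price of needing the explicit tightness construction; the paper's approach situates the result within general concave optimality conditions but relies on an external reference and on subdifferential machinery that your argument shows is unnecessary here.
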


This condition is trivial to check, it is a special case of arc consistency~\cite{werner2007linear} or weak tree-agreement~\cite{kolmogorov2006convergent} when considering a simple graph with one edge.
It is also clear that $\phi_{u \leftrightarrow v}$ satisfying this condition is not unique. For BCA algorithms it means that there are degrees of freedom in the block, which do not directly affect the objective. By moving on a plateau, they can nevertheless affect subsequent BCA updates. Therefore the performance of the algorithm will be very much dependent on the particular BCA update rule satisfying~\cref{prop:block-optimality-condition}.

\subsubsection{Block Coordinate Ascent Updates}
Given the form of the restricted dual~\eqref{restricted-dual} on the edge $uv$, all BCA-updates can be described as follows. Assume $\theta$ is the current reparametrized cost vector, \ie $\theta = \bar\theta^{\bar\phi}$ for some initial $\bar \theta$ and the current reparametrization $\bar \phi$.
\begin{definition}\label{def:BCA-update}
A {\em BCA update} takes on the input an edge $uv\in\E$ and costs $\theta_{uv}(s,t)$, $\theta_u(s)$ and $\theta_v(t)$ and outputs a reparametrization $\phi_{u\leftrightarrow v}$ satisfying \cref{prop:block-optimality-condition}. \WLOG, we assume that it will also satisfy $\min_{s,t}\theta^\phi_{uv}(s,t) = 0$.\footnote{This fixes the ambiguity \wrt a constant that can be otherwise added to the edge potential and subtracted from one of the unary potentials. This constant does not affect the performance of algorithms. }
\end{definition}

According to~\eqref{equ:reparametrization} a BCA-update results in the following reparametrized potentials:
\begin{align}
\theta^{\phi}_u = \theta_u +\phi_{v\to u}, \ \ \ \theta^{\phi}_v = \theta_v +\phi_{u\to v}, \ \ , \theta^{\phi}_{uv}(s,t) = \theta_{uv} - \phi_{v\to u} - \phi_{u\to v}.
\end{align}
Note that since all reparametrizations constitute a vector space, after a BCA-update $\phi$ we can update the current total reparametrization as $\bar\phi := \bar\phi + \phi$.

We will consider BCA-updates of the following form: first construct the aggregated cost $g_{uv}(s,t) = \theta_{uv}(s,t) + \theta_u(s) + \theta_v(t)$. This corresponds to applying a reparametrization $\mathring\phi_{u\leftrightarrow v} = (-\theta_u, -\theta_v)$, which gives $\theta_{uv}^{\mathring\phi} = g_{uv}$, $\theta_{u}^{\mathring\phi} = \theta_{v}^{\mathring\phi} = 0$. After that, a BCA update forms a new reparametrization $\phi_{u\leftrightarrow v}$ such that $\theta^{\mathring\phi + \phi}$ satisfies~\cref{prop:block-optimality-condition}.

Such BCA-updates can be represented in the following form, which will be simpler for defining and analysing the algorithms.

\begin{definition}Consider a BCA-update using a composite reparametrization $\mathring\phi_{u\leftrightarrow v} + \phi_{u\leftrightarrow v}$. It can be then fully described by the {\em reparametrization mapping} $\gamma \colon g_{uv} \to (\theta^\gamma_u, \theta^\gamma_v)$, 
where $g_{uv} \in \Real^{\Y_{uv}}$, $\theta^\gamma_u = \phi_{v\rightarrow u}$ and $\theta^\gamma_v = \phi_{u\rightarrow v}$.
\end{definition}

\begin{figure*}[t]
\centering
\includegraphics[width=12cm]{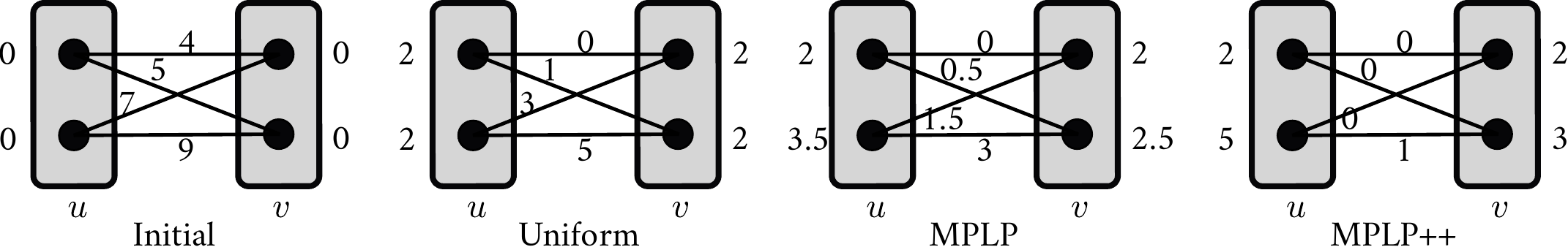}
\caption{Illustration of the considered BCA-updates. The gray boxes in the figure represent graph nodes. The black dots in them the labels. The edges connecting the black dots make up the pairwise costs. The numbers adjacent to the edges and labels are the pairwise and unary costs respectively.}
\label{fig:algs-illus}
\end{figure*}

By construction, $\theta^\gamma_u$ matches the reparametrized unary term $\theta^{\mathring\phi+\phi}_u$, $\theta^\gamma_v$ is alike and the reparametrized pairwise term is given by $\theta^\gamma_{uv} = g_{uv} - \phi_{v\rightarrow u} - \phi_{u\rightarrow v} = g_{uv} -\theta^\gamma_{u} -\theta^\gamma_{v}$. 
In what follows, BCA-update will mean specifically the reparametrization mapping $\gamma$.
We define now several BCA-updates that will be studied further.

$\bullet$ The {\tt uniform} BCA-update is given by the following reparametrization mapping~$\U$:

\begin{equation}\label{equ:uniform-update}
\tag{$\U$}
\begin{aligned}
 \theta^\U_u(s) & = \theta^\U_v(t) := {\textstyle\frac{1}{2}} \min\nolimits_{s',t'\in\SY}g_{uv}(s', t'),\ \forall s,t\in\Y.
\end{aligned}
\end{equation}
This is indeed just an example, to illustrate the problem of non-uniqueness of the minimizer. It is easy to see that this update satisfies~\cref{prop:block-optimality-condition} since both $\theta^\U_u(\cdot)$ and $\theta^\U_v(\cdot)$ are constant and therefore any pairwise minimizer of $\theta^\U_{uv}$ is consistent with them.

$\bullet$ The {\tt MPLP} BCA-update is given by the following reparametrization mapping~$\M$:
\begin{equation}\label{equ:MPLP-update}
\tag{$\M$}
\begin{aligned}
\theta^\M_u(s) & :=\textstyle \frac{1}{2} \min_{t \in \Y}g_{uv}(s,t),\ \forall s \in \SY,\\
\notag
\theta^\M_v(t) & :=\textstyle \frac{1}{2} \min_{s \in \Y}g_{uv}(s,t),\ \forall t \in \SY.
\end{aligned}
\end{equation}
The MPLP algorithm~\cite{NIPS2007_3200} can now be described as performing iterations by applying BCA-update $\M$ to all edges of the graph in a sequence.

$\bullet$ The new {\tt MPLP++} BCA-update, that we propose, is based on the {\em handshake} operation~\cite{Discrete-Continuous-16}. It is given by the following procedure defining the reparametrization mapping~$\H$:
\begin{equation}\label{equ:handshake-update}
\tag{$\H$}
\begin{aligned}
\theta^\H_u(s) & \textstyle := \theta^\M_u(s),\quad \theta^\H_v(s) \textstyle := \theta^\M_v(s),\ \forall s \in \SY\,, \\
\notag
\theta^\H_v(t) & \textstyle := \theta^\H_v(t) + \min_{s \in \Y}[g_{uv}(s,t) - \theta^\H_v(t)- \theta^\H_u(s)],\ \forall t \in \SY\,,\\
\notag
\theta^\H_u(s) & \textstyle := \theta^\H_u(s) + \min_{t \in \Y}[g_{uv}(s,t) - \theta^\H_v(t)- \theta^\H_u(s)],\ \forall s \in \SY\,.
\end{aligned}
\end{equation}

In other words, the {\tt MPLP++} update first performs the {\tt MPLP} update and then pushes as much cost from the pairwise factor to the unary ones, as needed to fulfill
$\min_{t}\theta^{\M}_{uv}(s,t)=\min_{s}\theta^{\M}_{uv}(s,t)=0$ for all labels $s$ and $t$ in the nodes $u$ and $v$ respectively.
It is also easy to see that the assignment $\theta^\H_v(s) := \theta^\M_v(s)$ together with the second line in~\ref{equ:handshake-update} can be equivalently substituted by
$\theta^\H_v(t) := \min_{s \in \Y}[g_{uv}(s,t) - \theta^\H_u(s)],\ \forall t \in \SY$. This allows to perform the {\tt MPLP++} update with $3$ minimizations over $\SY^2$ instead of $4$. \cref{fig:algs-illus} shows the result of applying the three BCA-updates on a simple two-node graph.

It is straightforward to show that $\M$ and $\H$ also satisfy~\cref{def:BCA-update} (see supplement) and therefore are liable BCA-updates. In spite of that, the behavior of all three updates is notably different, as it is shown by example in \cref{fig:strong-potts}. Therefore, proving only that some algorithm is a BCA, does not imply its efficiency.

\begin{figure} [!t]
\centering
 \subfigure[Uniform]{\label{fig:bca-uniform} \includegraphics[width=0.30\linewidth]{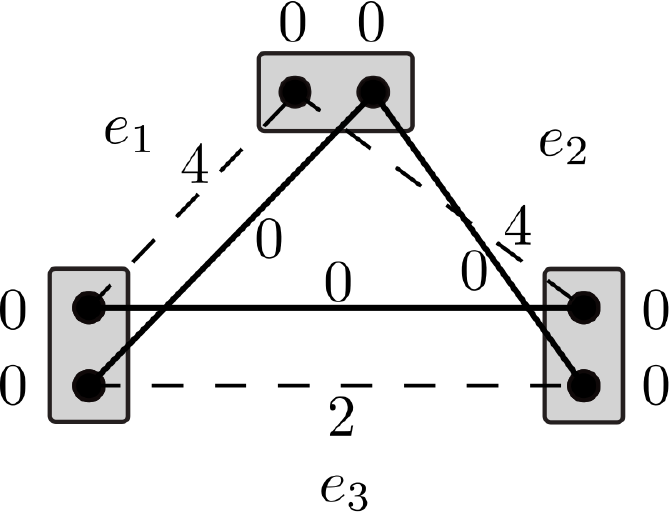}\quad}
 \subfigure[MPLP]{\label{fig:bca-mplp} \includegraphics[width=0.30\linewidth]{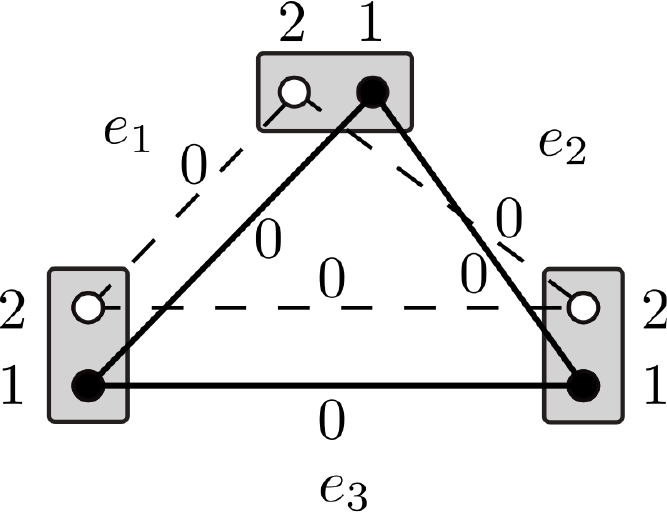}}
\caption{
 Choosing the right BCA-update is important. Notation has the same meaning as in Fig.~\ref{fig:algs-illus}, black circles denote locally optimal labels, white circles - the non-optimal ones.
 Solid lines correspond to the locally optimal pairwise costs connected to the locally optimal labels. Omitted lines in pairwise interactions denote infinite pairwise costs. $e_i$ denotes edge indexes, edge processing order is according to the subscript $i$.
 \textbf{(a)} Uniform update gets stuck and is unable to optimize the dual further. \textbf{(b)} {\tt MPLP} and {\tt MPLP++} attain the dual optimum in one iteration.
}
\label{fig:strong-potts}
\end{figure}

\subsubsection{Message Passing}
Importantly for performance, updates $\U$, $\M$ and $\H$ can be computed using a subroutine computing $\min_{t}[\theta_{uv}(s, t) + a(t)]$ for all $s$, where $\theta$ is a fixed initial pairwise potential and $a$ is an arbitrary input unary function. This operation occurring in dynamic programming and all of the discussed BCA algorithms, is known as {\em message passing}. In many cases of practical interest it can be implemented in time $O(|\Y|)$ (e.g. for Potts, absolute difference and quadratic costs) rather than $O(|\Y|^2)$ in the general case, using efficient sequential algorithms, \eg,~\cite{Aggarwal1987}. Using $|\Y|$ processors, the computation time can be further reduced to $O(\log |\Y|)$ with appropriate parallel algorithms, \eg,~\cite{Chen-98}.
\subsubsection{Primal Rounding} BCA-algorithms iterating BCA-updates give only a lower bound to the MAP-inference problem~\eqref{equ:energy-min}. To obtain a primal solution, we use a sequential rounding procedure similar to the one proposed in~\cite{kolmogorov2006convergent}. Assuming we have already computed a primal integer solution $x^{*}_v$ for all $v < u$, we want to compute $x^{*}_u$. To do so, we use the following equation for the assignment
\begin{equation}
\textstyle x^{*}_u \in \argmin_{x_u\in\Y} \Big[ \theta_u(x_u)+\sum_{v<u \mid uv \in \mathcal{E}} \theta_{uv}(x_u,x^{*}_v) \Big],
\end{equation}
where $\theta$ is the reparametrized potential produced by the algorithm.

\section{Theoretical Analysis}

As we prove below, {\tt MPLP++} in the limit guarantees to fulfill a necessary optimality condition, related to the {\em arc-consistency}~\cite{werner2007linear} and the {\em weak tree-agreement}~\cite{kolmogorov2006convergent}.

\paragraph{Arc-Consistency}
Let $\llbracket\cdot\rrbracket$ be the Iverson bracket, \ie $\llbracket A\rrbracket=1$ if $A$ holds. Otherwise $\llbracket A\rrbracket=0$.
Let $\bar\theta_u(s):=\llbracket \theta_u(s)=\min_{s'}\theta_u(s') \rrbracket$ and $\bar\theta_{uv}(s,t):=\llbracket \theta_{uv}(s,t)=\min_{s',t'}\theta_{uv}(s',t') \rrbracket$ be binary vectors
with values $1$ assigned to the locally minimal labels and label pairs. Let also logical {\em and} and {\em or} operations be denoted as $\wedge$ and $\vee$. To the binary vectors they apply coordinate-wise. 

\begin{definition}
 A vector $\bar\theta\in\{0,1\}^{\SI}$ is called {\em arc-consistent}, if $\bigvee_{t\in\SY}\bar\theta_{uv}(s,t)=\bar\theta_u(s)$ for all $\{u,v\}\in\SE$ and $s\in\SY$.  
\end{definition}

However, arc-consistency itself is not necessary for dual optimality. The necessary condition is existence of the node-edge agreement, which is a special case of the weak tree agreement~\cite{kolmogorov2006convergent} when individual nodes and edges are considered as the trees in a problem decomposition. This condition is also known as a non-empty {\em  kernel}~\cite{werner2007linear} / {\em arc-consistent closure}~\cite{cooper2004arc} of the cost vector $\theta$.

\begin{definition}
 We will say that the costs $\theta\in\BR^{\SI}$ fulfill the {\em node-edge agreement}, if there is an arc-consistent vector $\xi\in\{0,1\}^{\SI}$ such that $\xi\wedge\bar\theta=\xi$.   
\end{definition}

\subsubsection{Convergence of MPLP++} It is clear that all BCA algorithms are monotonous and converge in the dual objective value as soon as the dual is bounded (\ie, primal is feasible). However, such convergence is weaker than convergence in the reparametrization than is desired. To this end we were able to show something in between the two: the convergence of {\tt MPLP++} in a measure quantifying violation of the node-edge agreement, a result analogous to~\cite{schlesingera2011diffusion}. 

\begin{restatable}{theorem}{TconvergenceAC}\label{thm:alg-convergence}
The {\tt MPLP++} algorithm converges to node-edge agreement.
\end{restatable}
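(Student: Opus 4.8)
The plan is to establish convergence to node-edge agreement by constructing a nonnegative \emph{potential function} that measures the violation of the agreement, showing it is monotonically nonincreasing under \texttt{MPLP++} iterations, and showing that its limit forces the node-edge agreement condition in the limit point. This mirrors the structure of the diffusion convergence argument of~\cite{schlesingera2011diffusion}. The first step is to pin down what quantity to track. Since the dual objective $D(\phi)$ is monotonically nondecreasing (BCA is monotone) and bounded above whenever the primal is feasible, $D(\phi^{(k)})$ converges; however, as the excerpt notes, this is weaker than what we need. So I would instead work with the reparametrized costs and track, for each edge, a measure of how far the current reparametrization is from satisfying \cref{prop:block-optimality-condition} together with the global arc-consistency-type coupling across edges. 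A natural candidate is a sum over edges of the ``slack'' $\sum_{uv}\big(\min_{s}\theta^\phi_u(s) + \min_{t}\theta^\phi_v(t) + \min_{s,t}\theta^\phi_{uv}(s,t) - \min_{s,t} g_{uv}(s,t)\big)$, or equivalently the per-edge increment in $D$ contributed by the \texttt{MPLP++} update; node-edge agreement corresponds precisely to all these increments vanishing simultaneously.

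The key steps, in order, are as follows. First, I would show that a single \texttt{MPLP++} update on edge $uv$ increases $D$ by a definite amount expressible in terms of the handshake operation~\eqref{equ:handshake-update}; in particular, after the update the block-optimality condition holds with equality, so the per-edge slack is driven to zero at the moment of processing. Second, because $D$ is bounded and nondecreasing and each full iteration processes every edge in a fixed order, the total dual increase over an iteration must tend to zero, forcing the sum of per-edge increments to tend to zero. Third — and this is the crucial coupling step — I would argue that in the limit the reparametrizations converge along a subsequence (working on a compact effective domain after factoring out the constant ambiguity fixed by \cref{def:BCA-update}), and that at any limit point every edge simultaneously satisfies~\cref{prop:block-optimality-condition}. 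Finally, I would translate this simultaneous block optimality of the limit reparametrization into the existence of an arc-consistent indicator vector $\xi\in\{0,1\}^{\SI}$ with $\xi\wedge\bar\theta=\xi$, which is exactly the node-edge agreement of the limiting costs.

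The main obstacle I anticipate is the third step: passing from ``each edge's increment tends to zero'' to ``the limit point satisfies block optimality on \emph{all} edges at once.'' The difficulty is that the per-edge slack being driven to zero happens \emph{at the time} the edge is processed, but subsequent updates on neighbouring edges can reintroduce violation on already-processed edges, so vanishing increments do not immediately give a single reparametrization that is block-optimal everywhere. Handling this requires a continuity/compactness argument together with a careful use of the specific \texttt{MPLP++} push (the handshake), which, unlike \texttt{MPLP}, leaves the pairwise factor with zero row- and column-minima and therefore controls how much a later update can perturb an earlier edge. The remaining subtlety is the degrees of freedom on the plateau described after \cref{prop:block-optimality-condition}: I must show these do not prevent convergence of the relevant binary pattern $\bar\theta$, which is what ultimately defines node-edge agreement, even if $\phi$ itself wanders on a plateau. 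I expect to resolve this by tracking $\bar\theta$ (or the arc-consistent closure) directly rather than $\phi$, since the former stabilizes even when the latter does not.
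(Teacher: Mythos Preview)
Your high-level scaffolding (monotone bounded dual $D$, subsequential compactness of the reparametrized cost vectors, continuity of the update map, then analysis of limit points) matches the paper's and is sound. The point of divergence is the measured quantity. You track the per-edge dual increment and aim to show that its vanishing forces simultaneous block-optimality and then node-edge agreement. The paper instead defines a \emph{tolerance factor} $\varepsilon(\theta)$, namely the smallest $\varepsilon\ge 0$ for which the $\varepsilon$-optimal sets $\mathcal{O}^\varepsilon(\theta)$ contain an arc-consistent subpattern, proves that $\varepsilon$ is continuous in $\theta$, and shows the key combinatorial lemma: if one full iteration leaves $D$ unchanged but $\varepsilon>0$, then the set of exact minimizers $\mathcal{O}^0$ strictly shrinks. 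Since $\mathcal{O}^0$ is finite this cannot persist, so at any limit point $\varepsilon=0$, and continuity of $\varepsilon$ carries this back to the whole sequence. This bypasses entirely the obstacle you flagged.

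That obstacle is real in your route, and I do not think the handshake row/column-zero property alone closes it. Even at a limit point $\theta^*$ with $D(\mathcal H(\theta^*))=D(\theta^*)$, each edge is block-optimal only \emph{at the moment it is processed} within the sweep from $\theta^*$; because the $\mathcal H$ update moves on the plateau (it genuinely changes $\theta_u,\theta_v$ even when block-optimal), you do not get simultaneous block-optimality at $\theta^*$ itself. And even if you had it, simultaneous block-optimality on all edges is strictly weaker than node-edge agreement: it gives one witness pair per edge, not an arc-consistent $\xi\le\bar\theta$. Your final-paragraph instinct --- to abandon the per-edge slack and track $\bar\theta$ (equivalently $\mathcal O^0$) directly --- is exactly the paper's move; the shrinking-$\mathcal O^0$ argument is the missing ingredient that makes that instinct into a proof.
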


Another important question is the comparison of different BCA methods.
When comparing different algorithms, an ultimate goal is to prove faster convergence of one compared to the other one.
We cannot show that the new {\tt MPLP++} algorithm has a better theoretical convergence rate. First, such rates are generally unknown for BCA algorithms for non-smooth functions. Second, considered algorithms are all of the same family and it is likely that their asymptotic rates are the same. Instead, we study the {\em dominance}, the condition that allows to rule out BCA updates which are always inferior to others. Towards this end we show that given the same starting dual point, one iteration of {\tt MPLP++} always results in a better objective value than that of {\tt MPLP} and {\tt uniform} BCA. While this argument does not extend theoretically to multiple iterations of each method, we show that it is still true in practice for all used datasets and a significant speed-up (up to two orders) is observed. The experimental comparison in~\cref{sec:experiments} gives results in wall-clock time as well as in a machine-independent count of the message passing updates performed.

\subsection{Analysis of BCA-updates}
\begin{definition}A {\em BCA-iteration} $\alpha$ is defined by the BCA-update $\gamma$ applied to all edges $\E$ in some chosen order.
Let also $D(\alpha)$ represent the dual objective value with the reparametrization defined by the iteration $\alpha$ on the input costs $\theta$.
\end{definition}
We will analyze BCA-iterations of different BCA-updates \wrt the same sequence of edges. The goal is to show that an iteration of the new update dominates the baselines in the dual objective. This property is formally captured by the following definition.

 \begin{definition}
 We will say that a BCA-iteration $\alpha$ {\em dominates} a BCA-iteration $\beta$, if for any input costs it holds $D(\alpha)\ge D(\beta)$. 
\end{definition}

In order to show it, we introduce now and prove later the dominance relations of individual BCA-updates. They are defined not on the dual objective but on all unary components.
\begin{definition}\label{def:dominance}
Let $\gamma$ and  $\delta$ be two BCA-updates.
We will say that update $\gamma$ {\em dominates} $\delta$ (denoted as $\gamma\ge \delta$) if for any $g_{uv} \in\BR^{\Y^2}$ it holds that $\gamma[g_{uv}] \geq \delta[g_{uv}]$, where the inequality is understood as component-wise inequalities $\theta^\gamma_u[g_{uv}] \geq \theta^\delta_u[g_{uv}]$ and $\theta^\gamma_v[g_{uv}] \geq \theta^\delta_v[g_{uv}]$.
\end{definition}

We can show the following dominance results. Recall that $\mathcal{U}$, $\mathcal{H}$ and $\mathcal{M}$ are the {\tt uniform}, {\tt MPLP} and {\tt MPLP++} BCA-updates, respectively.
\begin{restatable}{proposition}{Pbcadominances}\label{thm:hs-update-doms}
The following BCA-dominances hold: $\mathcal{H} \geq \mathcal{M} \geq \mathcal{U}$.
\end{restatable}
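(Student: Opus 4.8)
The plan is to prove the two dominances $\M\ge\U$ and $\H\ge\M$ separately, in each case unwinding \cref{def:dominance} into the component-wise inequalities $\theta^\gamma_u(s)\ge\theta^\delta_u(s)$ and $\theta^\gamma_v(t)\ge\theta^\delta_v(t)$ that must hold for every common aggregated cost $g_{uv}$; transitivity then yields the full chain $\H\ge\M\ge\U$. The direction $\M\ge\U$ is immediate: by definition $\theta^\U_u(s)=\tfrac12\min_{s',t'}g_{uv}(s',t')$ is a minimum over \emph{both} arguments, while $\theta^\M_u(s)=\tfrac12\min_{t}g_{uv}(s,t)$ minimizes only over the second, so $\theta^\M_u(s)\ge\theta^\U_u(s)$ for all $s$, and symmetrically for $v$. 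Thus $\M\ge\U$ follows from monotonicity of $\min$ alone.

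The substantive part is $\H\ge\M$. Here I would use that the handshake update~\eqref{equ:handshake-update} is initialized at $\theta^\H=\theta^\M$ and then modifies the unaries by two successive increments, so it suffices to show that each increment is non-negative. For the $v$-increment I would bound, for each fixed $s$, the bracketed term using $\min_{t'}g_{uv}(s,t')\le g_{uv}(s,t)$, which gives $g_{uv}(s,t)-\theta^\M_u(s)\ge\tfrac12 g_{uv}(s,t)$; taking the minimum over $s$ then yields $\min_s[g_{uv}(s,t)-\theta^\M_u(s)]\ge\tfrac12\min_s g_{uv}(s,t)=\theta^\M_v(t)$, so the $v$-increment is $\ge 0$. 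At this point I would record the simplification noted after~\eqref{equ:handshake-update}: since the $\theta^\M_v(t)$ terms telescope, after the second line one has exactly $\theta^\H_v(t)=\min_s[g_{uv}(s,t)-\theta^\M_u(s)]$.

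For the $u$-increment the clean observation is that, by this simplified form, $\theta^\H_v(t)\le g_{uv}(s,t)-\theta^\M_u(s)$ for every $s$ and $t$, because $\theta^\H_v(t)$ is precisely the minimum over $s'$ of that quantity. Rearranging gives $g_{uv}(s,t)-\theta^\H_v(t)\ge\theta^\M_u(s)$ for all $t$, hence $\min_t[g_{uv}(s,t)-\theta^\H_v(t)]\ge\theta^\M_u(s)$ and the $u$-increment is also $\ge 0$. Combining the two increments gives $\theta^\H_u(s)\ge\theta^\M_u(s)$ and $\theta^\H_v(t)\ge\theta^\M_v(t)$ pointwise, establishing $\H\ge\M$ and completing \cref{thm:hs-update-doms}.

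The main obstacle I anticipate is bookkeeping in the $u$-increment: it is defined with respect to the \emph{already-updated} $\theta^\H_v$ rather than $\theta^\M_v$, so one must respect the sequential order of the handshake and use the telescoping identity $\theta^\H_v(t)=\min_s[g_{uv}(s,t)-\theta^\M_u(s)]$ to turn the desired bound into a one-line consequence of the definition of a minimum. Once that identity is in hand, every remaining step is just monotonicity of $\min$.
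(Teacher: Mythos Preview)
Your proposal is correct and follows essentially the same approach as the paper: for $\M\ge\U$ both proofs use the trivial observation that minimizing over more arguments gives a smaller value, and for $\H\ge\M$ both exploit that the handshake update is initialized at $\M$ and then adds two non-negative increments. In fact your write-up is more complete than the paper's, which simply asserts that each increment is non-negative without justification; your use of the telescoped identity $\theta^\H_v(t)=\min_s[g_{uv}(s,t)-\theta^\M_u(s)]$ to handle the sequential dependence in the $u$-increment is exactly the missing step, and the bookkeeping obstacle you anticipated is handled correctly.
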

It is easy to see that the dominance~\cref{def:dominance} is transitive and so also $\mathcal{H} \geq \mathcal{U}$.

We will prove that such coordinate-wise dominance of BCA-updates implies also the dominance in the dual objective whenever the following monotonicity property holds:
\begin{definition}
A BCA-update $\gamma$ is called {\em monotonous} if 
$(\theta_u \geq \theta'_u$,\ $\theta_v \geq \theta'_v)$ implies $\gamma[\theta_u + \theta_{uv} + \theta_v] \ge \gamma[\theta'_u + \theta_{uv} + \theta'_v]$ for all $\theta, \theta'$.
\end{definition}

\begin{restatable}{proposition}{Pmonotone}\label{thm:mplp-is-monotonous}
Updates $\mathcal{U}$ and $\mathcal{M}$ are monotonous. The update $\mathcal{H}$ is not monotonous.
\end{restatable}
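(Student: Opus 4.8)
The plan is to treat the two claims separately, exploiting that $\U$ and $\M$ are assembled only from order-preserving operations, whereas $\H$ contains an order-reversing step.

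For $\U$ and $\M$, I would first observe that the hypothesis $\theta_u\ge\theta'_u$ and $\theta_v\ge\theta'_v$ implies the pointwise inequality $\theta_{uv}+\theta_u+\theta_v\ge\theta_{uv}+\theta'_u+\theta'_v$ between the two aggregated costs. It therefore suffices to prove the stronger statement that both updates are monotone under an \emph{arbitrary} pointwise increase of their argument $g_{uv}$. This is immediate from~\eqref{equ:uniform-update} and~\eqref{equ:MPLP-update}: every output coordinate is obtained from $g_{uv}$ by a partial minimization ($\min_{s,t}$, $\min_t$, or $\min_s$) followed by multiplication by $\tfrac12$, and both of these operations preserve the pointwise order. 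Concretely, if $g_{uv}\ge g'_{uv}$ then $\min_t g_{uv}(s,t)\ge\min_t g'_{uv}(s,t)$ for every $s$, hence $\theta^{\M}_u[g_{uv}]\ge\theta^{\M}_u[g'_{uv}]$, and the same argument handles $\theta^{\M}_v$ as well as both coordinates of $\U$. This settles monotonicity of $\U$ and $\M$.

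For $\H$ the obstacle, and simultaneously the source of the failure, is the second half of the handshake. Using the simplification noted after~\eqref{equ:handshake-update}, the update reads $m_u(s)=\tfrac12\min_t g_{uv}(s,t)$, then $\theta^{\H}_v(t)=\min_s[g_{uv}(s,t)-m_u(s)]$, and finally $\theta^{\H}_u(s)=\min_t[g_{uv}(s,t)-\theta^{\H}_v(t)]$. The last two lines subtract a minimum from $g_{uv}$, which is order-reversing. Raising a single unary value $\theta_u(s_0)$ raises $m_u(s_0)$ by half the increment, which can raise the message $\theta^{\H}_v$ pushed onto $v$; this enlarged $\theta^{\H}_v$ is then subtracted in the final line and can \emph{lower} $\theta^{\H}_u(s)$ at the other labels $s\ne s_0$. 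The plan is to turn this coupling into an explicit counterexample.

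Concretely I would take $\SY=\{1,2\}$ and the pairwise cost $\theta_{uv}$ with $\theta_{uv}(1,1)=\theta_{uv}(1,2)=0$ and $\theta_{uv}(2,1)=\theta_{uv}(2,2)=4$, and compare the zero unaries $\theta'_u=\theta'_v=0$ against $\theta_u=(2,0)$, $\theta_v=0$, so that indeed $\theta_u\ge\theta'_u$ and $\theta_v\ge\theta'_v$. Running the three lines above on $g'_{uv}=\theta_{uv}$ gives $\theta^{\H}_u[g'_{uv}]=(0,4)$, whereas on $g_{uv}=\theta_{uv}+\theta_u+\theta_v$ it gives $\theta^{\H}_u[g_{uv}]=(1,3)$; in particular $\theta^{\H}_u(2)$ drops from $4$ to $3$ while the unary input only grew, contradicting monotonicity of $\H$. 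The only delicate point is to pick the numbers so that the minimizers realizing $\theta^{\H}_v(1)$ and $\theta^{\H}_v(2)$ are both the label $s=1$ of $u$ whose cost is raised, so that the added cost genuinely propagates into $\theta^{\H}_v$ and is subtracted off at $s=2$, rather than being absorbed by an unchanged competing term in the inner minimum; the main work is then the short direct verification that the chosen values realize this, which I would carry out by substituting them into the three displayed formulas.
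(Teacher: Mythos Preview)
Your proposal is correct and follows essentially the same route as the paper: for $\U$ and $\M$ you reduce to the fact that partial minimization and scaling preserve the pointwise order on $g_{uv}$, exactly as the paper does, and for $\H$ you exhibit a two-label counterexample. Your counterexample is different from (and in fact simpler than) the paper's, which uses $\theta_u=(4,0)$, $\theta_v=(2,0)$, $\theta_{uv}=\begin{pmatrix}0&1\\7&5\end{pmatrix}$, but both hinge on the same mechanism you correctly identified: the subtraction of $\theta^{\H}_v$ in the final line is order-reversing, so an increase that propagates into $\theta^{\H}_v$ can depress $\theta^{\H}_u$ at another label.
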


With these results we can formulate our main claim about domination in the objective value for the whole iteration.

\begin{restatable}{theorem}{Tdominance}\label{thm:algorithm-domination}
Let BCA-update $\gamma$ dominate BCA-update $\mu$ and let $\mu$ be monotonous. Then a BCA-iteration with $\gamma$ dominates a BCA-iteration with $\mu$.
\end{restatable}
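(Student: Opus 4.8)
The plan is to run both iterations on the same edge order $e_1,\dots,e_m$ (with $m=|\E|$) from the common input costs $\theta$, and to reduce the comparison of final dual values to a component-wise comparison of the final unary potentials. I would first isolate a structural fact: a BCA-update on an edge $uv$ modifies only $\phi_{v\to u}$ and $\phi_{u\to v}$, hence only $\theta^\phi_{uv}$, $\theta^\phi_u$ and $\theta^\phi_v$, and leaves every other pairwise term intact. Therefore, once $uv$ has been processed, its normalized value $\min_{s,t}\theta^\phi_{uv}(s,t)=0$ from \cref{def:BCA-update} is never disturbed by the later updates of the same iteration. After a full BCA-iteration every edge thus contributes $0$ to the pairwise sum in~\eqref{equ:LP-lower-bound}, so for both iterations the dual value equals $\sum_{u\in\SV}\min_{s}\theta_u(s)$ evaluated at the final unaries. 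It then suffices to dominate the unaries.

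Let $\theta^{(k)}$ and $\vartheta^{(k)}$ denote the reparametrized costs after processing $e_1,\dots,e_k$ under $\gamma$ and under $\mu$, with $\theta^{(0)}=\vartheta^{(0)}=\theta$. I would prove by induction on $k$ the invariant $\theta^{(k)}_w\ge\vartheta^{(k)}_w$ (component-wise) for all $w\in\SV$. The base case is equality. For the step, let $e_k=uv$; every node outside $\{u,v\}$ keeps its unary unchanged, so the invariant is inherited there. Since $uv$ has not yet been touched, its pairwise term is the original $\theta_{uv}$ in both runs, so the aggregated inputs $g^\gamma=\theta_{uv}+\theta^{(k-1)}_u+\theta^{(k-1)}_v$ and $g^\mu=\theta_{uv}+\vartheta^{(k-1)}_u+\vartheta^{(k-1)}_v$ differ only through their unary summands. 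Then
\[
\theta^{(k)}_u=\theta^\gamma_u[g^\gamma]\ \ge\ \theta^\mu_u[g^\gamma]\ \ge\ \theta^\mu_u[g^\mu]=\vartheta^{(k)}_u,
\]
where the first inequality is the dominance $\gamma\ge\mu$ (\cref{def:dominance}) on the shared argument $g^\gamma$, and the second is monotonicity of $\mu$ applied to $\theta^{(k-1)}_u\ge\vartheta^{(k-1)}_u$ and $\theta^{(k-1)}_v\ge\vartheta^{(k-1)}_v$. The symmetric argument handles $v$, closing the induction. Evaluating at $k=m$ and combining with the first paragraph gives $D(\alpha)\ge D(\beta)$ for the $\gamma$-iteration $\alpha$ and the $\mu$-iteration $\beta$.

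The one delicate point — and the reason the two hypotheses are assigned as they are — is that dominance compares $\gamma$ and $\mu$ only on a single common argument, whereas along the iteration the two runs feed genuinely different aggregated costs into each update, because earlier updates have already perturbed the shared unaries. Monotonicity of the dominated update $\mu$ is precisely what bridges this gap in the middle step above: it lets me lower the unary inputs from the $\gamma$-run's (larger) values to the $\mu$-run's (smaller) ones without increasing $\theta^\mu_u$. This is why the monotonicity requirement falls on $\mu$ rather than on $\gamma$, consistent with \cref{thm:mplp-is-monotonous}: when we instantiate $\gamma=\H$, the dominated updates are $\M$ and $\U$, both monotonous, whereas $\H$ itself need not be.
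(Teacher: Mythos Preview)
Your proof is correct and follows essentially the same approach as the paper: both arguments establish the chain $\gamma[g^\gamma]\ge\mu[g^\gamma]\ge\mu[g^\mu]$ via dominance then monotonicity, propagated edge by edge. Your version is in fact more carefully structured than the paper's, which treats the three cases (neither/both/one endpoint previously touched) but stops at coordinate-wise unary dominance without spelling out the final step; your observation that every processed pairwise term stays at $\min=0$ for the remainder of the iteration, reducing the dual comparison to $\sum_u\min_s\theta_u(s)$, makes that last step explicit.
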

From \cref{thm:hs-update-doms}, \cref{thm:mplp-is-monotonous} and \cref{thm:algorithm-domination} it follows now that BCA-iteration of {\tt MPLP++} dominates that of {\tt MPLP}, which in its turn dominates uniform.

\section{Parallelization}
\label{sec:parallelization}
To optimize $D(\theta^{\phi})$, we have to perform local operations on a graph, that per reparametrization influence only one edge $uv \in \mathcal{E}$ and it's incident vertices $u$ and $v$. The remaining graph $\mathcal{G}'=(\mathcal{V}-\{u,v\},\mathcal{E}-\{I_u \cup I_v\})$ remains unchanged. This gives rise to opportunities for parallelization. However, special care has to be taken to prevent {\em race conditions} which occurs when two or more threads access shared data and they try to change it at the same time.
 
Consider the case of~\cref{fig:par-details}, choosing edges $1$ and $2$ or $1$ and $6$ to process in parallel. These edges have vertex A in common, which would lead to race conditions. Processing edges $1$ and $3$ in parallel would lead to more parallelization as there are no conflicting nodes. Thus, for maximal parallelization we have to come up with an ordering of edges where threads working in parallel do not have common vertices.

Finding such edges without intersecting vertices is a well-studied problem in combinatorial optimization \cite{schrijver2003combinatorial}. A \emph{matching} $\M \subset \E$ in graph $\mathcal{G}$ is a set of edges such that no two edges in $\M$ share a common vertex. A matching is \emph{maximum} if it includes the largest number of edges, $|\M|$. Every edge in a matching can be processed in parallel without race conditions ensuing. There exist efficient greedy algorithms to find a maximum matching which we use. This gives rise to Algorithm~\ref{alg:edge-sched} for covering all edges of the graph while ensuring good parallelization. To cover the entire graph, we call a matching algorithm repeatedly, until all edges are exhausted. 

Initially, in line 1 the edge queue $\mathcal{Q_E}$ is empty. In line $3$, a maximum matching $\mathcal{E_M}$ is found. This is added to $\mathcal{Q_E}$ in line 4. This continues until all edges have been exhausted, \ie the edges remaining $\mathcal{E_R}$ is empty. The queue thus has a structure $\mathcal{Q_E}=(\mathcal{E}^1_\SM,\mathcal{E}^2_{\SM}, ..., \mathcal{E}^n_\SM)$, ordered left to right. $\mathcal{E}^i_\SM$ being the $i^{th}$  matching computed. The threads running in parallel can keep popping edges from $\mathcal{Q_E}$ and processing them without much need for mutex locking.

We have different implementation algorithms for GPUs and multi-core CPUs. 

\begin{figure*}[!t]
\centering
\begin{subfigure}[Parallelization Illustration]{
\includegraphics[scale=0.013]{}}
\end{subfigure}
\begin{subfigure}[GPU vs CPU Performance]{
\includegraphics[scale=0.30]{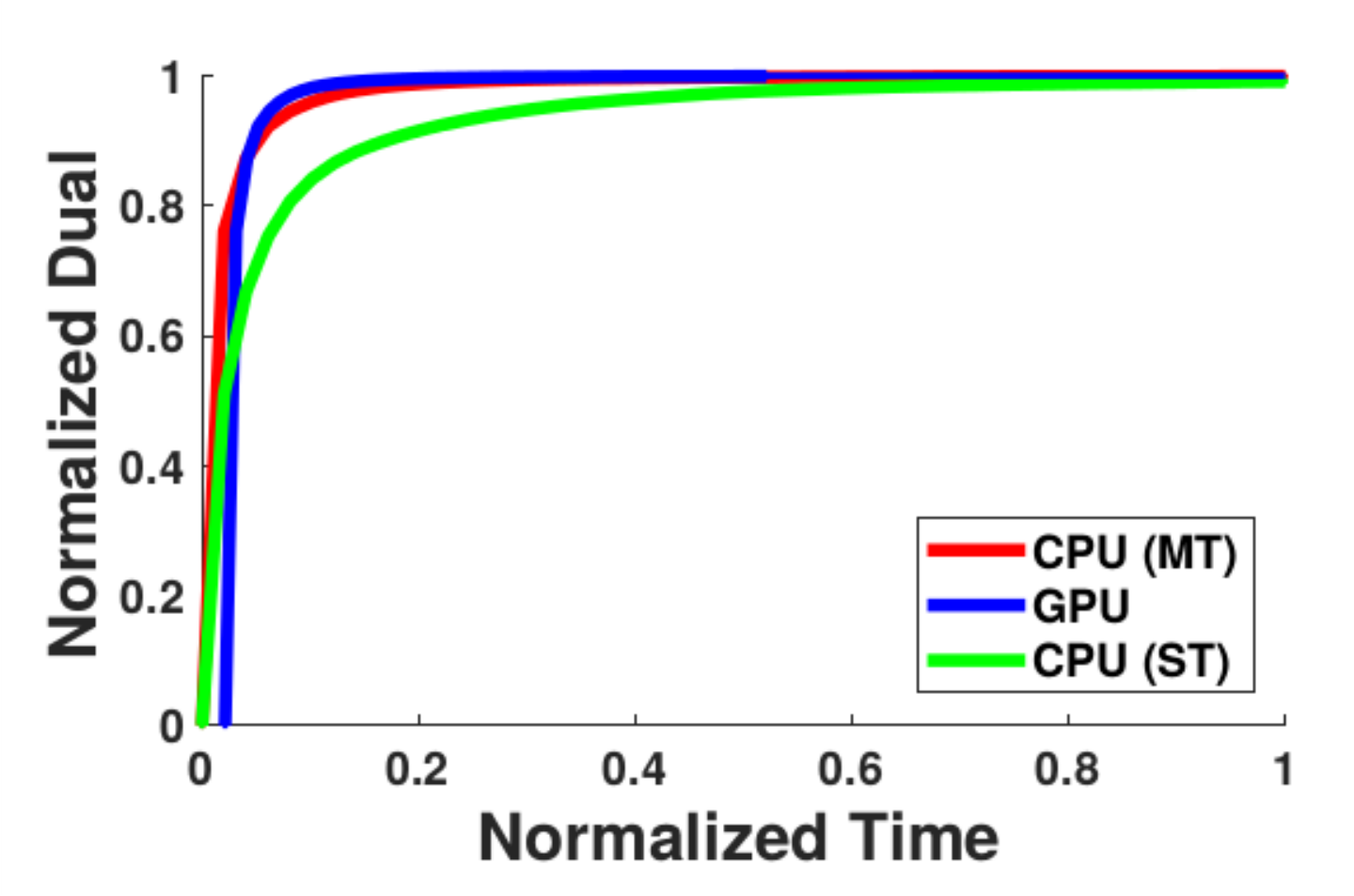}
}
\end{subfigure}
\caption{Figure shows parallelization  details and performance comparison for CPU and GPU. (a) shows the details of how the edge schedule is computed for maximizing throughput. The first row shows how edge selection is carried out by finding matchings and adding the edges in these matchings to the queue. The second row shows how the threads are launched for the CPU and GPU. For the CPU, threads are launched dynamically at different time instances, and no synchronization is carried out across all threads. This is due to a local memory locking mechanism (mutex) for the CPU. For the GPU, many threads are launched simultaneously and synchronized simultaneously via a memory barrier. This barrier is shown as the vertical line with \textbf{Synch.} (b) shows the running time comparison between single-threaded, multi-threaded and GPU versions of the \HaSh algorithm. The GPU takes some time to load memory from the host (CPU) to device (GPU). This is why it takes longer to get started than the CPU versions.} 
\label{fig:par-details}
\end{figure*} 

\begin{algorithm}[!h] 
\caption{Compute Edge Schedule} 
\label{alg1} 
\begin{algorithmic}[1] 
    \REQUIRE $\mathcal{G}=(\mathcal{V},\mathcal{E})$
    \STATE \textbf{Initial}: $\mathcal{Q_{E}}:=\emptyset$ \ \ \ \ \ ({\small Empty edge queue}), \newline $\mathcal{E_R}:=\mathcal{E}$ \ \ \ \ \ ({\small Initial pool of edges})
	\WHILE{$\mathcal{E_R}!=\emptyset$}
	   \STATE $\mathcal{E_M}:=\textrm{Maximum\_Matching}(\mathcal{V},\mathcal{E_R})$
    \STATE $\mathcal{Q_E}.push(\mathcal{E_M})$ \ \ \ \ \ ({\small Push maximum-matching  $\mathcal{E_M}$ to the queue})
    	\STATE $\mathcal{E_R}:=\mathcal{E_R}-\mathcal{E_M}$ \ \ \ \ \ ({\small Remove matched edges from $\mathcal{E_R}$})
\ENDWHILE
\end{algorithmic}
\label{alg:edge-sched}
\end{algorithm}

\textbf{CPU Implementation: } Modern CPUs consist of multiple cores with each core having one hardware threads. Hyper-threading allows for an additional thread per core, but with lesser performance in the second thread compared to the first. 

Processing an edge is a short-lived task and launching a separate thread for each edge would have excessive overhead. To process lightweight tasks we use the \emph{thread-pool} design pattern. A thread-pool keeps multiple threads waiting for tasks to be executed concurrently by a supervising program. The threads are launched only once and continuously process tasks from a task-queue. As they are launched only once, the latency in execution due to overhead in thread creation and destruction is avoided.

In the case of our algorithm the task queue is $\mathcal{Q_E}$. The thread picks up the index of the edge to process and performs the {\tt MPLP++} operation~\cref{equ:handshake-update}. During the {\tt MPLP++} operation the node and edge structures are locked by mutexes. One iteration of the algorithm is complete when all edges have been processed. Since multiple iterations maybe required to reach convergence, our task queue is circular, letting the threads restart the reparameterization process from the first element of $\mathcal{Q_E}$. The ordering of $\mathcal{Q_E}$ prevents heavy lock contention of mutexes.

\textbf{GPU Implementation: } Unlike CPUs, GPUs do not use mutexes for synchronization. The threads in each GPU processor are synchronized via a hardware barrier synchronization. A barrier for a group of threads stops the threads at this point and prevents them from proceeding until all other threads/processes reach this barrier. 

This is where the ordering of $\mathcal{Q_E}$ comes handy. Recall the structure of 
$\mathcal{Q_E}=(\mathcal{E}^1_\SM,\mathcal{E}^2_{\SM}, ..., \mathcal{E}^n_\SM)$.
Barrier synchronization can be used  between the matchings $\mathcal{E}^i_\SM$ and $\mathcal{E}^{i+1}_\SM$,  
allowing for the completion of the {\tt MPLP++} BCA update operations for $\mathcal{E}^i_\SM$, before beginning the processing of $\mathcal{E}^{i+1}_\SM$. This minimizes the time threads spend waiting while the other threads complete, as they have no overlapping areas to write to in the memory.

\section{Experimental Evaluation}\label{sec:experiments}

In our experiments, we use a $4$-core Intel i7-4790K CPU @ 4.00GHz, with hyperthreading, giving $8$ logical cores.
For GPU experiments, we used the NVIDIA Tesla K80 GPU with $4992$  cores. 

\paragraph{Compared Algorithms} 
We compare different algorithms in two regimes: the wall-clock time and the machine-independent regime, where we count the number of operations performed. 
For the latter one we use the notion of the {\em oracle call}, which is an operation like $\min_{t \in \Y}g_{uv}(s,t),\ \forall s \in \SY$ involving single evaluation of all costs of a pairwise factor.
When speaking about {\em oracle complexity} we mean the number of oracle calls per single iteration of the algorithm. As different algorithms have different oracle complexities we define a {\em normalized iteration} as exactly $|\SE|$ messages for an even comparison across algorithms. We compare the following \texttt{BCA} schemes:
\begin{itemize}
\item The Tree-Reweighted Sequential ({\tt TRWS}) message processing ~\cite{kolmogorov2006convergent} algorithm has consistently been the best performing method on several benchmarks like ~\cite{kappes-2015-ijcv}. Its oracle complexity is $2|\mathcal{E}|$.  We use the multicore implementation introduced  in~\cite{Shekhovtsov-IRI-PAMI}
for comparison, which is denoted as \texttt{TRWS(MT)} when run on multiple cores.
\item The Max-Product Linear Programming~{\tt MPLP}~\cite{NIPS2007_3200} algorithm with BCA-updates~\eqref{equ:MPLP-update}. It's oracle complexity is thus $2|\mathcal{E}|$. For~{\tt MPLP} we have our own multi-threaded implementation that is faster than the original one by a factor of $4$
\item The Min-Sum Diffusion Algorithm~\Diff~\cite{schlesingera2011diffusion}, is one of the earliest (in the 70s) proposed \texttt{BCA} algorithms for graphical models. The oracle complexity of the method is~$4|\mathcal{E}|$.
\item The {\tt MPLP++} algorithm with BCA-updates~\eqref{equ:handshake-update} has the oracle complexity of~$3|\mathcal{E}|$. The algorithm is parallelized as described in~\cref{sec:parallelization} for both CPU and GPU. The corresponding legends are {\tt MPLP++(MT)} and {\tt MPLP++(GPU)}.
\end{itemize}



\paragraph{Dense datasets} To show the strength of our method we consider the following datasets with underlying densely connected graphs:
\begin{itemize}
\item \texttt{worms} dataset~\cite{dataset-worm} consists of $30$ problem instances coming from the field of bioimaging. The problems' graphs are dense but not fully connected with about $0.1\cdot\SV^2$ of edges, up to $600$ nodes and up to $1500$ labels.
\item \texttt{protein}-folding dataset~\cite{dataset-proteinfolding} taken from the OpenGM benchmark~\cite{kappes-2015-ijcv}. This dataset has $21$ problem instances with $33-1972$ variables. The models are fully connected and have $81-503$ labels per node.
\item \texttt{pose} is the dataset inspired by the recent work~\cite{michel2017global} showing state-of-the-art performance on the $6$D object pose estimation problem.
In~\cite{michel2017global} this problem is formulated as MAP-inference in a fully connected graphical model. The set of nodes of the underlying graph coincides with the set of pixels of the input image (up to some downscale factor), which requires specialized heuristics to obtain practically useful solutions in reasonable time due to the large problem size. Contrary to the original work, we assume that the position of the object
is given by its mask (we used ground-truth data from the validation set, but assume the mask could be provided by some segmentation method) and treat only the pixels inside the mask as the nodes of the corresponding graphical model. Otherwise, the unary and pairwise costs are constructed in the same way as in~\cite{michel2017global}. We have got the learned unary costs from the authors of~\cite{michel2017global} and merely tuned the hyper-parameters on the validation set. This dataset has $32$ problem instances with $600-4800$ variables each and $13$ labels per node. The models are all fully connected. 
\end{itemize}

\paragraph{Sparse datasets} Although our method is not the best performing one on sparse graphical models, we include the comparison on the following four-connected grid-graph based benchmark datasets for fairness:
\begin{itemize}
\item \texttt{color-seg} from~\cite{kappes-2015-ijcv} has Potts pairwise costs. The nodes contain up to $12$ labels.
\item \texttt{stereo} from the Middlebury MRF benchmark~\cite{szeliski2008comparative}. This dataset consists of $3$ models with truncated linear pairwise costs and $16$, $20$ and $60$ labels respectively. 
\end{itemize}

\paragraph{Algorithm Convergence} ~\cref{fig:dual-vs-iters} shows convergence of the considered algorithms in a sequential setting for the {\tt protein} and {\tt stereo} datasets as representatives of the dense and sparse problems. For other datasets the behavior is similar, therefore we moved the corresponding plots to the supplement~\cref{sec:extra-exp}. The proposed {\tt MPLP++} method outperforms all its competitors on {\em all}  dense problem instances and is inferior to {\tt TRWS} only on sparse ones. This holds for both comparisons: the implementation-independent {\em normalized iteration} and the implementation-dependent {\em running time} ones. 
~\cref{fig:speedups} shows relative time speed-ups of the considered methods as a function of the attained dual precision. The speed of {\tt MPLP}, as typically the slowest one, is taken to be $1$, \ie, for other algorithms the speed-up compared to {\tt MPLP} is plotted. This includes also the CPU-parallel versions of {\tt MPLP++}, {\tt TRWS} and the GPU-parallel {\tt MPLP++}. \cref{fig:speedups} also shows that for dense models {\tt MPLP++} is $2-10$ faster than {\tt TRWS} in the sequential setting and $7-40$ times in the parallel setting. The speed-up w.r.t.\ {\tt MPLP} is $5-150$ times depending on the dataset and the required precision.

\paragraph{Performance Degradation with Graph Sparsification} In this test we gradually and randomly removed edges from the graphs of the {\tt pose} dataset and measures performance of all algorithms. ~\cref{fig:perf-deg-time} shows that up to at least $10\%$ of all possible edges {\tt MPLP++} leads the board. Only when the number of edges drops to $5\%$ and less, {\tt TRWS} starts to outperform {\tt MPLP++}. Note, the density of edges in grid graphs considered above does not exceed the $0.007\%$ level.


\begin{figure}
\centering
\begin{subfigure}[{\tt protein}, iter]{\includegraphics[width=0.23\linewidth]{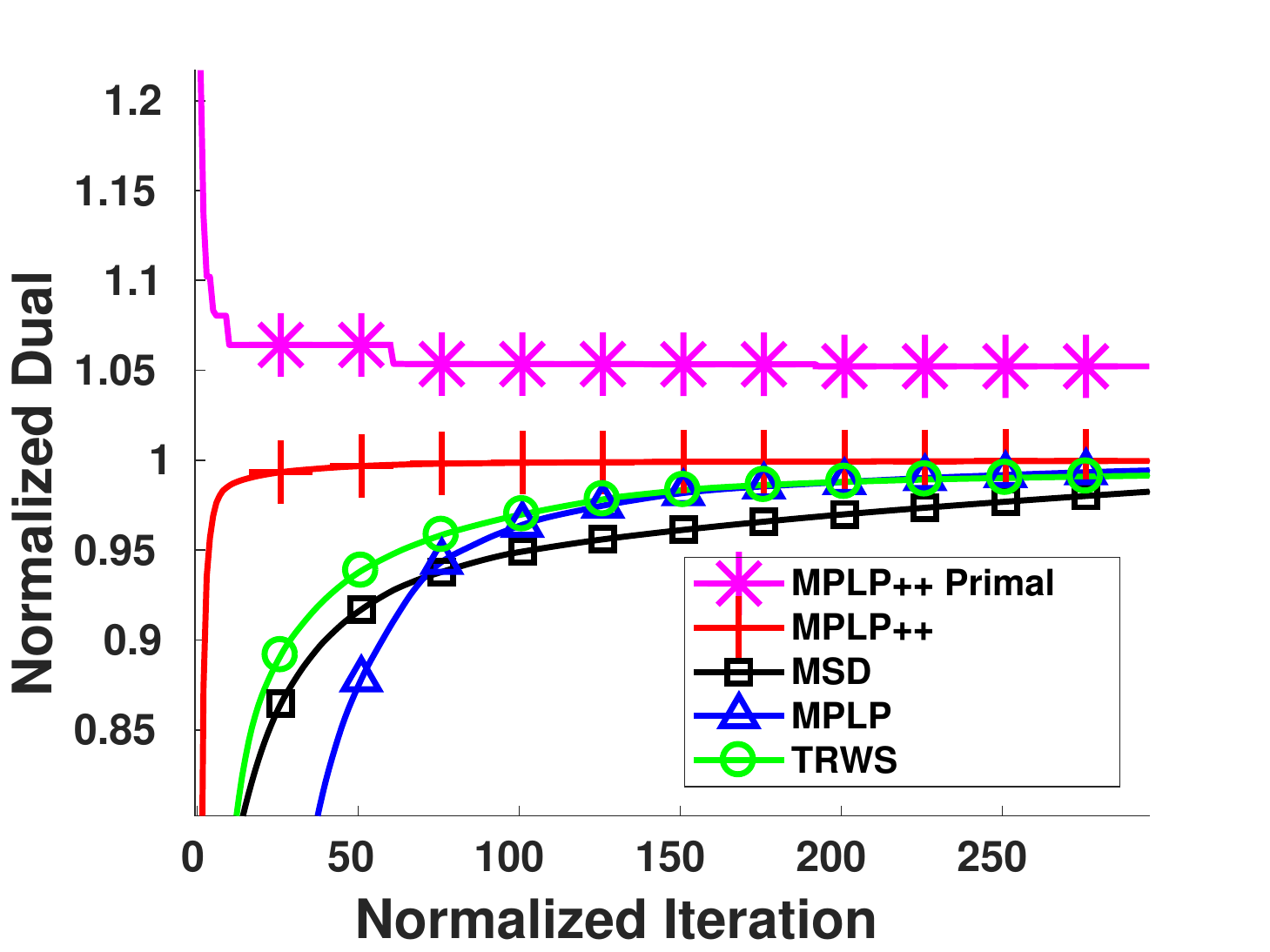}}\label{fig:dual-vs-iters-protein}\end{subfigure}
\begin{subfigure}[{\tt protein}, sec]{\includegraphics[width=0.23\linewidth]{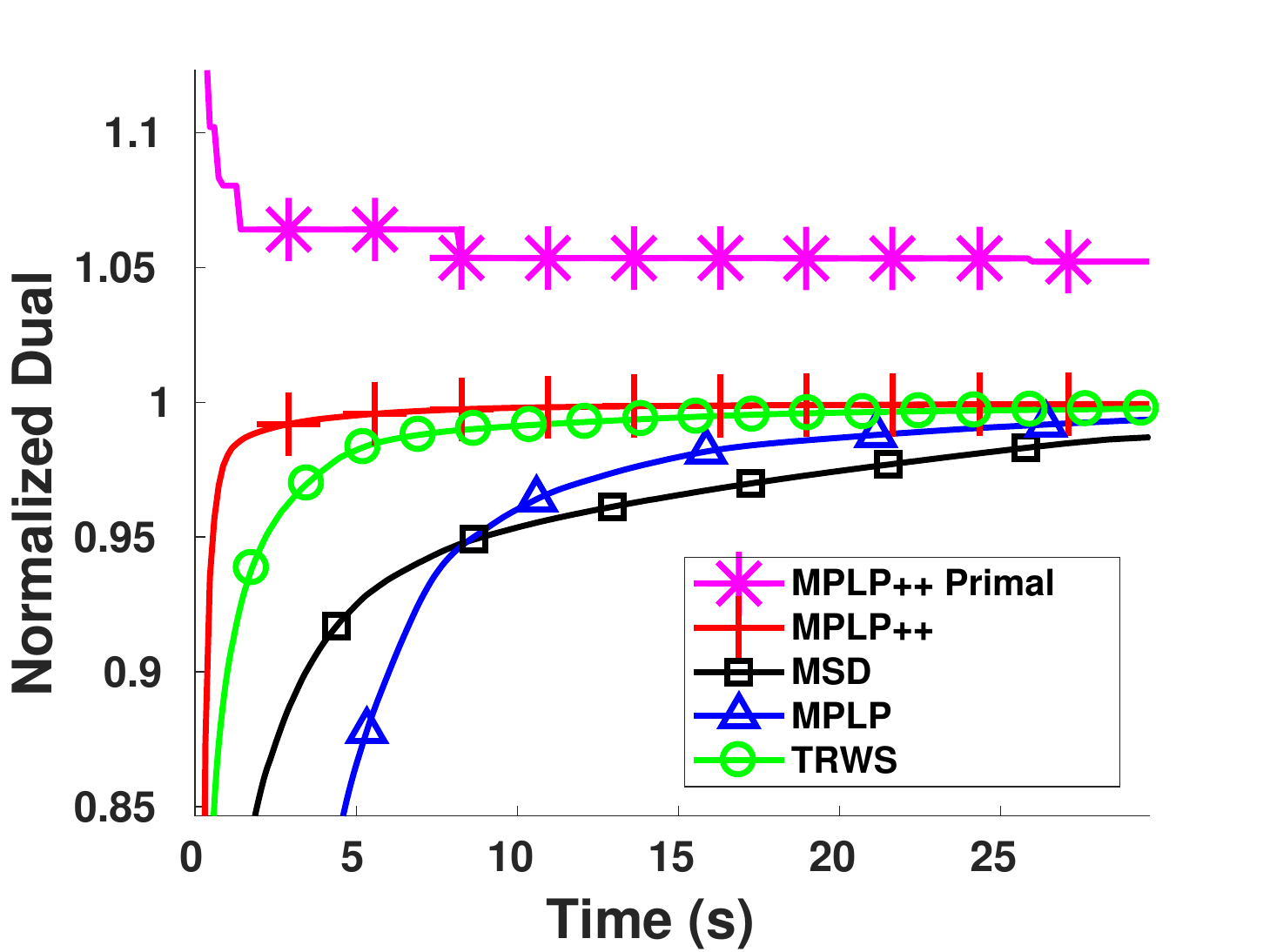}}\end{subfigure}
\begin{subfigure}[{\tt stereo}, iter]{\includegraphics[width=0.23\linewidth]{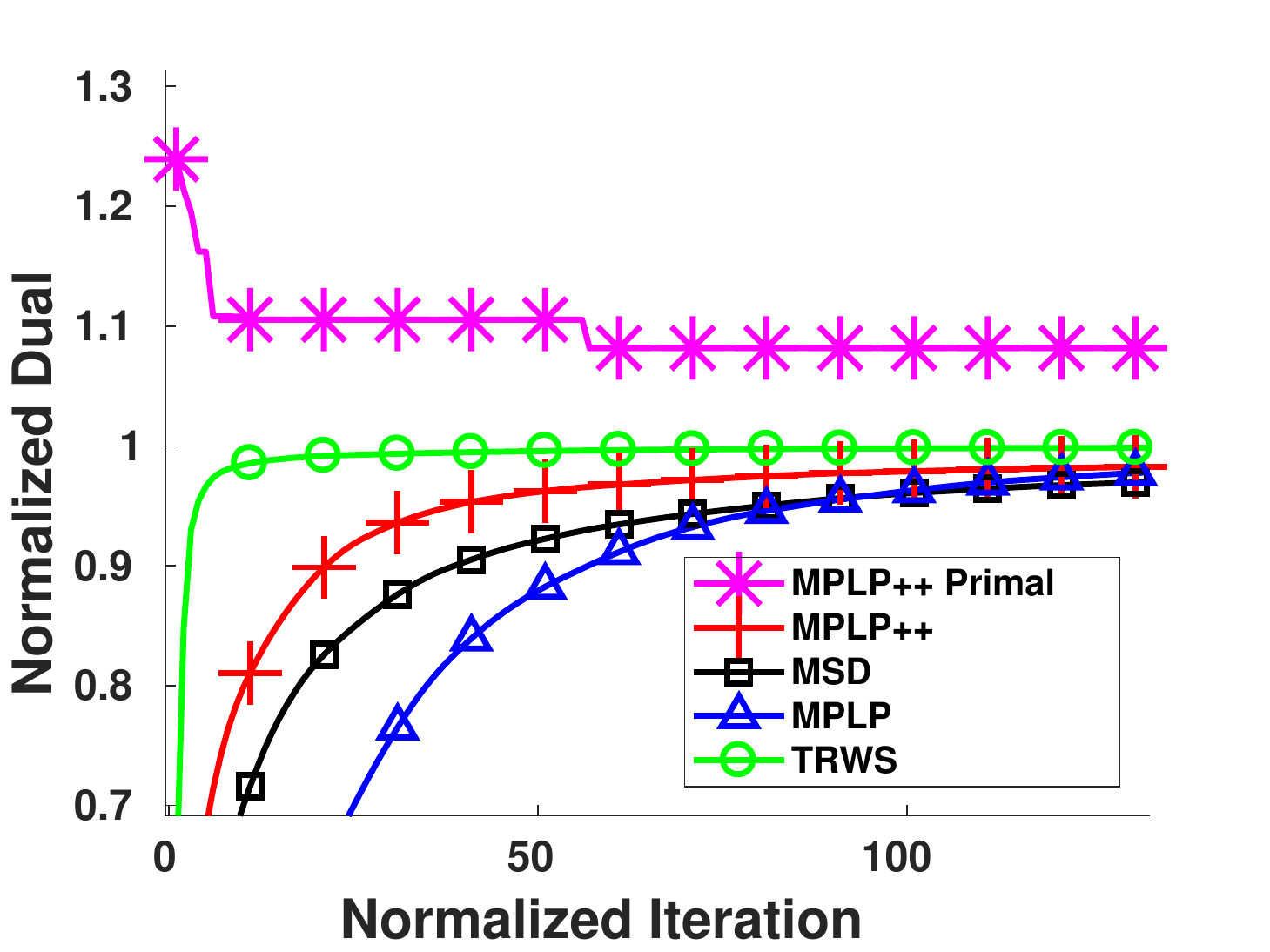}}\label{fig:dual-vs-iters-stereo}\end{subfigure}
\begin{subfigure}[{\tt stereo}, sec]{\includegraphics[width=0.23\linewidth]{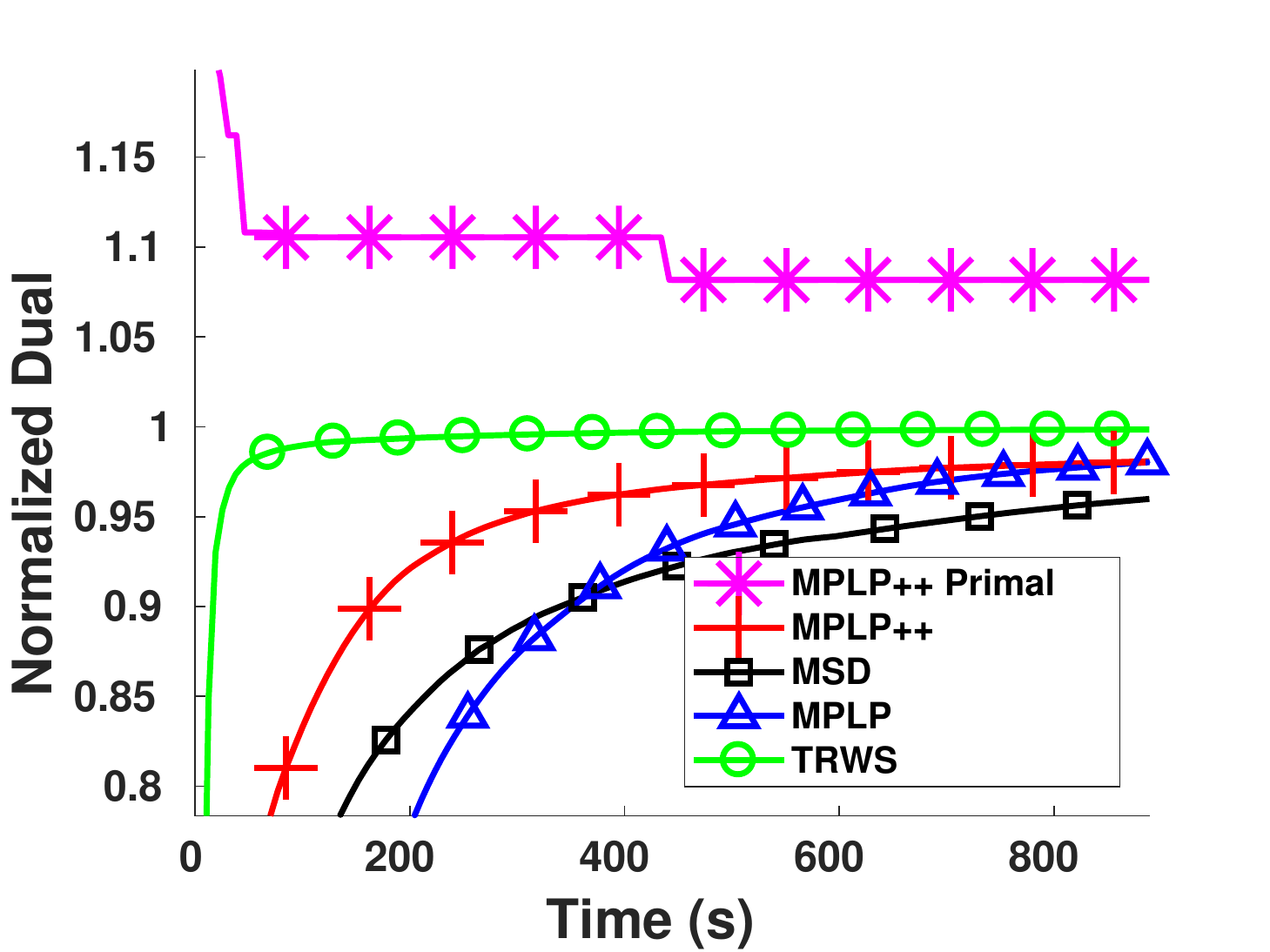}}
\end{subfigure}
\caption{
\label{fig:dual-vs-iters}
Improvement in dual as a function of time and iterations for the protein-folding and stereo dataset. The algorithms we compare have different message-passing schemes and end up doing different amounts of work per iteration. Hence, for a fair comparison across algorithms we define a normalized iteration as exactly $|\mathcal{E}|$ messages. This is also equal to number of messages passed in a normal iteration divided by its oracle complexity. (a) and (b) are for the dense protein-folding dataset, where both per unit time and per iteration,  {\tt MPLP++} outperforms {\tt TRWS} and all other algorithms. In the sparse stereo dataset (c,d) {\tt TRWS} beats all other algorithms. Results are averaged over the entire dataset. The dual is normalized to 1 for equal weighing of every instance in the dataset.}
\end{figure}
\begin{figure}[!th]
\centering
\begin{subfigure}[{\tt pose}]{\includegraphics[width=0.24\linewidth]{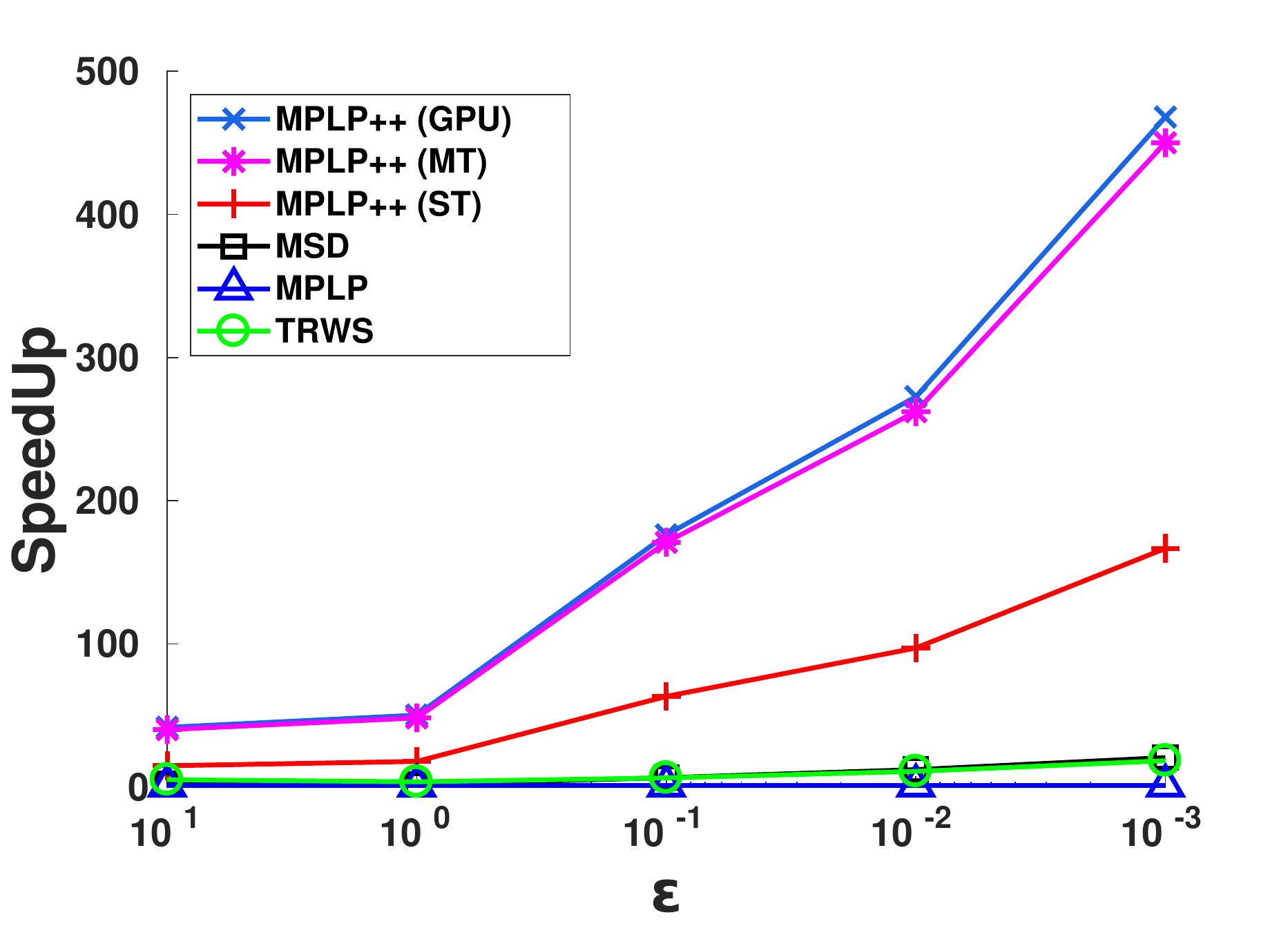}}
\end{subfigure}
\begin{subfigure}[{\tt protein}]
{\includegraphics[width=0.24\linewidth]{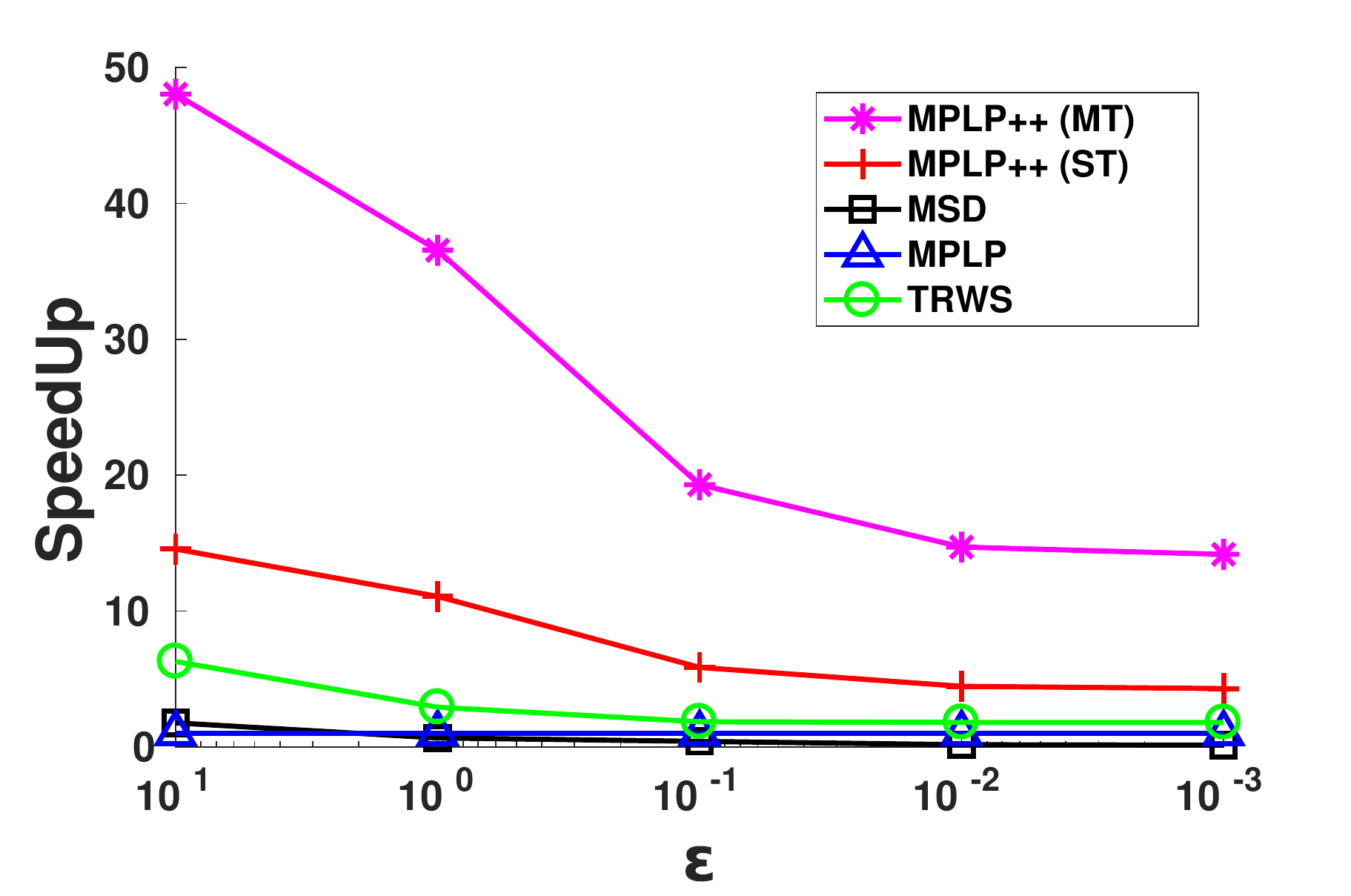}}
\end{subfigure}
\begin{subfigure}[{\tt worms}]
{\includegraphics[width=0.24\linewidth]{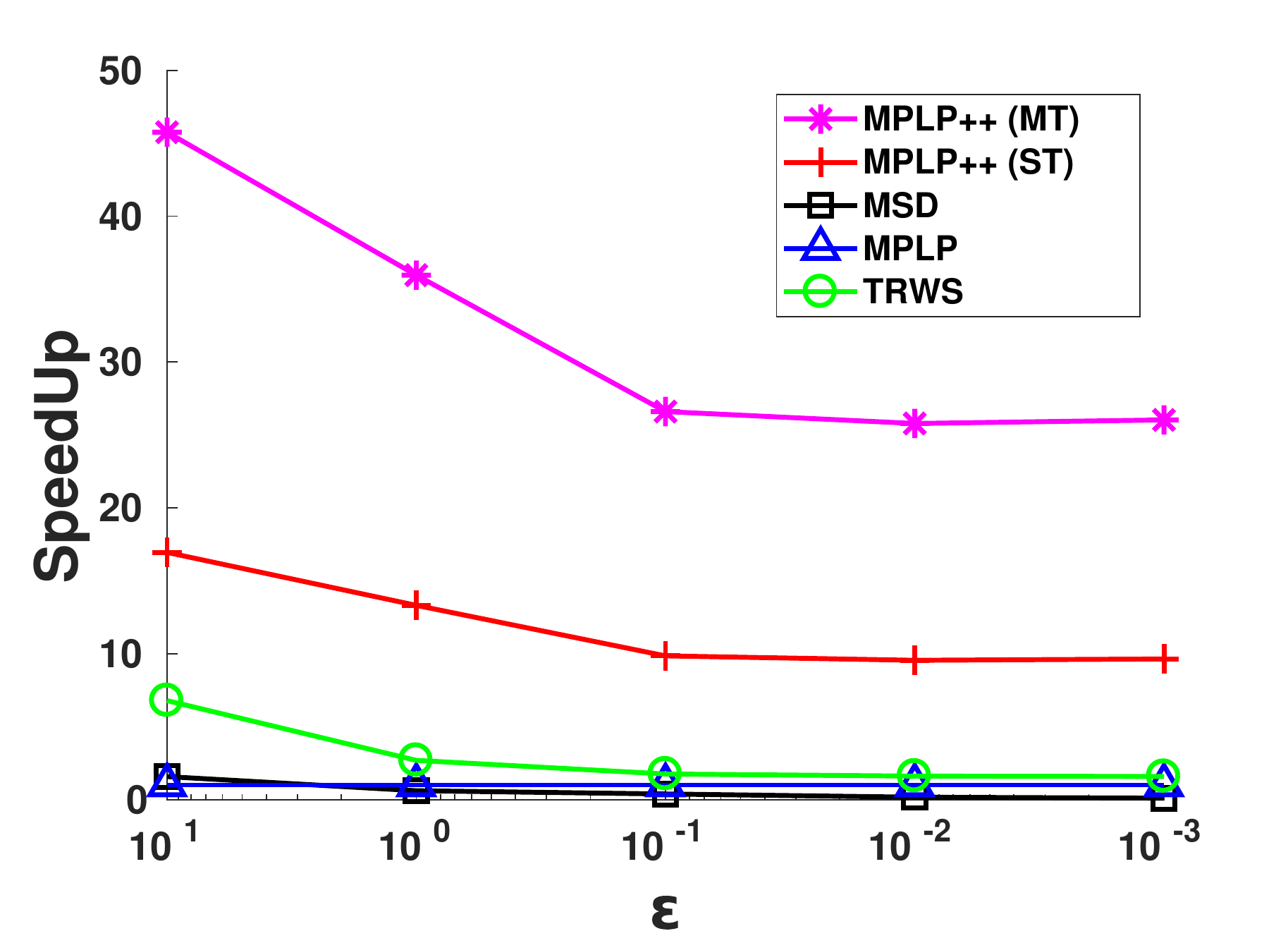}}
\end{subfigure}
\begin{subfigure}[{\tt stereo}]
{\includegraphics[width=0.24\linewidth]{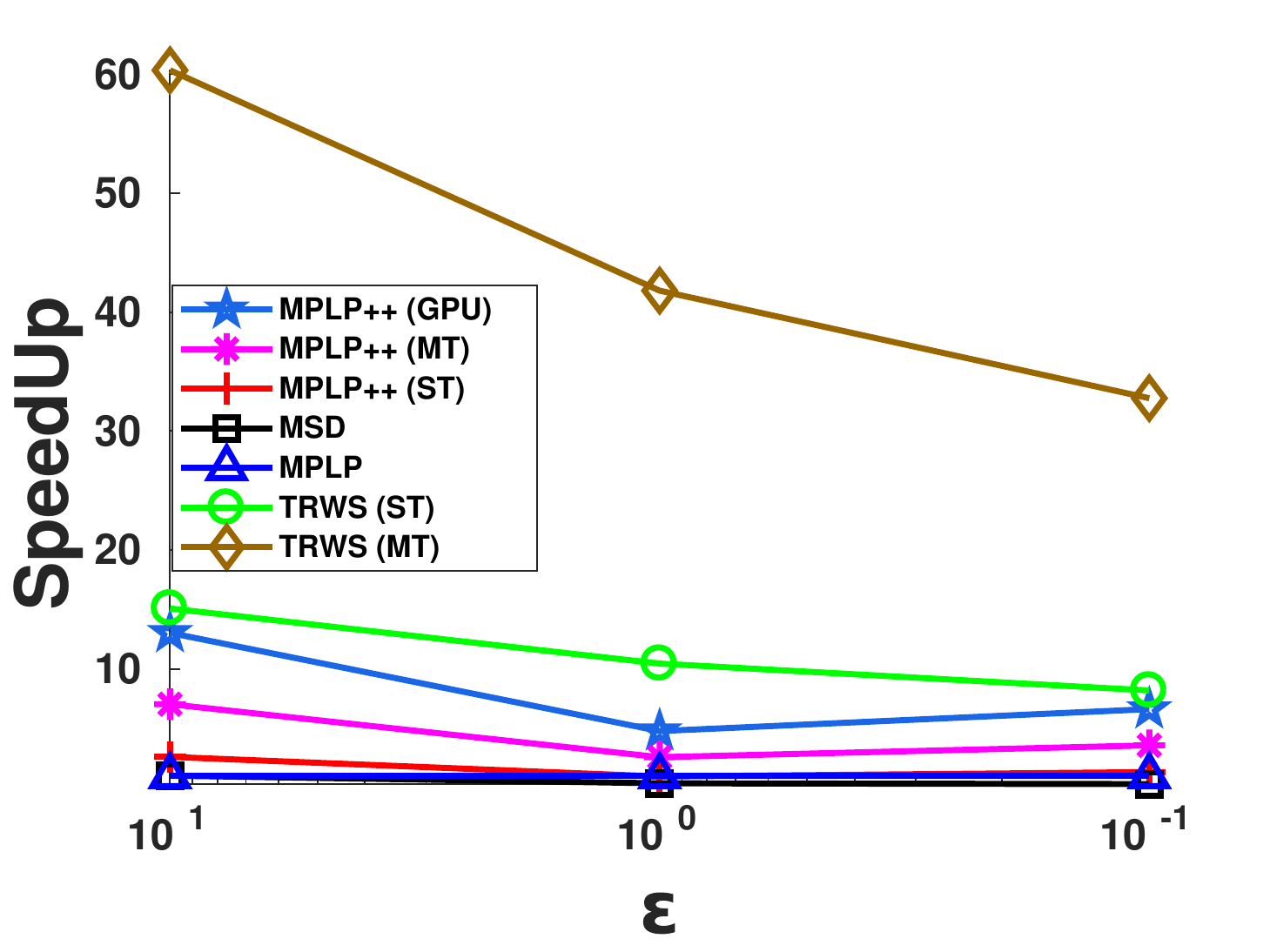}}
\end{subfigure}
\caption{
\label{fig:speedups}
Relative speed w.r.t the {\tt MPLP} algorithm in converging to within $\varepsilon$ of the best attained dual optima $D^{*}(\theta^{\phi})$, i.e. $D^{*}(\theta^{\phi})-\varepsilon$. The plot shows the speedups of all the algorithms relative to MPLP for $\varepsilon$'s $0.001$, $0.01$, $0.1$, $1$ and $10$ of $D^{*}(\theta^{\phi})$. Figure (a) shows {\tt MPLP++} is $50\times$ faster than {\tt MPLP} in converging to within $10\%$ of $D^{*}(\theta^{\phi})$. Figure (b) and (c) likewise show an order of magnitude speedup. Figure (d) shows that for the stereo dataset consisting of sparse graphs {\tt TRWS} dominates {\tt MPLP++}. Convergence only till $0.1\%$ of $D^{*}(\theta^{\phi})$ is shown for stereo as only {\tt TRWS} converges to the required precision.}
\end{figure}
\begin{figure}[!ht]
\centering
\begin{subfigure}[100$\%$]{\includegraphics[width=0.24\linewidth]{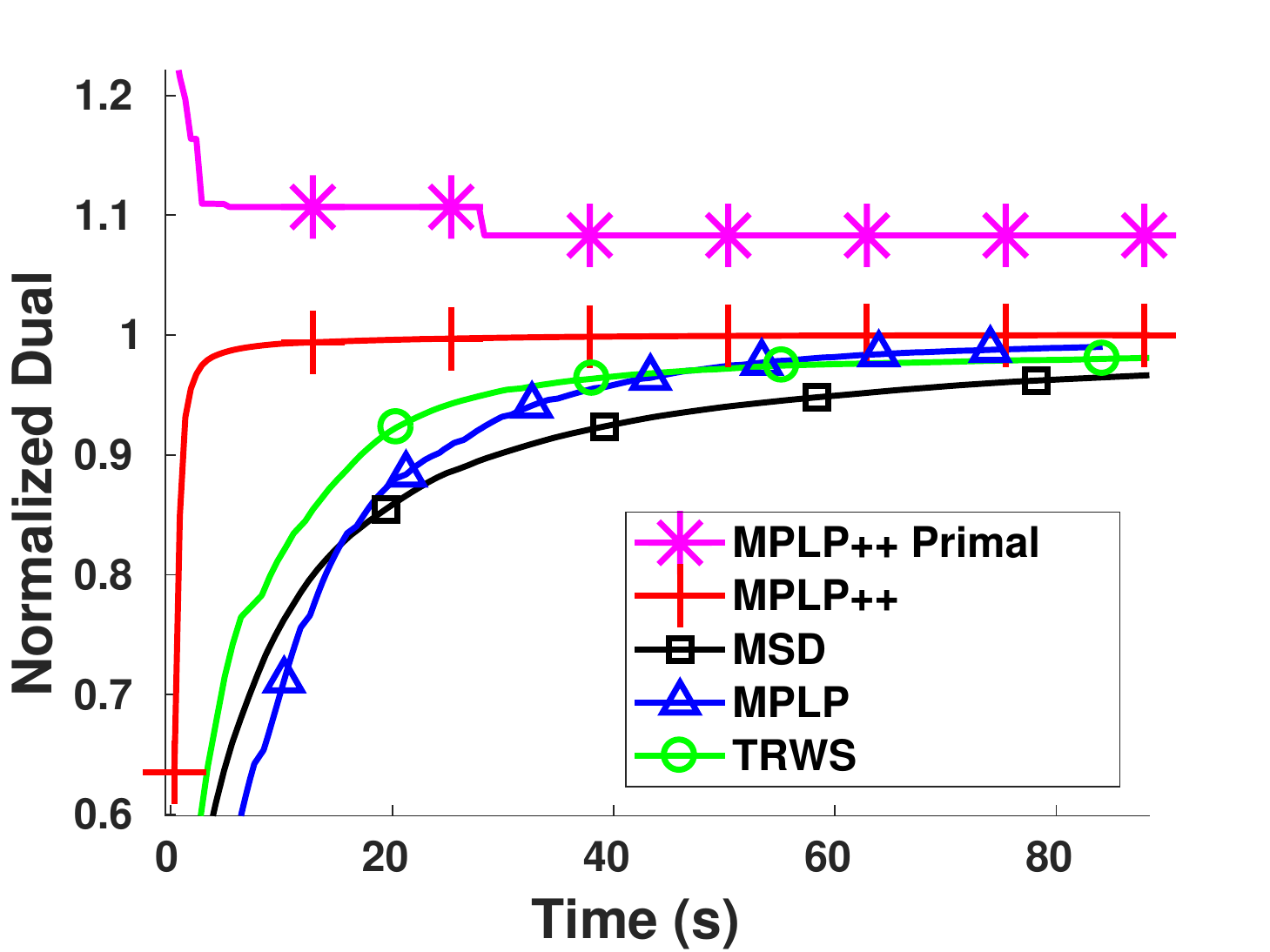}}
\end{subfigure}
\begin{subfigure}[40$\%$]{\includegraphics[width=0.24\linewidth]{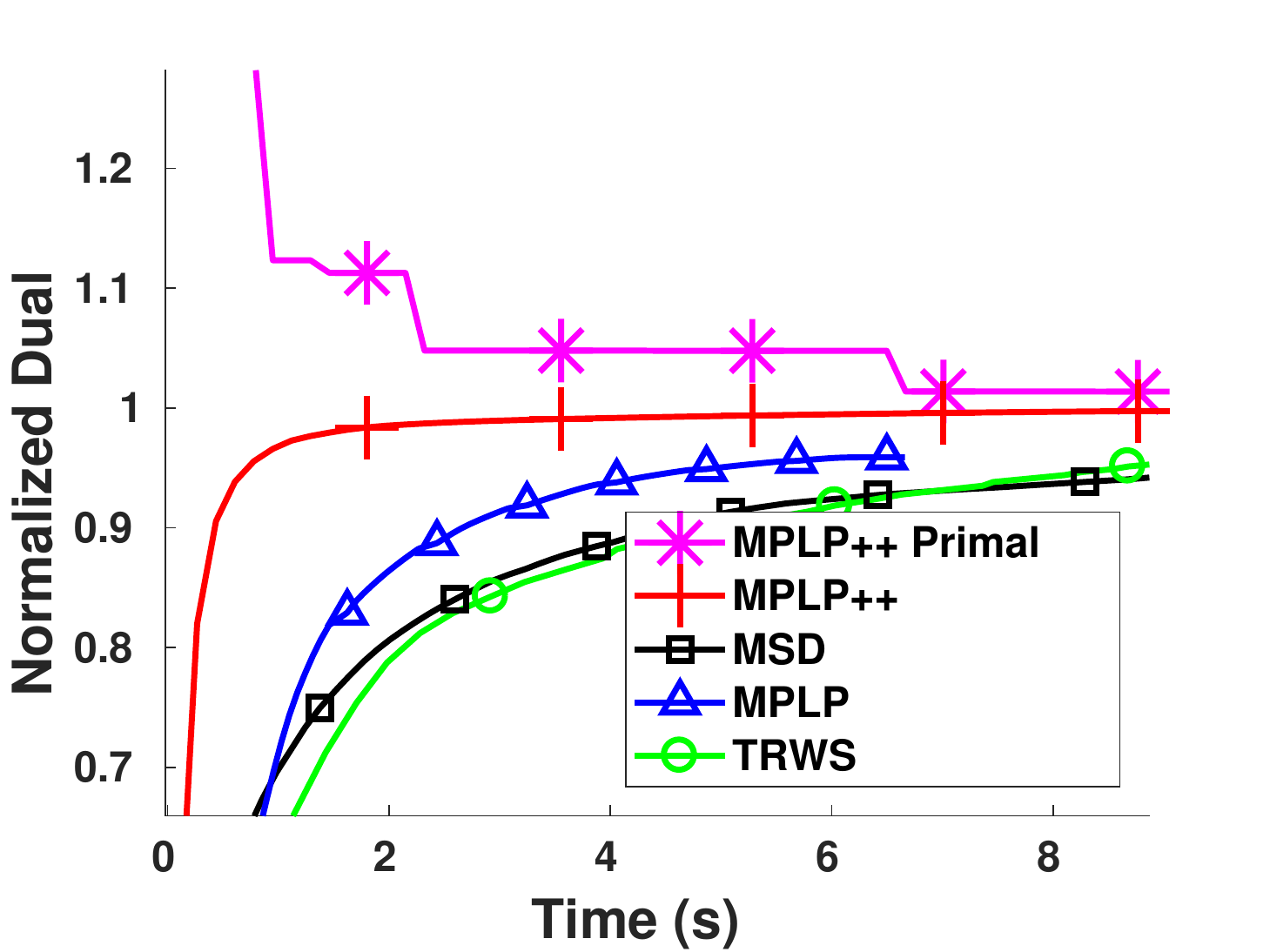}}
\end{subfigure}
\begin{subfigure}[10$\%$]
{\includegraphics[width=0.24\linewidth]{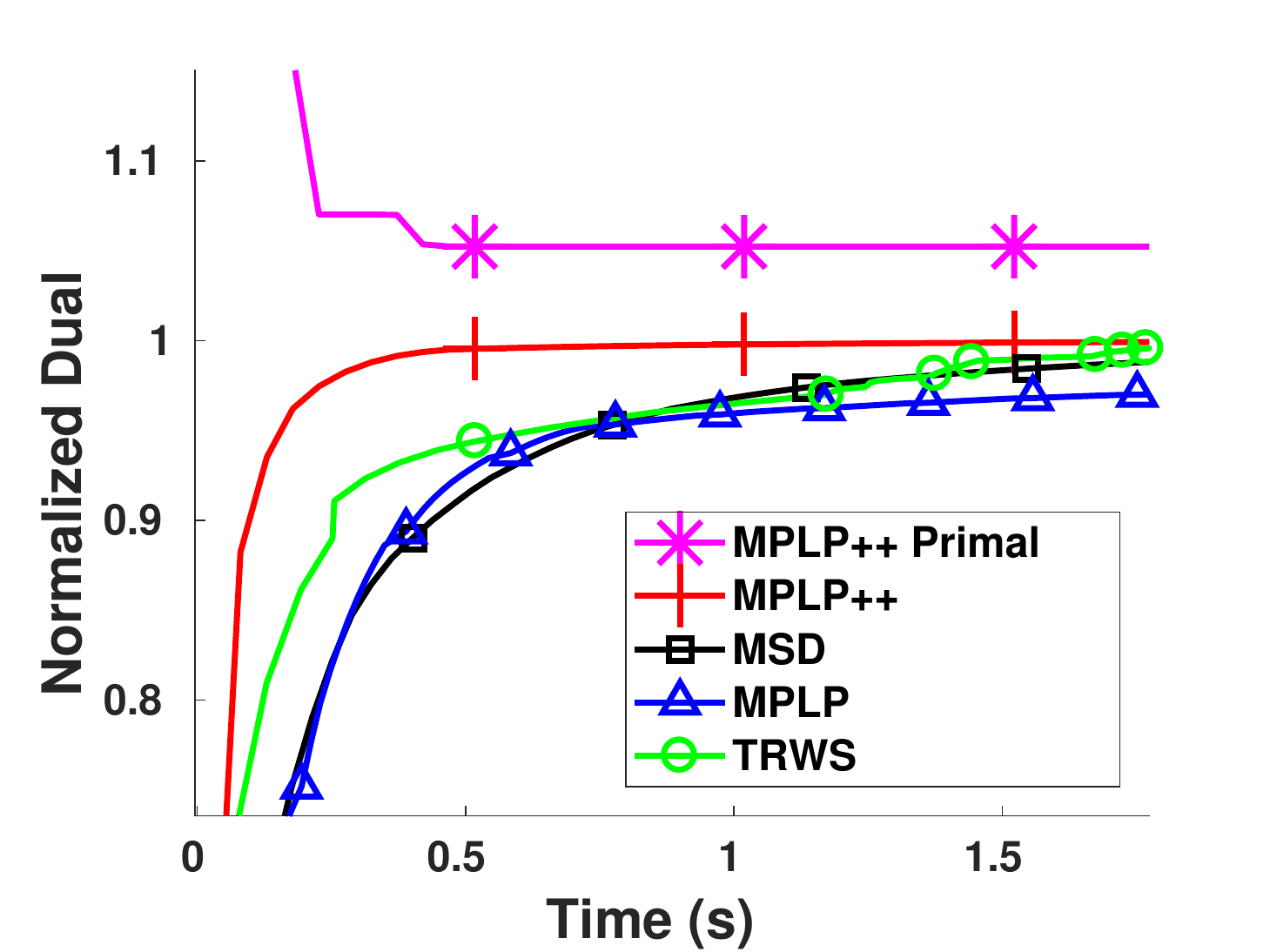}}
\end{subfigure}
\begin{subfigure}[5$\%$]
{\includegraphics[width=0.24\linewidth]{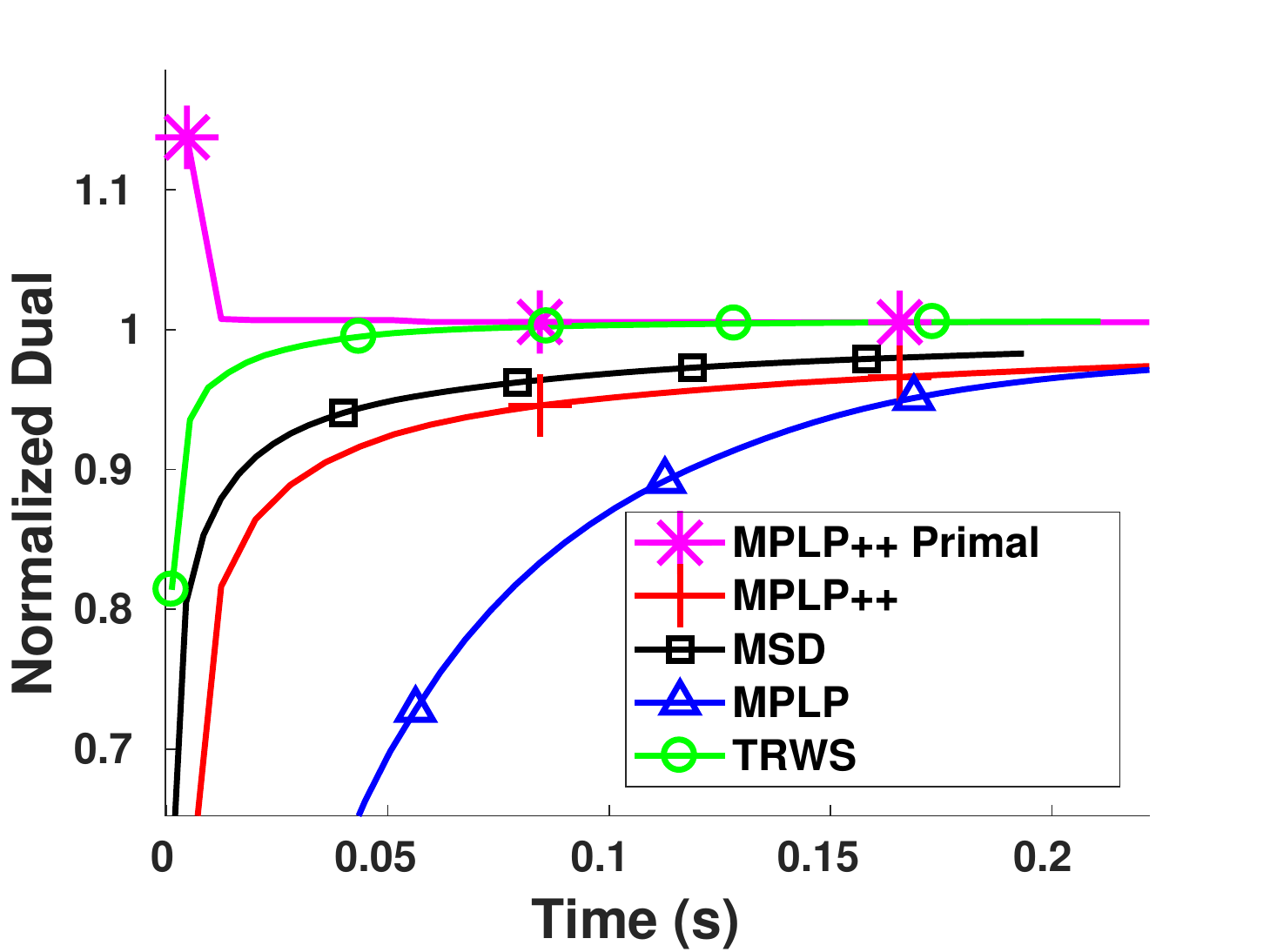}}
\end{subfigure}
\caption{
Degradation with Sparsity (Dual vs Time): (a)-(d) show graphs with decreasing average connectivity given as percentage of possible edges in figure subcaption. In (a)-(c) {\tt MPLP++} outperforms {\tt TRWS}. {\tt MPLP++} is resilient to graph sparsification even when 90$\%$ of the edges have been removed. Only when more than $95\%$ of the edges have been removed as in (d) {\tt TRWS} outperforms {\tt MPLP++}.}
\label{fig:perf-deg-time}
\end{figure}


\section{Conclusions and Outlook}
 Block-coordinate ascent methods remain a perspective research direction for creating efficient parallelizable dual solvers for MAP-inference in graphical models.
 We have presented one such solver beating the state-of-the-art on dense graphical models with arbitrary potentials.
 The method is directly generalizable to higher order models, which we plan to investigate in the future.

{\small
\renewcommand{\bibname}{\protect\leftline{References\vspace{-4ex}}}
\bibliographystyle{splncs}
\bibliography{eccv2018submission}

\begin{thebibliography}{10}

\bibitem{NIPS2007_3200}
Globerson, A., Jaakkola, T.S.:
\newblock Fixing {M}ax-{P}roduct: {C}onvergent {M}essage {P}assing {A}lgorithms
  for {MAP} {LP}-{R}elaxations.
\newblock In: Advances in Neural Information Processing Systems 20.
\newblock (2008)

\bibitem{kolmogorov2006convergent}
Kolmogorov, V.:
\newblock Convergent tree-reweighted message passing for energy minimization.
\newblock IEEE transactions on pattern analysis and machine intelligence
  \textbf{28}(10) (2006)  1568--1583

\bibitem{krahenbuhl2011efficient}
Kr{\"a}henb{\"u}hl, P., Koltun, V.:
\newblock Efficient inference in fully connected {CRFs} with gaussian edge
  potentials.
\newblock In: Advances in neural information processing systems. (2011)
  109--117

\bibitem{michel2017global}
Michel, F., Kirillov, A., Brachmann, E., Krull, A., Gumhold, S., Savchynskyy,
  B., Rother, C.:
\newblock Global hypothesis generation for {6D} object pose estimation.
\newblock arXiv preprint (2017)

\bibitem{kolmogorov2006comparison}
Kolmogorov, V., Rother, C.:
\newblock Comparison of energy minimization algorithms for highly connected
  graphs.
\newblock In: European Conference on Computer Vision, Springer (2006)  1--15

\bibitem{bergtholdt2010study}
Bergtholdt, M., Kappes, J., Schmidt, S., Schn{\"o}rr, C.:
\newblock A study of parts-based object class detection using complete graphs.
\newblock International journal of computer vision \textbf{87}(1-2) (2010) ~93

\bibitem{nowozin2011decision}
Nowozin, S., Rother, C., Bagon, S., Sharp, T., Yao, B., Kohli, P.:
\newblock Decision {T}ree {F}ields.
\newblock In: Computer Vision (ICCV), 2011 IEEE International Conference on,
  IEEE (2011)  1668--1675

\bibitem{kirillov2016joint}
Kirillov, A., Schlesinger, D., Zheng, S., Savchynskyy, B., Torr, P.H., Rother,
  C.:
\newblock Joint training of generic {CNN-CRF} models with stochastic
  optimization.
\newblock In: Asian Conference on Computer Vision, Springer (2016)  221--236

\bibitem{kainmueller2014active}
Kainmueller, D., Jug, F., Rother, C., Myers, G.:
\newblock Active graph matching for automatic joint segmentation and annotation
  of {C}. elegans.
\newblock In: International Conference on Medical Image Computing and
  Computer-Assisted Intervention, Springer (2014)  81--88

\bibitem{chen2015learning}
Chen, L.C., Schwing, A., Yuille, A., Urtasun, R.:
\newblock Learning deep structured models.
\newblock In: International Conference on Machine Learning. (2015)  1785--1794

\bibitem{kolmogorov2015power}
Kolmogorov, V., Thapper, J., Zivny, S.:
\newblock The power of linear programming for general-valued {CSP}s.
\newblock SIAM Journal on Computing \textbf{44}(1) (2015)  1--36

\bibitem{Prusa-PAMI-2017}
Pr\r{u}\v{s}a, D., Werner, T.:
\newblock {LP} relaxation of the potts labeling problem is as hard as any
  linear program.
\newblock {IEEE} Trans. Pattern Anal. Mach. Intell. \textbf{39}(7) (2017)
  1469--1475

\bibitem{kappes-2015-ijcv}
Kappes, J.H., Andres, B., Hamprecht, F.A., Schn\"orr, C., Nowozin, S., Batra,
  D., Kim, S., Kausler, B.X., Kr\"oger, T., Lellmann, J., Komodakis, N.,
  Savchynskyy, B., Rother, C.:
\newblock A comparative study of modern inference techniques for structured
  discrete energy minimization problems.
\newblock International Journal of Computer Vision (2015)  1--30

\bibitem{Discrete-Continuous-16}
Shekhovtsov, A., Reinbacher, C., Graber, G., Pock, T.:
\newblock Solving dense image matching in real-time using discrete-continuous
  optimization.
\newblock In: CVWW. (2016) ~13

\bibitem{schlesingera2011diffusion}
Schlesinger, M., Antoniuk, K.:
\newblock Diffusion algorithms and structural recognition optimization
  problems.
\newblock Cybernetics and Systems Analysis \textbf{47}(2) (2011)  175--192

\bibitem{choi2012hardware}
Choi, J., Rutenbar, R.A.:
\newblock Hardware implementation of mrf map inference on an fpga platform.
\newblock In: Field Programmable Logic and Applications (FPL), 2012 22nd
  International Conference on, IEEE (2012)  209--216

\bibitem{hurkat2015fast}
Hurkat, S., Choi, J., Nurvitadhi, E., Mart{\'\i}nez, J.F., Rutenbar, R.A.:
\newblock Fast hierarchical implementation of sequential tree-reweighted belief
  propagation for probabilistic inference.
\newblock In: Field Programmable Logic and Applications (FPL), 2015 25th
  International Conference on, IEEE (2015)  1--8

\bibitem{Chen-98}
Chen, W., Wada, K.:
\newblock On computing the upper envelope of segments in parallel.
\newblock In: Proceedings. 1998 International Conference on Parallel Processing
  (Cat. No.98EX205). (Aug 1998)  253--260

\bibitem{Li2016a}
Li, M., Shekhovtsov, A., Huber, D.:
\newblock Complexity of discrete energy minimization problems.
\newblock In: European Conference on Computer Vision. (2016)  834--852

\bibitem{schlesinger1976syntactic}
Schlesinger, M.I.:
\newblock Syntactic analysis of two-dimensional visual signals in noisy
  conditions.
\newblock Kibernetika \textbf{4}(113-130) (1976) ~1

\bibitem{werner2007linear}
Werner, T.:
\newblock A linear programming approach to max-sum problem: {A} review.
\newblock IEEE transactions on pattern analysis and machine intelligence
  \textbf{29}(7) (2007)

\bibitem{Zivny-Werner-Prusa-ASP-MIT2014}
{\v Z}ivn{\' y}, S., Werner, T., Pr{\r u}{\v s}a, D.a.
\newblock In: The Power of LP Relaxation for MAP Inference. The MIT Press,
  Cambridge, USA (December 2014)  19--42

\bibitem{Thapper-12}
Thapper, J., \v{Z}ivn\'y, S.:
\newblock The power of linear programming for valued {C}{S}{P}s.
\newblock In: Symposium on Foundations of Computer Science (FOCS). (2012)
  669--678

\bibitem{Ravikumar10}
Ravikumar, P., Agarwal, A., Wainwright, M.:
\newblock Message-passing for {G}raph-structured {L}inear {P}rograms:
  {P}roximal {M}ethods and {R}ounding {S}chemes.
\newblock JMLR \textbf{11} (2010)  1043--1080

\bibitem{MartinsICML11}
Martins, A.F.T., Figueiredo, M.A.T., Aguiar, P.M.Q., Smith, N.A., Xing, E.P.:
\newblock An {A}ugmented {L}agrangian {A}pproach to {C}onstrained {MAP}
  {I}nference.
\newblock In: ICML. (2011)

\bibitem{MeshiGloversonECML11}
Meshi, O., Globerson, A.:
\newblock An {A}lternating {D}irection method for {Dual MAP LP} {R}elaxation.
\newblock In: ECML/PKDD (2). (2011)  470--483

\bibitem{SchmidtEMMCVPR11}
Schmidt, S., Savchynskyy, B., Kappes, J., Schn{\"o}rr, C.:
\newblock Evaluation of a first-order primal-dual algorithm for mrf energy
  minimization.
\newblock In: EMMCVPR 2011. (2011)

\bibitem{storvik2000lagrangian}
Storvik, G., Dahl, G.:
\newblock {L}agrangian-based methods for finding {MAP} solutions for {MRF}
  models.
\newblock IEEE Transactions on Image Processing \textbf{9}(3) (2000)  469--479

\bibitem{SchlGig_12_usim2007}
Schlesinger, M., Giginyak, V.:
\newblock Solution to structural recognition (max,+)-problems by their
  equivalent transformations. in 2 {P}arts.
\newblock Control Systems and Computers (1-2) (2007)

\bibitem{komodakis2007mrf}
Komodakis, N., Paragios, N., Tziritas, G.:
\newblock {MRF} optimization via dual decomposition: {M}essage-passing
  revisited.
\newblock In: Computer Vision, 2007. ICCV 2007. IEEE 11th International
  Conference on, IEEE (2007)  1--8

\bibitem{kappes2012bundle}
Kappes, J.H., Savchynskyy, B., Schn{\"o}rr, C.:
\newblock A bundle approach to efficient {MAP}-inference by lagrangian
  relaxation.
\newblock In: Computer Vision and Pattern Recognition (CVPR), 2012 IEEE
  Conference on, IEEE (2012)  1688--1695

\bibitem{luong2012solving}
Luong, D.V., Parpas, P., Rueckert, D., Rustem, B.:
\newblock Solving {MRF} minimization by mirror descent.
\newblock In: International Symposium on Visual Computing, Springer (2012)
  587--598

\bibitem{savchynskyy2011study}
Savchynskyy, B., Kappes, J., Schmidt, S., Schn{\"o}rr, C.:
\newblock A study of {N}esterov's scheme for {L}agrangian decomposition and
  {MAP} labeling.
\newblock In: Computer Vision and Pattern Recognition (CVPR), 2011 IEEE
  Conference on, IEEE (2011)  1817--1823

\bibitem{savchynskyy2012efficient}
Savchynskyy, B., Schmidt, S., Kappes, J., Schn{\"o}rr, C.:
\newblock Efficient {MRF} energy minimization via adaptive diminishing
  smoothing.
\newblock arXiv preprint arXiv:1210.4906 (2012)

\bibitem{meshi2012convergence}
Meshi, O., Globerson, A., Jaakkola, T.S.:
\newblock Convergence rate analysis of {MAP} coordinate minimization
  algorithms.
\newblock In: Advances in Neural Information Processing Systems. (2012)
  3014--3022

\bibitem{kannan2017newton}
Kannan, H., Komodakis, N., Paragios, N.:
\newblock Newton-type methods for inference in higher-order markov random
  fields.
\newblock In: IEEE International Conference on Computer Vision and Pattern
  Recognition. (2017)

\bibitem{Prusa-Werner-15-Universality}
Prusa, D., Werner, T.:
\newblock Universality of the local marginal polytope.
\newblock PAMI \textbf{37}(4) (April 2015)

\bibitem{kovalevsky1975diffusion}
Kovalevsky, V., Koval, V.:
\newblock A diffusion algorithm for decreasing energy of max-sum labeling
  problem.
\newblock Glushkov Institute of Cybernetics, Kiev, USSR (1975) Unpublished.

\bibitem{schlesinger2000some}
Schlesinger, M.I., Flach, B.:
\newblock Some solvable subclasses of structural recognition problems.
\newblock In: Czech Pattern Recognition Workshop. Volume 2000. (2000)  55--62

\bibitem{cooper2010soft}
Cooper, M.C., De~Givry, S., S{\'a}nchez, M., Schiex, T., Zytnicki, M., Werner,
  T.:
\newblock Soft arc consistency revisited.
\newblock Artificial Intelligence \textbf{174}(7-8) (2010)  449--478

\bibitem{savchynskyy2013global}
Savchynskyy, B., Kappes, J.H., Swoboda, P., Schn{\"o}rr, C.:
\newblock Global {MAP}-optimality by shrinking the combinatorial search area
  with convex relaxation.
\newblock In: Advances in Neural Information Processing Systems. (2013)
  1950--1958

\bibitem{Shekhovtsov-IRI-PAMI}
Shekhovtsov, A., Swoboda, P., Savchynskyy, B.:
\newblock Maximum persistency via iterative relaxed inference with graphical
  models.
\newblock PAMI (2017)

\bibitem{Swoboda-PAMI-16}
Swoboda, P., Shekhovtsov, A., Kappes, J.H., Schnorr, C., Savchynskyy, B.:
\newblock Partial optimality by pruning for {MAP}-inference with general
  graphical models.
\newblock PAMI \textbf{38}(7) (2016)  1370--1382

\bibitem{hurley2016multi}
Hurley, B., O’Sullivan, B., Allouche, D., Katsirelos, G., Schiex, T.,
  Zytnicki, M., De~Givry, S.:
\newblock Multi-language evaluation of exact solvers in graphical model
  discrete optimization.
\newblock Constraints \textbf{21}(3) (2016)  413--434

\bibitem{cooper2004arc}
Cooper, M., Schiex, T.:
\newblock Arc consistency for soft constraints.
\newblock Artificial Intelligence \textbf{154}(1-2) (2004)  199--227

\bibitem{kolmogorov2015new}
Kolmogorov, V.:
\newblock A new look at reweighted message passing.
\newblock IEEE transactions on pattern analysis and machine intelligence
  \textbf{37}(5) (2015)  919--930

\bibitem{Hazan08norm-productbelief}
Hazan, T., Shashua, A.:
\newblock Norm-{P}roduct {B}elief {P}ropagation: {P}rimal-{D}ual
  {M}essage-{P}assing for approximate inference (2008)

\bibitem{Aggarwal1987}
Aggarwal, A., Klawe, M.M., Moran, S., Shor, P., Wilber, R.:
\newblock Geometric applications of a matrix-searching algorithm.
\newblock Algorithmica \textbf{2}(1) (Nov 1987)  195--208

\bibitem{schrijver2003combinatorial}
Schrijver, A.:
\newblock Combinatorial {O}ptimization: {P}olyhedra and {E}fficiency.
  Volume~24.
\newblock Springer Science \& Business Media (2003)

\bibitem{dataset-worm}
Kainmueller, D., Jug, F., Rother, C., Meyers, G.:
\newblock Graph matching problems for annotating c. elegans.
\newblock \texttt{\small http://dx.doi.org/10.15479/AT:ISTA:57} (2017)
  Accessed: 2017-09-10.

\bibitem{dataset-proteinfolding}
Yanover, C., Schueler-Furman, O., Weiss, Y.:
\newblock Minimizing and learning energy functions for side-chain prediction.
\newblock Journal of Computational Biology \textbf{15}(7) (2008)  899--911

\bibitem{szeliski2008comparative}
Szeliski, R., Zabih, R., Scharstein, D., Veksler, O., Kolmogorov, V., Agarwala,
  A., Tappen, M., Rother, C.:
\newblock A comparative study of energy minimization methods for markov random
  fields with smoothness-based priors.
\newblock IEEE transactions on pattern analysis and machine intelligence
  \textbf{30}(6) (2008)  1068--1080

\bibitem{boyd2004convex}
Boyd, S., Vandenberghe, L.:
\newblock Convex optimization.
\newblock Cambridge university press (2004)

\bibitem{bartle2000introduction}
Bartle, R., Sherbert, D.:
\newblock Introduction to real analysis.
\newblock John Wiley \& Sons Canada, Limited (2000)

\end{thebibliography}
}

\newpage
\newpage
\appendix
\numberwithin{figure}{section}
\addtocontents{toc}{\protect\setcounter{tocdepth}{2}}
\pagestyle{plain}

\setcounter{figure}{0}
\setcounter{table}{0}
\counterwithin{figure}{section}
\counterwithin{table}{section}
\counterwithin{theorem}{section}
\counterwithin{proposition}{section}
\counterwithin{lemma}{section}

\title{\mytitle \\(ECCV'18 Appendix)}
\author{}
\institute{}
\authorrunning{}
\maketitle

\section{Additional Experimental Results}\label{sec:extra-exp}

\begin{figure*}[!h]
\centering
\begin{subfigure}[Worms]
{\includegraphics[width=0.30\linewidth]{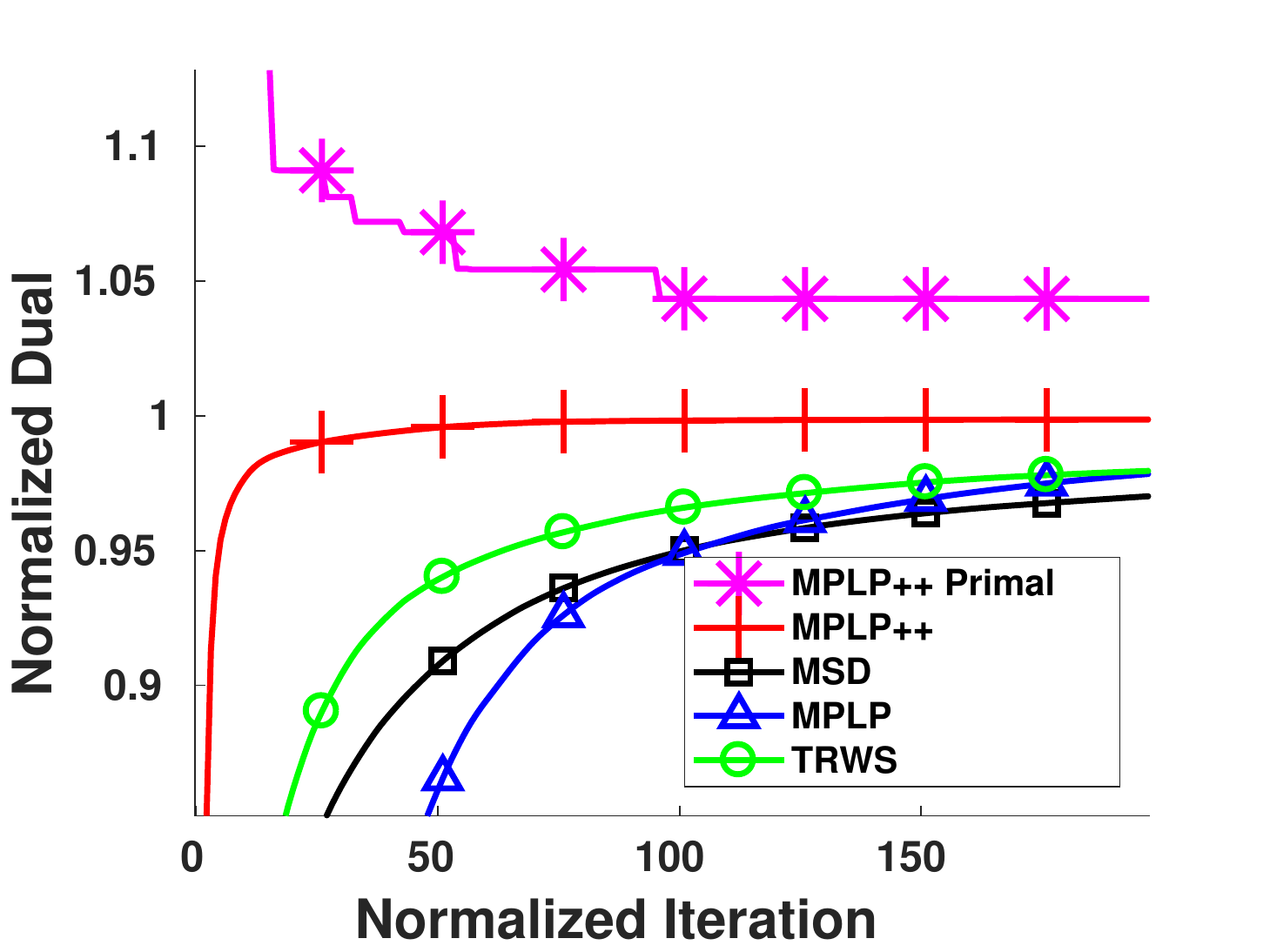}}
\end{subfigure}
\begin{subfigure}[Pose]{
\includegraphics[width=0.30\linewidth]{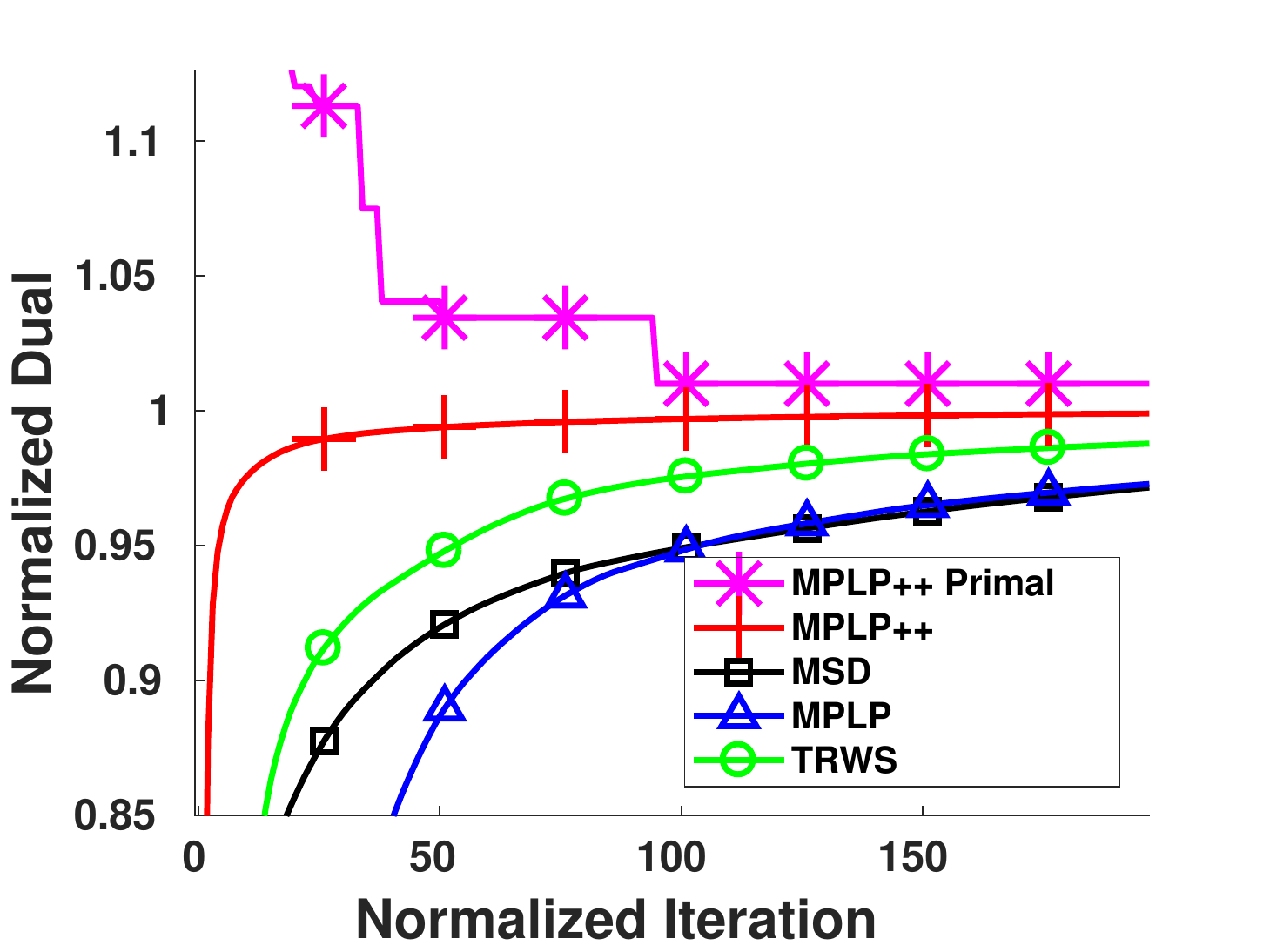}}
\end{subfigure}
\begin{subfigure}[Color-Seg]{\includegraphics[width=0.30\linewidth]{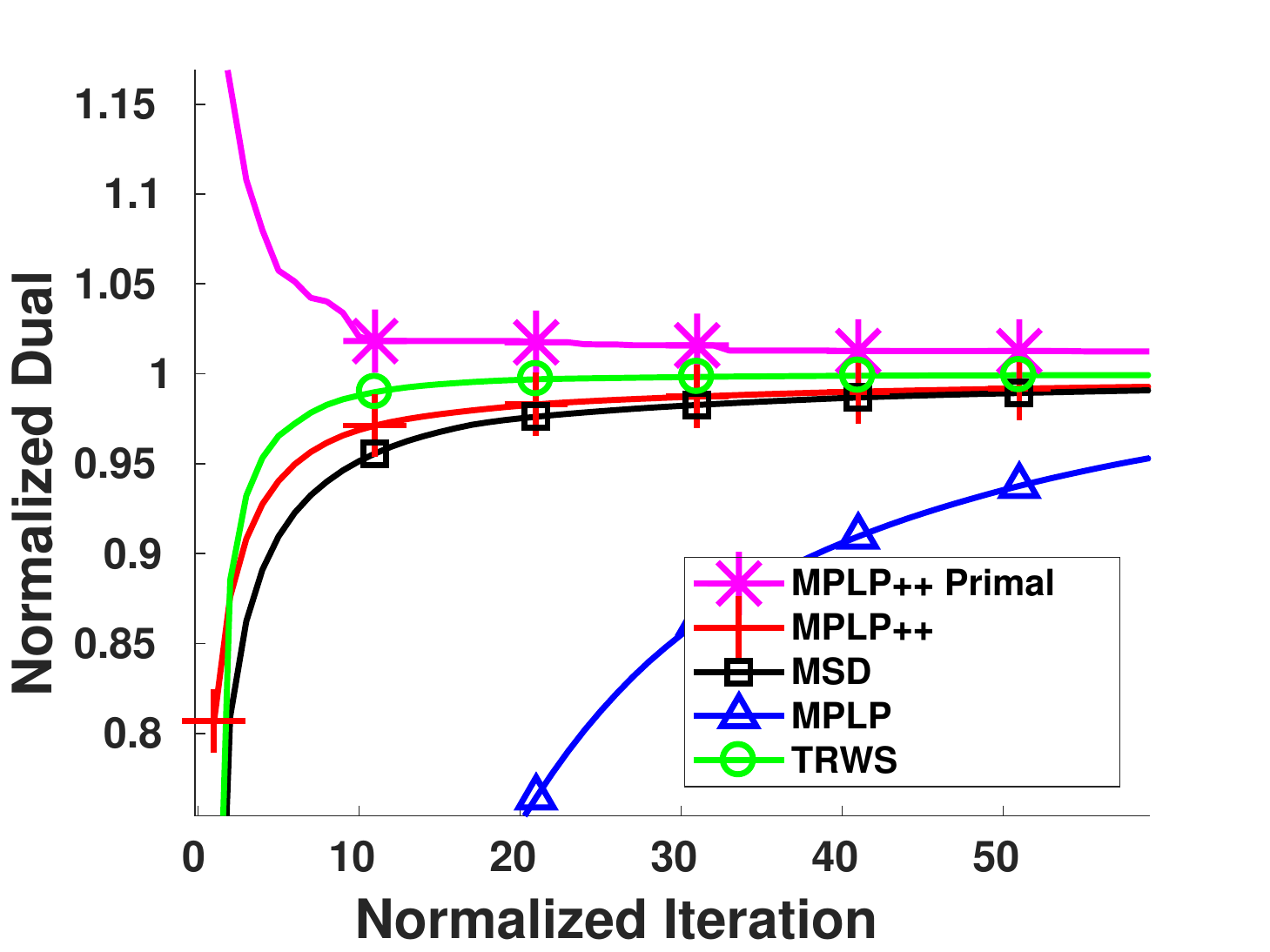}}
\end{subfigure}
\caption{Dual vs Normalized Iterations on three other datasets. The experimental setup and notation is the same as in~\cref{fig:dual-vs-iters}.
Problems in cases (a) and (b) have dense graphs where {\tt MPLP++} outperforms all other algorithms by a substantial margin. In the case (c)  graphs are sparse and {\tt TRWS} is dominant. 
}
\label{fig:dualViters}
\end{figure*}

\begin{figure*}[!h]
\centering
\begin{subfigure}[Worms]{
\includegraphics[width=0.30\linewidth]{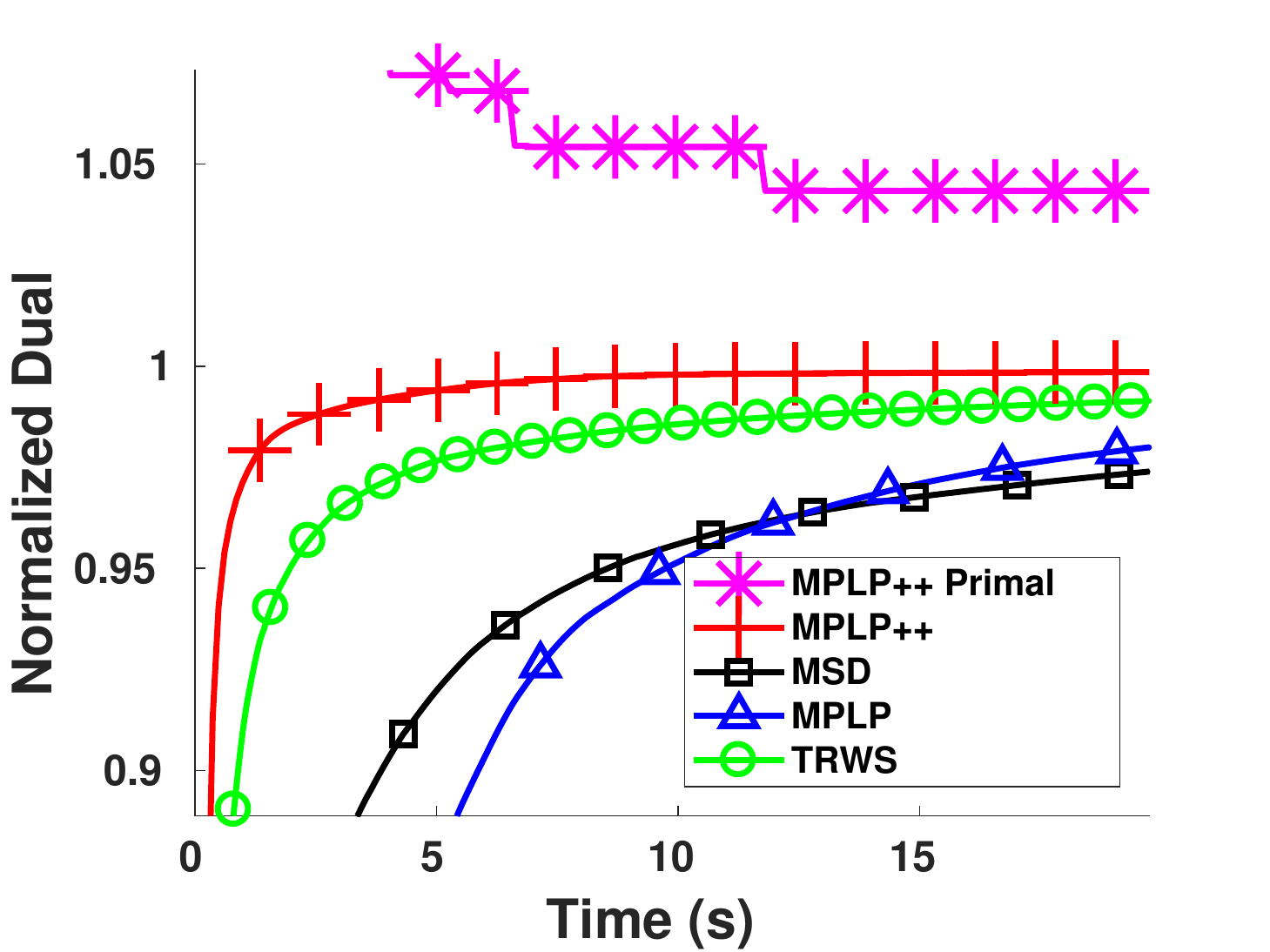}
}
\end{subfigure}
\begin{subfigure}[Pose]{
\includegraphics[width=0.30\linewidth]{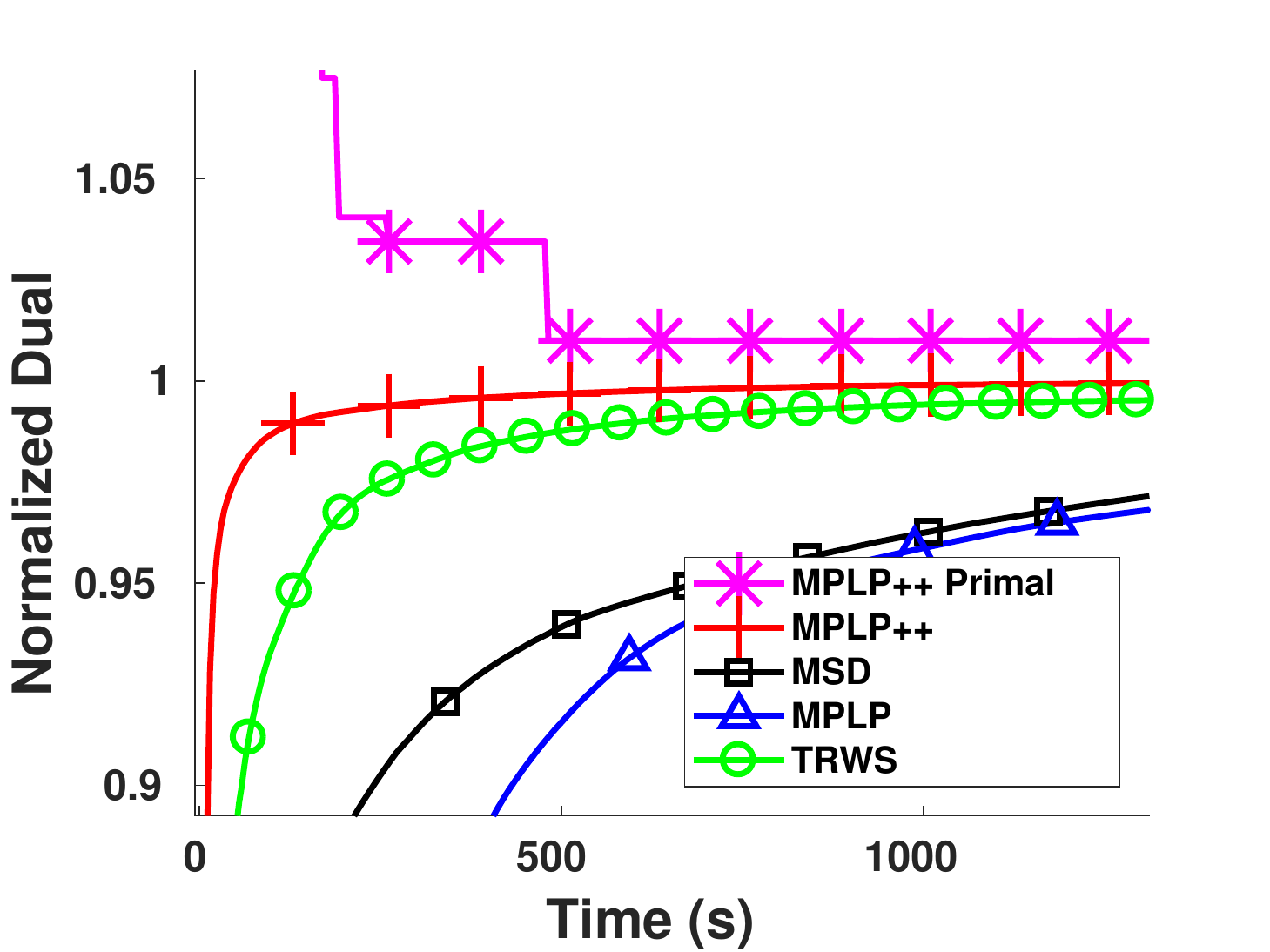}
}
\end{subfigure}
\begin{subfigure}[Color-Seg]{
\includegraphics[width=0.30\linewidth]{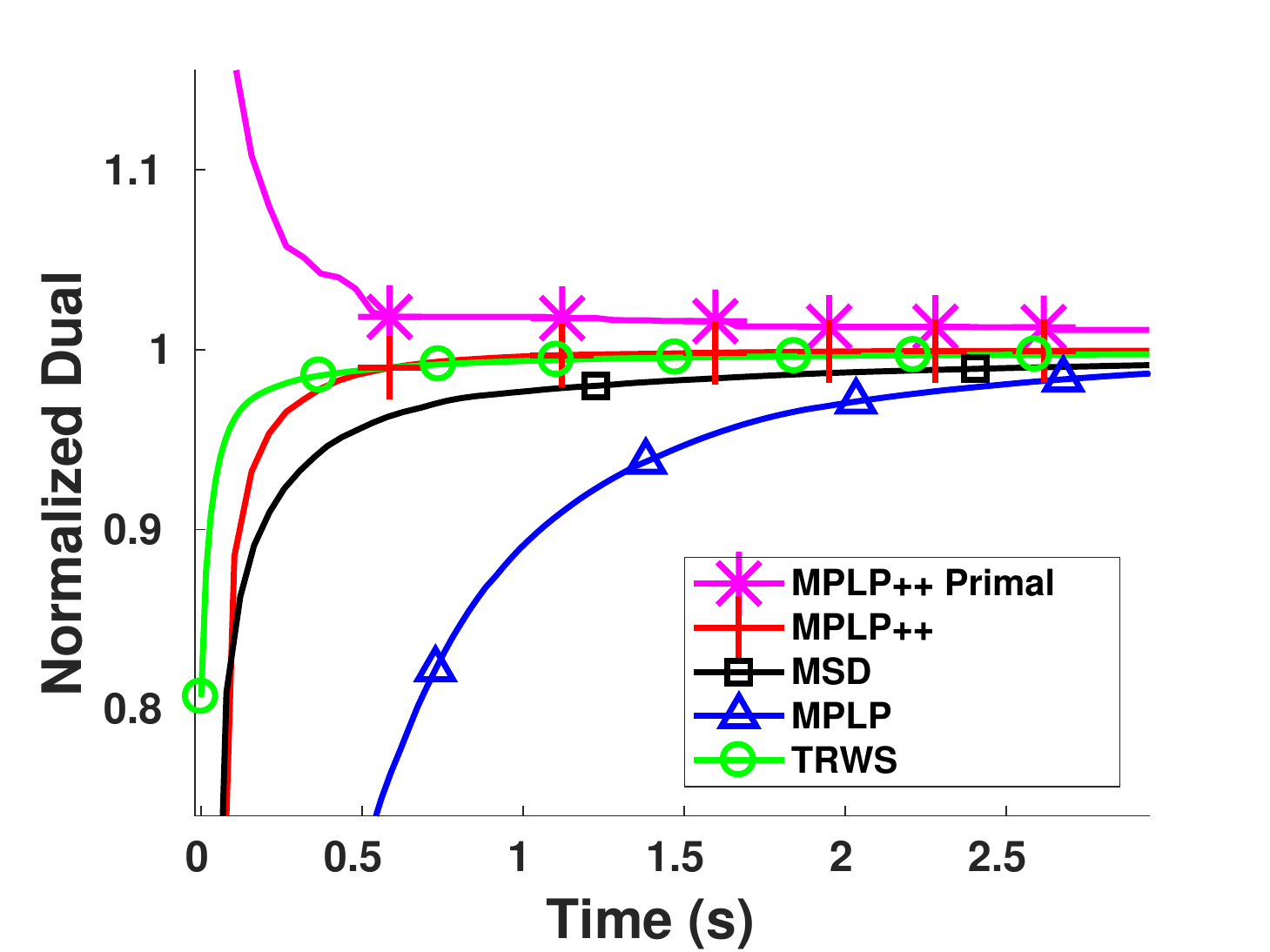}
}
\end{subfigure}
\caption{Dual vs Time (Single Threaded):  Fig. shows dual as a function of time for the single threaded versions of all the algorithms. Following the pattern in ~\cref{fig:dualViters} for dense graphs (a) and (b) {\tt MPLP++} dominates all other algorithms by a considerable margin. For sparse graphs, (c) {\tt TRWS} is the fastest. Both the dual $D(\phi)$ and time have been averaged over the entire dataset and normalized to 1.The curves have been normalized such that each instance of the dataset is weighed equally.}
\label{fig:dualVtime}
\end{figure*}

%

\begin{figure*}[!h]
\centering
\begin{subfigure}[100$\%$]{\includegraphics[width=0.24\linewidth]{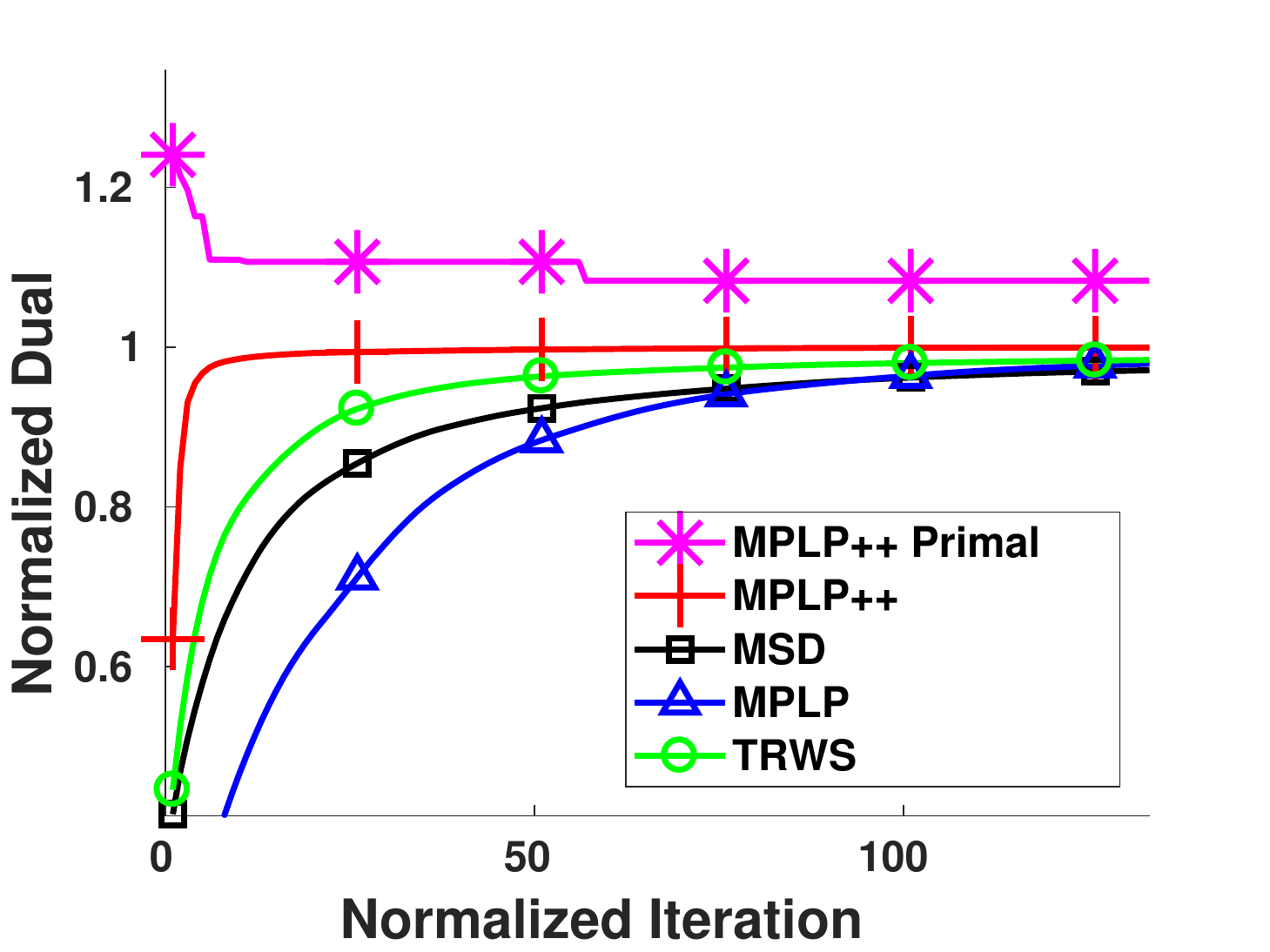}}
\end{subfigure}
\begin{subfigure}[80$\%$]{\includegraphics[width=0.24\linewidth]{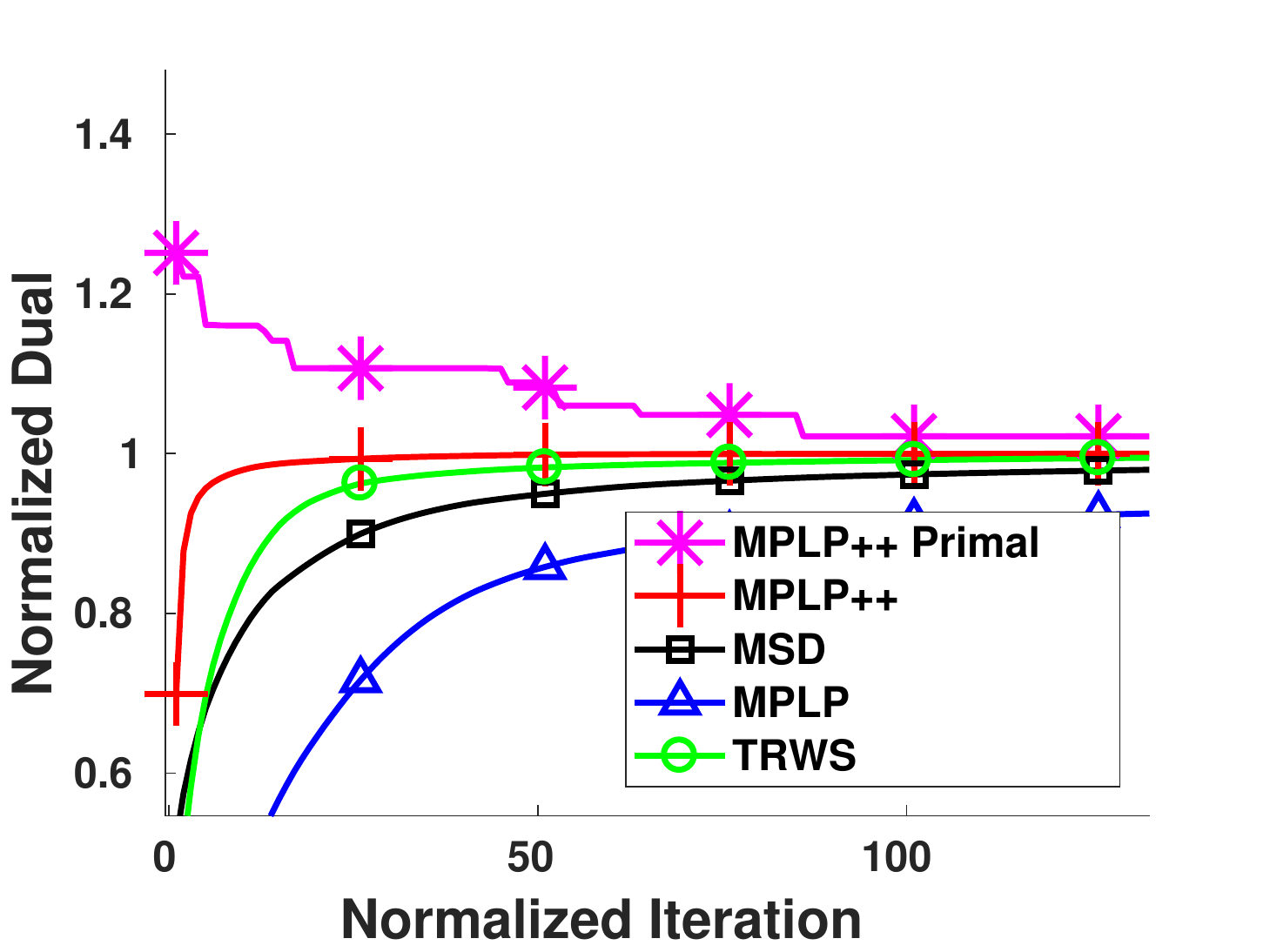}}
\end{subfigure}
\begin{subfigure}[60$\%$]{\includegraphics[width=0.24\linewidth]{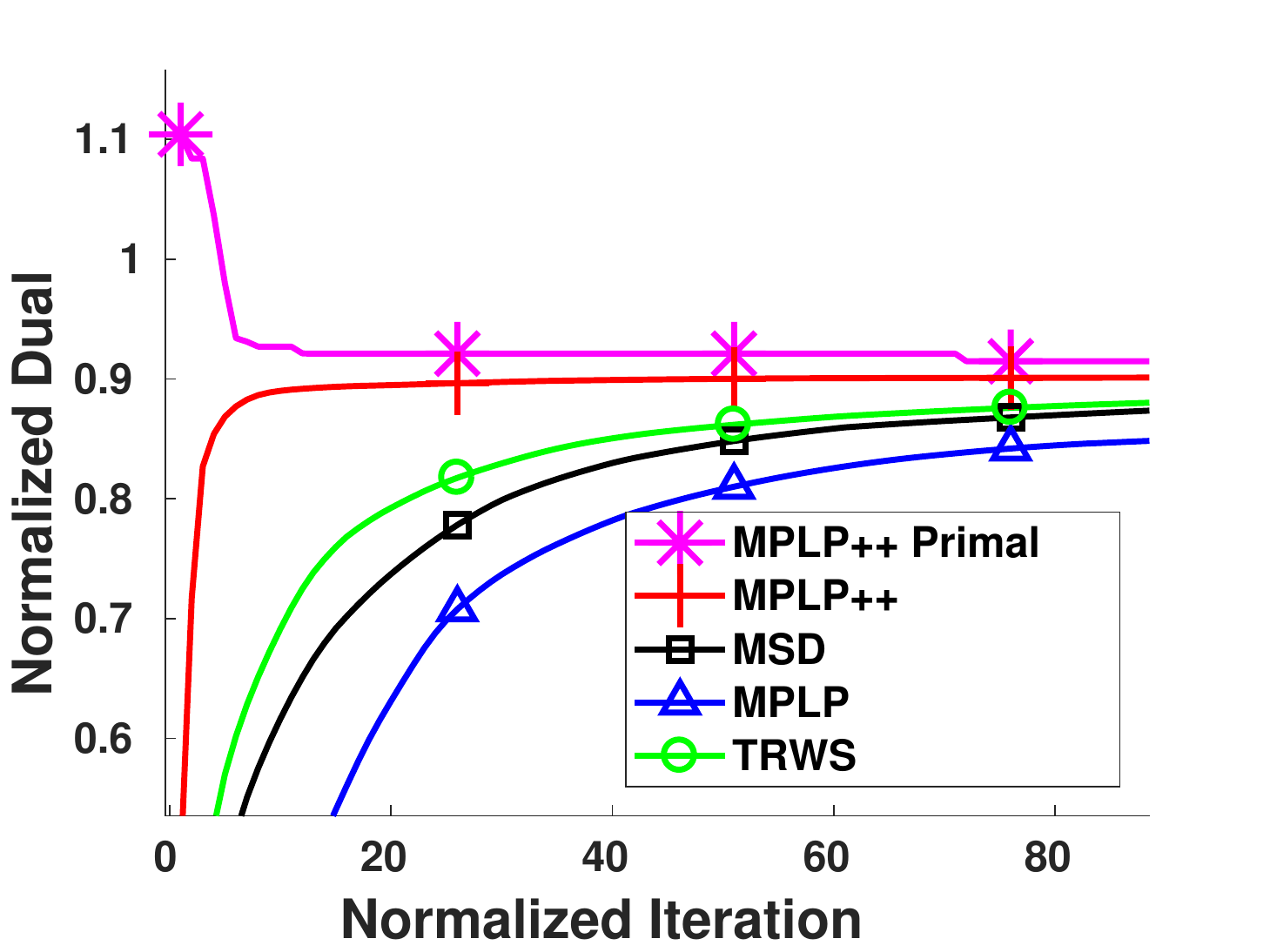}}
\end{subfigure}
\begin{subfigure}[40$\%$]{\includegraphics[width=0.24\linewidth]{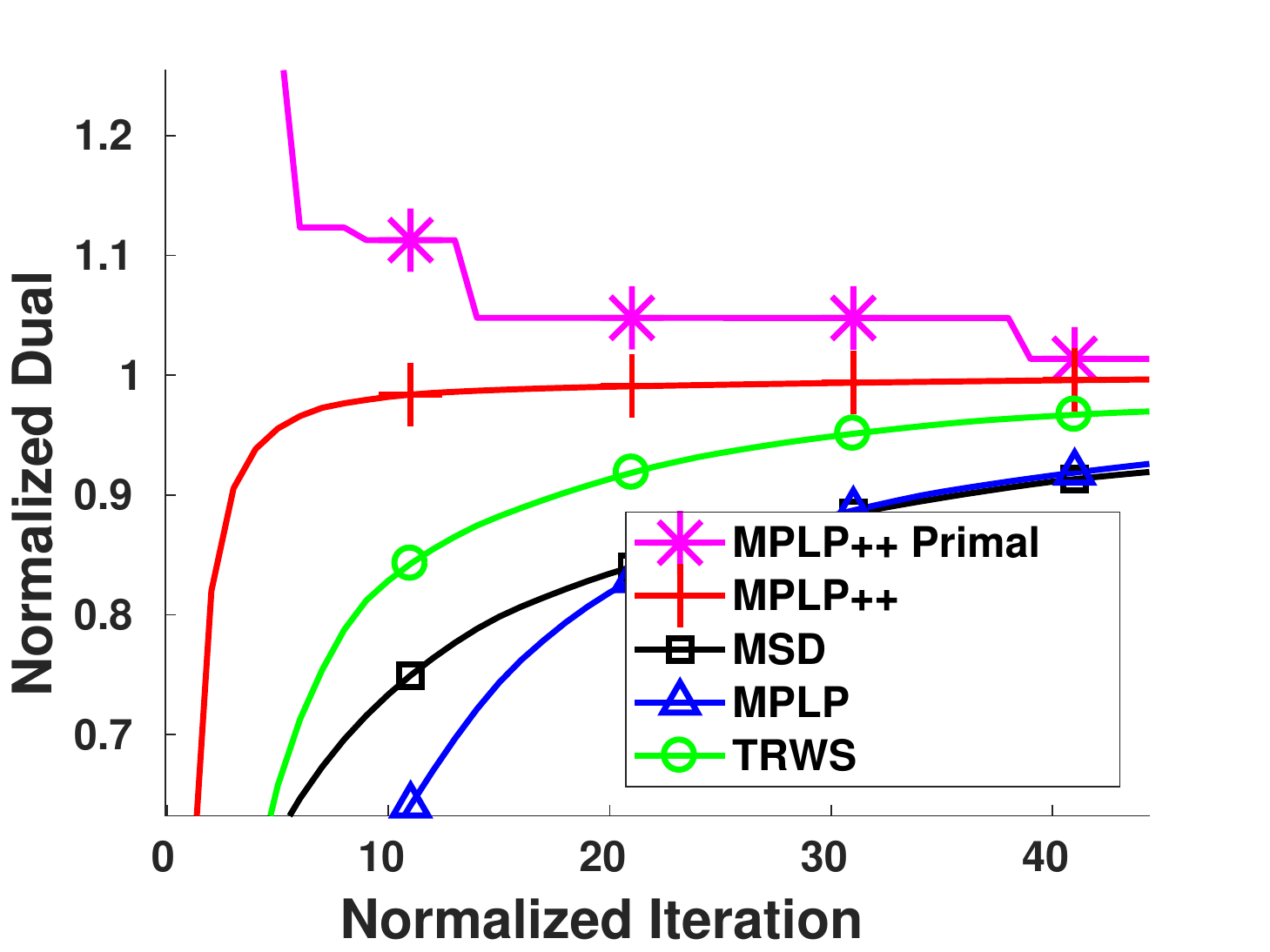}}
\end{subfigure}

\begin{subfigure}[20$\%$]
{\includegraphics[width=0.24\linewidth]{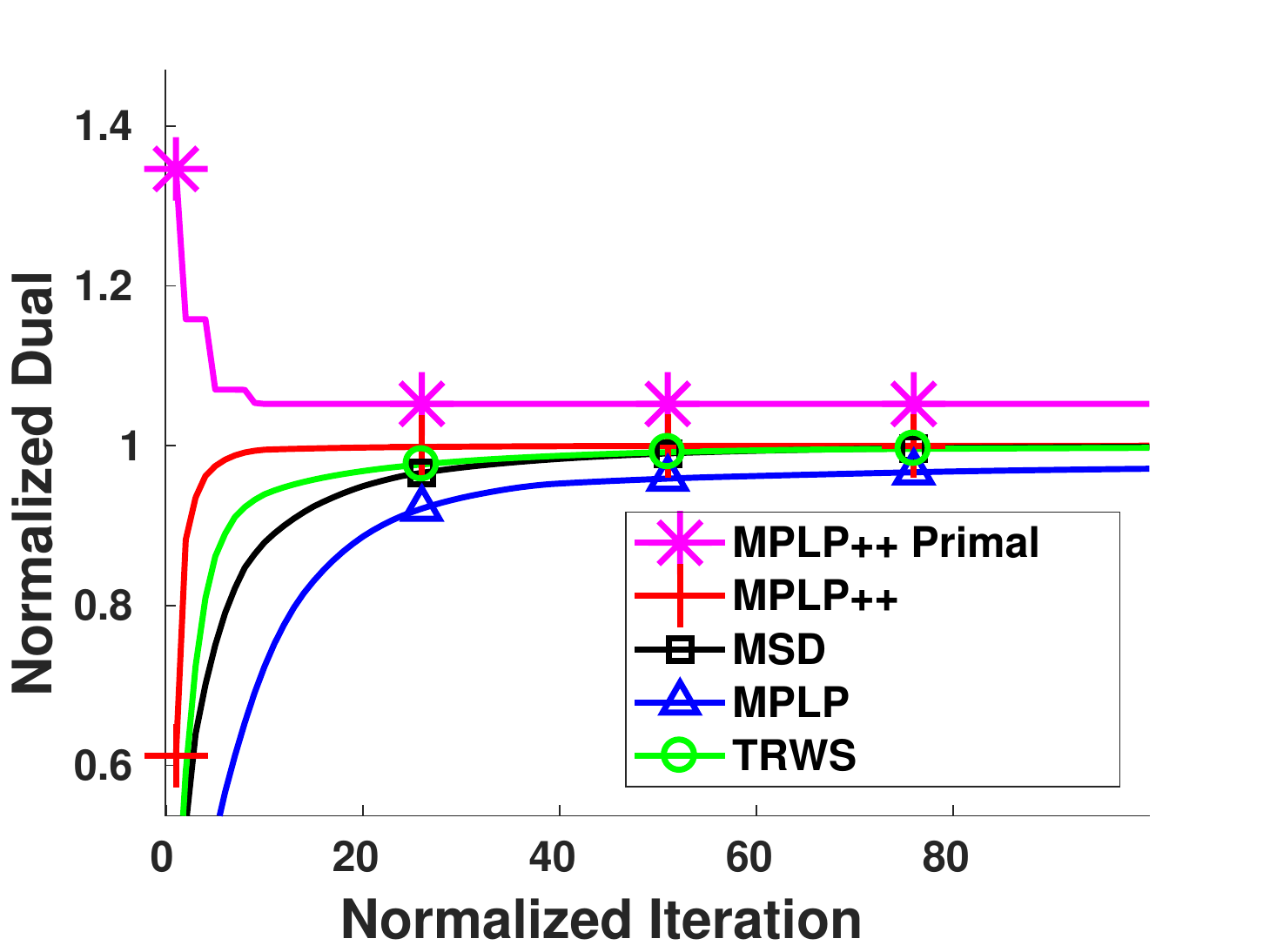}}
\end{subfigure}
\begin{subfigure}[10$\%$]
{\includegraphics[width=0.24\linewidth]{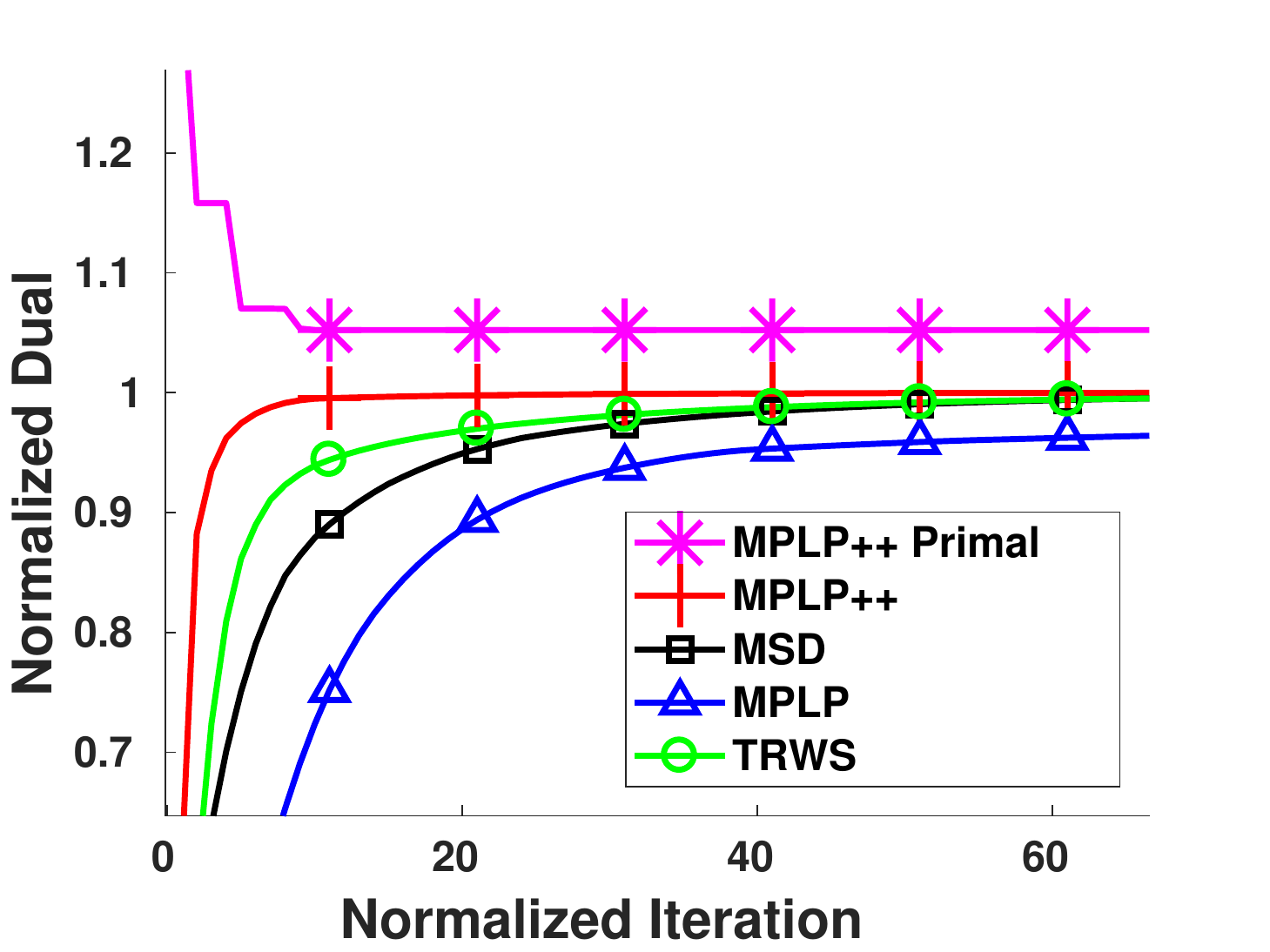}}
\end{subfigure}
\begin{subfigure}[5$\%$]
{\includegraphics[width=0.24\linewidth]{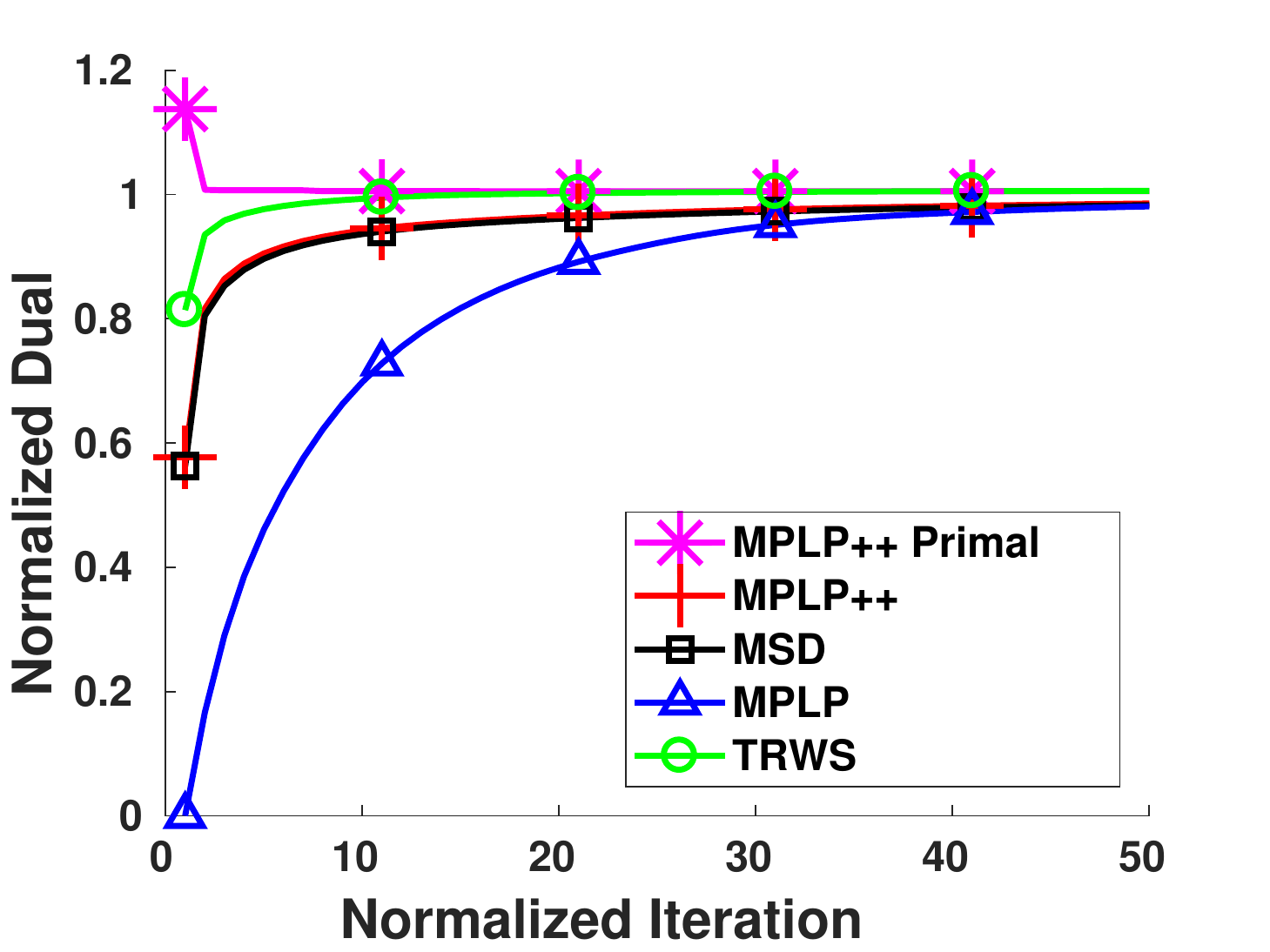}}
\end{subfigure}
\begin{subfigure}[1$\%$]
{\includegraphics[width=0.24\linewidth]{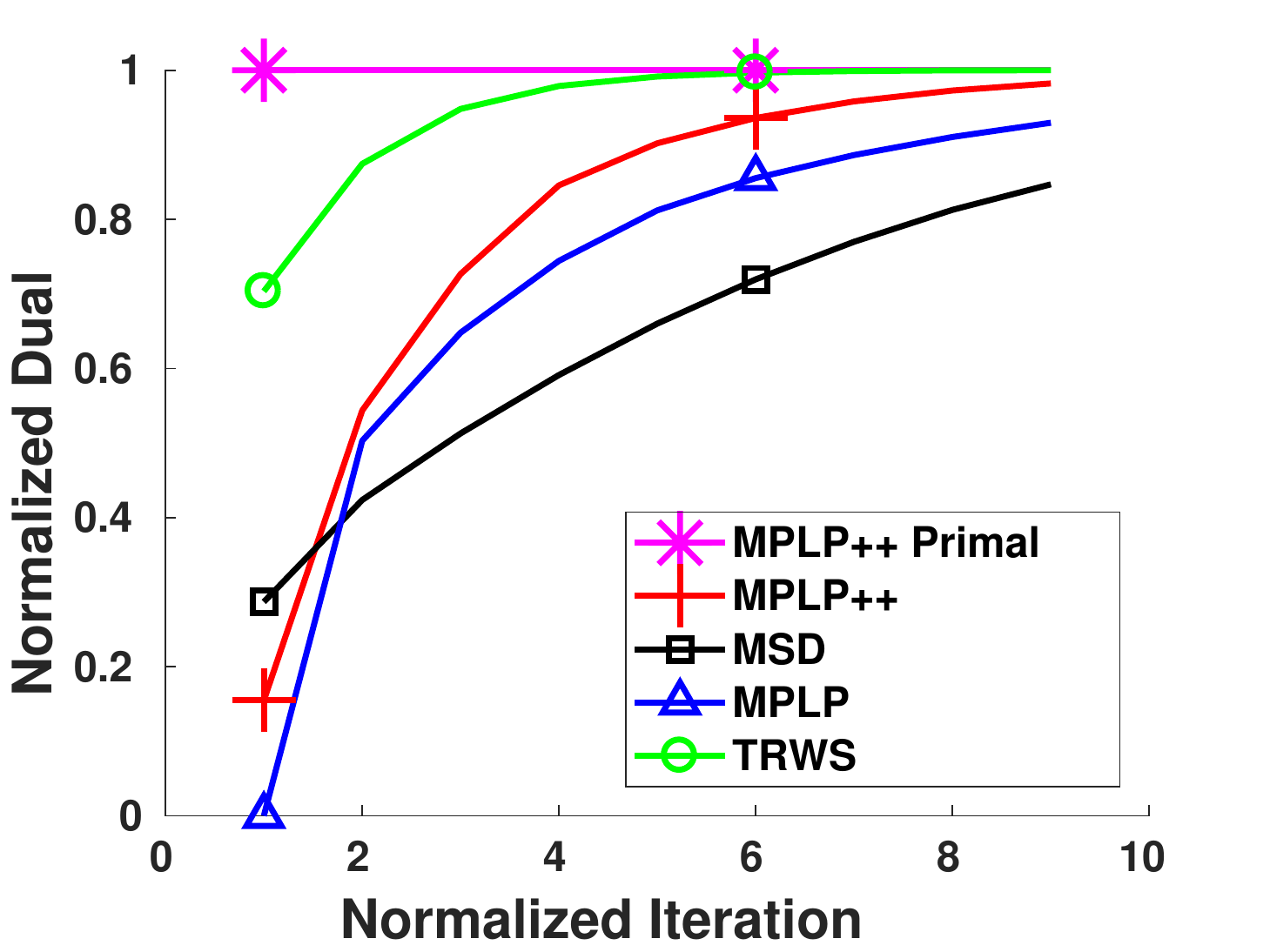}}
\end{subfigure}
\caption{Degradation With Sparsity (Dual vs Iterations): (a)-(h) show graphs with decreasing average connectivity given as percentage of possible edges in figure subcaption. In (a)-(f) {\tt MPLP++} outperforms {\tt TRWS}. Handshake is resilient to graph sparsification even when 90$\%$ of the edges have been removed. For (g) and(h) {\tt TRWS} outperforms handshake.}
\label{fig:perfDegIter}
\end{figure*}

\section{Formal Proofs}

\label{sec:proofs}

\subsection{Additional Notation} \label{sec:add-notation}
To reduce clutter in the proofs we define relevant short-forms here. The set $[n]$ is the set of the first $n$ natural numbers, \ie $[n]=\{1,\hdots,n\}$.  $[m][n]$ denotes the Cartesian product $[m] \times[n]$. We denote the same reparametrized unaries of two different algorithms $\mathcal{A}$ and $\mathcal{B}$ as $\ThA{u}{A}$  and $\ThA{u}{B}$. $\theta \in \mathbb{R}^{\mathcal{I}}$ is the vector stacked up of unary $\theta_u$ and pairwise $\theta_{uv}$ potentials. Let $\mathcal{O}_{u}^{\varepsilon}(\theta) =\{s \mid \theta_u(s) \leq \min_{s'}
\ \theta_u(s') + \varepsilon \}$ be the set of labellings within $\varepsilon>0$ of the optimal $\theta_u$. $\mathcal{O}_{uv}^{\varepsilon}(\theta)$ is similarly defined for $\theta_{uv}$. Let also $\mathcal{O}^{\varepsilon}(\theta)=\{\mathcal{O}^{\varepsilon}_u(\theta) \mid  \ \forall u \in \SV\} \cup \{\mathcal{O}^{\varepsilon}_{uv}(\theta) \mid  \ \forall uv \in \SE\}$. 

\begin{definition}
Tolerance factor $\varepsilon$ is the minimum value for which $\mathcal{O}^{\varepsilon}$ contains a consistent labelling, \ie one with node-edge agreement. 
\end{definition}

$\varepsilon$ is a function of $\theta$ and will be written sometimes as $\varepsilon(\theta)$. $\mathcal{O}^{\varepsilon}(\theta)$ can also be thought of as the subset of unary and pairwise labels that are within $\varepsilon$ of the optimal labelling.

$\mathcal{O}^{0}_{u}(\theta)=\{s \mid \theta_u(s)= \min_{s' \in \SY}
\theta_u(s')$\}, then represents the set of optimal labellings. $\mathcal{O}^{0}_{uv}(\theta)$ can be similarly defined for pairwise potentials $\theta_{uv}$. Then, $\mathcal{O}^{0}(\theta)=\{\mathcal{O}^{0}_u(\theta) \mid u \in \SV \} \cup \{\mathcal{O}^{0}_{uv}(\theta) \mid uv \in \SE \}$.

The {\tt MPLP++} operator $\mathcal{H}$ can act both on $g_{uv}$ like pairwise costs and on the entire set of costs $\theta \in \mathbb{R}^{\SI}$. In the former case, it is exactly as defined in Eqn.~\cref{equ:handshake-update}. The latter case corresponds to an iteration of $\mathcal{H}$. The $i$-times composition operation of $\mathcal{H}$ on $\theta$ denotes $i$ iterations of $\mathcal{H}$ on $\theta$,  \ie $\underbrace{\mathcal{H}(\mathcal{H}(...(\mathcal{H}(\theta))))}_{\text{i times}}=\mathcal{H}^{i}(\theta)$. Which is the case would be clear from the argument the $\mathcal{H}$ operator takes. Also, to denote the resulting cost vector after $i$ iterations of $\mathcal{H}$ on $\theta$, we use notation $\theta^{i}=\mathcal{H}^{i}(\theta)=\mathcal{H}(\theta^{i-1})$.

Let $\Theta_{un}(\phi,\kappa)$ be a function that measures the max absolute difference between unary reparameterizations $\theta^{\phi}$ and $\theta^{\kappa}$, defined by $\Theta_{un}(\phi,\kappa)= \max_{u \in \SV, s \in \SY}| \theta^{\phi}_{u}(s) - \theta^{\kappa}_{v}(s)|$. 
Likewise for pairwise costs, $\Theta_{pw}(\phi,\kappa)=\max_{uv \in \SE, st \in \SY^2} | \theta^{\phi}_{uv}(s,t) - \theta^{\kappa}_{uv}(s,t)|$.

We also assume that both unary and pairwise costs have been normalized, \ie $\min_{s \in \SY} \theta_u(s)=0$ and $\min_{st\in\SY^{2}}\theta_{uv}(s,t)=0$. A consequence of normalization is $\theta_u \geq 0$ and $\theta_{uv} \geq 0$. The subtracted cost does not affect the labelling and is added to a constant term.

\Pbcadominances*
\begin{proof}
$\mathbfcal{H} \geq \mathbfcal{M}$: Let $g_{uv}=\theta_{u}+\theta_{v}+\theta_{uv}$. Consider the first line of ~\cref{equ:handshake-update}. 

\begin{equation}
\theta^\H_u(s) \textstyle := \theta^\M_u(s),\quad \theta^\H_v(s) \textstyle := \theta^\M_v(s),\ \forall s \in \SY\
\end{equation}

The first line assigns to the reparameterized {\tt MPLP++} unaries, ($\theta^\H_u$, $\theta^\H_v$) the unaries resulting from applying the {\tt MPLP} update to $g_{uv}$. At this point we have $\H=\M$.  
\par\noindent
Now, consider the subsequent equations of $\H$

\begin{equation}
\begin{aligned}
\theta^\H_v(t) & \textstyle := \theta^\H_v(t) + \min_{s \in \Y}[g_{uv}(s,t) - \theta^\H_v(t)- \theta^\H_u(s)],\ \forall t \in \SY\,\\
\notag
\theta^\H_u(s) & \textstyle := \theta^\H_u(s) + \min_{t \in \Y}[g_{uv}(s,t) - \theta^\H_v(t)- \theta^\H_u(s)],\ \forall s \in \SY\,.
\end{aligned}
\end{equation}

To $\theta^\H_v(t)$, a non-negative quantity $\min_{s \in \Y}[g_{uv}(s,t) - \theta^\H_v(t)- \theta^\H_u(s)]$ is added. Similarly, for  $\theta^\H_u$. Thus, $\H\geq\M$.

\textbf{$\mathbfcal{M} \geq \mathbfcal{U}$}:$ \ThA{u}{M}(s) = \textrm{min}_{t'} g_{uv}(s,t') \geq  \textrm{min}_{s',t'} g_{uv}(s',t') =   \ThA{u}{U}(s)$. Likewise, $ \ThA{v}{M}(t) = \textrm{min}_{s'} g_{uv}(s',t) \geq  \textrm{min}_{s',t'} g_{uv}(s',t') =   \ThA{v}{U}(t)$. Thus, $\mathcal{M} \geq \mathcal{U}$. \QED
\end{proof}

\Pmonotone*
\begin{proof}
Let	 $\theta_u \geq \theta'_u$ and $\theta_v \geq \theta'_v$. We define $g_{uv}=\theta_u+\theta_v+\theta_{uv}$ and $g'_{uv}=\theta'_u+\theta'_v+\theta_{uv}$.  Thus we have $g_{uv} \geq g'_{uv}$. 

\textbf{$\mathcal{U}$ is monotonous:}\newline
Performing update $\gamma_{\mathcal{U}}[g_{uv}] \rightarrow (\theta^{\SU}_u,\theta^{\SU}_v)$ (Eqn.~\cref{equ:uniform-update}) yields

\begin{align}
\theta^{\SU}_u(s) = \theta^{\SU}_v(t):= \underset{s',t'}{\textrm{min}} \ \frac{1}{2}  \  g_{uv}(s',t')
\end{align}

Likewise  $\gamma_{\mathcal{U}}[g'_{uv}] \rightarrow (\theta'^{\SU}_u,\theta'^{\SU}_v)$ yields

\begin{align}
\theta'^{\SU}_u(s) = \theta'^{\SU}_v(t):= \underset{s',t'}{\textrm{min}} \ \frac{1}{2}  \  g'_{uv}(s',t')
\end{align}

As  $g_{uv} \geq g'_{uv}$, we have  $\min_{s't'} \  g_{uv}(s',t') \geq \min_{s't'} \ g'_{uv}(s',t')$. This implies, $\theta^{\SU}_u \geq \theta'^{\SU}_u, \ \ \  \theta^{\SU}_v \geq \theta'^{\SU}_v$. Hence proved.

\vspace{0.5cm}

\textbf{$\SM$ is monotonous:}\newline
Performing update $\gamma_{\mathcal{M}}[g_{uv}] \rightarrow (\theta^{\SM}_u,\theta^{\SM}_v)$ (Eqn.~\ref{equ:MPLP-update}) yields

\begin{gather*}
\theta^{\SM}_u(s) := \frac{1}{2} \ \underset{t'}{\textrm{min}} \ \ g_{uv}(s,t'), \ \ \ 
\theta^{\SM}_v(t) := \frac{1}{2} \ \underset{s'}{\textrm{min}} \ \ g_{uv}(s',t)\\
\end{gather*}

For $\gamma_{\mathcal{M}}[g'_{uv}] \rightarrow (\theta'^{\SM}_u,\theta'^{\SM}_v)$ yields
\begin{gather*}
\theta'^{\SM}_u(s) := \frac{1}{2} \ \underset{t'}{\textrm{min}} \ \ g'_{uv}(s,t'), \ \ \  
\theta'^{\SM}_v(t) := \frac{1}{2} \ \underset{s'}{\textrm{min}} \ \ g'_{uv}(s',t)\\
\end{gather*}

As  $g_{uv} \geq g'_{uv}$, we have $\min_{t'} \ g_{uv}(s,t') \geq \min_{t'} \  g'_{uv}(s,t') \ \ \& \ \  \underset{s'}{\textrm{min}} \ \ g_{uv}(s',t) \geq \underset{s'}{\textrm{min}} \ \  g'_{uv}(s',t)$. This  implies $\theta^{\SM}_u \geq \theta'^{\SM}_u, \ \  \theta^{\SM}_v \geq \theta'^{\SM}_v$. Hence proved.

\vspace{0.5cm}

\textbf{$\mathcal{H}$ is not monotonous:}\newline
To prove that $\mathcal{H}$ is not monotonous we show a counter-example to the monotonous condition stated in theorem~\ref{thm:mplp-is-monotonous}. 

Consider the following unary and pairwise costs, 
\begin{gather*}
\theta_u=\begin{pmatrix}
4 \\
0
\end{pmatrix}  \ \ 
\theta_v=\begin{pmatrix}
2 \\
0
\end{pmatrix}  \ \ 
\theta'_u=\begin{pmatrix}
0 \\
0
\end{pmatrix}  \ \ 
\theta'_v=\begin{pmatrix}
0 \\
0
\end{pmatrix}  \ \ 
\theta_{uv}=\begin{pmatrix}
0 & 1 \\
7 & 5
\end{pmatrix} 
\end{gather*}

These costs satisfy $\theta_u \geq \theta'_u$ and $\theta_v \geq \theta'_v$.


Now, applying the $\H$ operation to $g_{uv}$ and $g'_{uv}$, we get the following potentials

\begin{gather*}
\theta^{\mathcal{H}}_u=\begin{pmatrix}
2.5 \\
2.5
\end{pmatrix}  \ \ 
\theta^{\mathcal{H}}_v=\begin{pmatrix}
3.5 \\
2.5
\end{pmatrix} 
\theta'^{\mathcal{H}}_u=\begin{pmatrix}
0 \\
4
\end{pmatrix}  \ \ 
\theta'^{\mathcal{H}}_v=\begin{pmatrix}
0 \\
1
\end{pmatrix}  \ \ 
\end{gather*}

We thus have $\theta^{\mathcal{H}}_u \ngeq \theta'^{\mathcal{H}}_u$, providing the necessary counter-example. \QED
\end{proof}

\Tdominance*
\begin{proof}
From the definition of dominance we have, if $\gamma$ dominates $\mu$ ($\gamma \geq \mu$) then $\gamma[g_{uv}] \geq \mu[g_{uv}], \ \forall g_{uv}$. 

By the definition of {\em monotonous}, if  $\mu$ is monotonous, $g_{uv}^{1} \geq g_{uv}^{2} \implies \mu[g_{uv}^{1}] \geq \mu[g_{uv}^{2}]$.


Now during the $1^{st}$ iteration we have three cases, 

\begin{itemize}
\item Case \textbf{A}: Nodes $u$ and $v$ of edge $uv$ have not been reparametrized before. 
\item Case \textbf{B}: Nodes $u$ and $v$ of edge $uv$ have been reparametrized before. 
\item Case \textbf{C}: Only node $u$ or $v$ of edge $uv$ have been reparametrized before. 
\end{itemize}

If case $\textbf{A}$ we have by dominance $\gamma[g_{uv}] \geq \mu[g_{uv}]$. 

If case $\textbf{B}$, the unaries $u$ and $v$ have been reparametrized. Let the reparametrized unaries for $\gamma$ be $(\theta^{\gamma}_u,\theta^{\gamma}_v)$ and for $\mu$ be $(\theta^{\mu}_u,\theta^{\mu}_v)$. From $\gamma \geq \mu$ we know $\theta^{\gamma}_u \geq \theta^{\mu}_u$ and $\theta^{\gamma}_v \geq \theta^{\mu}_v$. Then, $g^{\gamma}_{uv}=\theta^{\gamma}_u+\theta^{\gamma}_v+\theta_{uv}$ and $g^{\mu}_{uv}=\theta^{\mu}_u+\theta^{\mu}_v+\theta_{uv}$. It follows $g^{\gamma}_{uv} \geq g^{\mu}_{uv}$. Consider the chain of inequalities

\begin{equation}	
\gamma[g^{\gamma}_{uv}] \geq \mu[g^{\gamma}_{uv}] \geq \mu[g^{\mu}_{uv}]
\end{equation}

The first is true by $\gamma \geq \mu$ dominance. The second is true by $\mu$-monotonicity. Thus for case B also, $\gamma$ results in reparametrized unaries that are co-ordinate wise greater than $\mu$. 

Case $\textbf{C}$, can be proven in much the same way as case $B$.

\QED

\end{proof}

\TconvergenceAC*
\label{sec:proof-arc-consistency}

To prove node-edge agreement we have to show that 

\begin{equation}
\underset{i \rightarrow \infty}{\textrm{lim}} \varepsilon(\mathcal{H}^{i}(\theta))=0
\end{equation}

where $\varepsilon$ is the tolerance factor defined in~\cref{sec:add-notation}.

 By saying that {\tt MPLP++} converges to node-edge agreement, we mean that as the algorithm progresses $\varepsilon$ tends to $0$ and the set of labels belonging to $\mathcal{O}^{\varepsilon}$ is sequentially pruned only to leave an optimal labelling satisfying the node-edge agreement condition, converting $\mathcal{O}^{\varepsilon}$ to $\mathcal{O}^{0}$.

The proof is dependent on several lemmas which are sequentially proved. 

\begin{lemma}
\label{prop:Fcont}
$\mathcal{H}$ is a continuous function.
\end{lemma}

\begin{proof}
Stated differently, we show the proposed reparameterizations in this paper are continuous. To generalize the result across all proposed reparameterizations we recall the idea of an {\em oracle call} from the main paper, \ie operations of the type  $\min_{t \in \Y}g_{uv}(s,t)$,  $\forall s \in \SY$. We show the continuity of the first equation of the {\tt MPLP++} operation only ($1^{st}$ equation of Eqn.~\cref{equ:handshake-update} in the main paper). The other equations of {\tt MPLP++} and other algorithms can be proved similarly.

We prove continuity for only one label of one unary cost (label $s$ and unary $u$ ). Let $\delta>0$. Similar proofs hold for all other unaries and pairwise costs. Consider two edges denoted by the triplet $(\theta_u,\theta_v,\theta_{uv})$ and $(\theta_u',\theta_v,\theta_{uv})$, such that 

\begin{equation} \label{equ:F-cont1}
|\theta_u'(s)-\theta_u(s)| < \delta
\end{equation}

Then, $g_{uv}:=\theta_u+\theta_v+\theta_{uv}$ and $g_{uv}':=\theta_u'+\theta_v+\theta_{uv}$. By~\cref{equ:F-cont1}, we have

\begin{equation}
\forall t \ \  |g_{uv}'(s,t)-g_{uv}(s,t)| < \delta
\end{equation} \label{equ:F-cont2}

Applying the first operation of {\tt MPLP++} to $g_{uv}$ and $g_{uv}'$, we get

\begin{align}
\ThA{u}{H}(s):=\frac{1}{2} \  \underset{t \in \SY}{\textrm{min}} \  g_{uv}(s,t)  \label{equ:Fcont3}\\
\theta'^{\mathcal{H}}_u(s):=\frac{1}{2} \  \underset{t \in \SY}{\textrm{min}} \ g'_{uv}(s,t) \label{equ:Fcont4}
\end{align}

Let $\mid \theta'^{\mathcal{H}}_u(s)-\theta^{\mathcal{H}}_u(s) \mid< \nu$, then by choosing $\delta:=\nu/2$, we have

\begin{equation}
\forall \theta\in \mathbb{R}^{\SI}, \ \forall \nu>0, \exists\delta>0, \ \forall \theta'\in \mathbb{R}^{\SI}, \  \mid \theta_u'(s)-\theta_u(s) \mid < \delta \implies \mid \theta'^{\mathcal{H}}_u(s)-\theta^{\mathcal{H}}_u(s) \mid< \nu
\end{equation} 
\QED

\end{proof}

\begin{lemma}
\label{prop:Dcont}
The dual function $D$ is a continuous function in it's input $\theta$.
\end{lemma}

\begin{proof}
\label{proof:Dcont}
Consider two different reparameterizations $\theta^{\phi},\theta^{\kappa}\in\mathbb{R}^{\SI}$. To prove $D(.)$ is a continuous function, it suffices to show $\forall \nu >0, \exists \delta>0$, such that $|\theta^{\phi} - \theta^{\kappa}|<\delta \implies |D(\theta^{\phi}) - D(\theta^{\kappa})|<\nu$. To do so, we need to recall from ~\cref{sec:add-notation}, $\Theta_{pw}(\phi,\kappa)=\max_{uv\in\SE,st\in\SY^2} |\theta^{\phi}_{uv}(s,t)-\theta^{\kappa}_{uv}(s,t)|$ and $\Theta_{un}(\phi,\kappa)=\max_{u\in\SV,s\in\SY} |\theta^{\phi}_{u}(s)-\theta^{\kappa}_{u}(s)|$. 

We then have
{ \small
\begin{gather*}
\mid D(\theta^\phi)-D(\theta^\kappa)\mid=\mid(\sum_{u \in \mathcal{V}}\underset{s}{\textrm{min}} \theta^{\phi}_u(s)+\sum_{uv \in \mathcal{E}}\underset{(s,t)}{\textrm{min}}\theta^{\phi}_{uv}(s,t))-(\sum_{u \in \mathcal{V}}\underset{s}{\textrm{min}}\theta^{\kappa}_u(s)+\sum_{uv \in \mathcal{E}}\underset{s,t}{\textrm{min}}\theta^{\kappa}_{uv}(s,t))\mid\\
=\mid \sum_{u \in \mathcal{V}}(\underset{s}{\textrm{min}}\theta^{\phi}_u(s)-\underset{s}{\textrm{min}}\theta^{\kappa}_u(s)) \ 
+\sum_{uv \in \mathcal{E}}(\underset{s,t}{\textrm{min}}\theta^{\phi}_{uv}(s,t))-\underset{s,t}{\textrm{min}}\theta^{\kappa}_{uv}(s,t))\mid\\
\leq\mid \sum_{u \in \mathcal{V}}(\underset{s}{\textrm{min}}\theta^{\phi}_u(s)-\underset{s}{\textrm{min}}\theta^{\kappa}_u(s)) \mid  
+ \mid \sum_{uv \in \mathcal{E}}(\underset{s,t}{\textrm{min}}\theta^{\phi}_{uv}(s,t))-\underset{s,t}{\textrm{min}}\theta^{\kappa}_{uv}(s,t))\mid\\
\leq \mid \sum_{u \in \mathcal{V}}\Theta_{un}(\phi,\kappa) \mid + \mid \sum_{uv \in \mathcal{E}}\Theta_{pw}(\phi,\kappa)\mid \ 
\leq \mid \mathcal{V}\mid \mid\Theta_{un}(\phi,\kappa) \mid + \mid \mathcal{E} \mid \mid\Theta_{pw}(\phi,\kappa)\mid\\
\end{gather*}
}
Thus for  $|D(\theta^\phi)-D(\theta^\kappa)| < \nu$, we need to choose a $\delta$ as a function of $|\Theta_{pw}| < \frac{1}{|\mathcal{E}|}$ and $|\Theta_{un}| < \frac{1}{|\mathcal{V}|}$. This can be done as $\H$ is continuous (by proposition~\ref{prop:Fcont}) and pointwise-maxima are also continuous~\cite{boyd2004convex}. \QED

\end{proof}

\begin{lemma}
\label{prop:eps-is-cont}
Tolerance factor $\varepsilon$ is continuous on $\theta$.
\end{lemma}

\begin{proof}
Let $\theta^{\phi}$ and $\theta^{\kappa}$ be two reparameterizations. We prove $\varepsilon$ is continuous for unaries only. The proof for pairwise terms is similar. Let $\Theta_{un}(\phi,\kappa)=\delta$. From the definition of $\Theta_{un}$ we have $\max_{u\in\V,s\in\Y}|\theta_u^{\phi}(s)-\theta_u^{\kappa}(s)|=\delta$. Thus, 

\begin{gather}
|\theta_u^{\phi}(s)-\theta_u^{\kappa}(s)|\leq\delta \label{equ:eps-cont-1}
\end{gather}

~\cref{equ:eps-cont-1} can be rewritten as

\begin{equation}
\label{equ:eps-cont-4}
\theta_u^{\kappa}(s) - \delta \leq \theta_u^{\phi}(s) \leq \theta_u^{\kappa}(s) + \delta
\end{equation}

Now consider the set $\mathcal{O}^{\varepsilon}_u(\theta)=\{s| \theta_u(s) \leq  \min_{s'} (\theta_u(s') +\varepsilon(\theta)) \}$. For a unary in $\mathcal{O}^{\varepsilon}_u(\theta)$

\begin{equation}
\label{equ:eps-cont-5}
\theta^{\phi}_u(s) \leq  \underset{s'}{\textrm{min}} ( \theta^{\phi}_u(s')+\varepsilon(\theta) )
\end{equation}
 
Substituting~\cref{equ:eps-cont-4} in~\cref{equ:eps-cont-5} we get, 

\begin{gather}
\theta^{\kappa}_u(s) - \delta \leq  \underset{s'}{\textrm{min}} ( \theta^{\kappa}_u(s') +\delta +\varepsilon(\theta) ) \implies
\theta^{\kappa}_u(s)  \leq  \underset{s'}{\textrm{min}} ( \theta^{\kappa}_u(s') +\varepsilon(\theta) +2\delta) \label{equ:eps-cont-6}
\end{gather} 

Thus if $s$ satisfies~\cref{equ:eps-cont-5}, it also satisfies~\cref{equ:eps-cont-6}, which has $\varepsilon' \geq \varepsilon + 2\delta$. \QED
 
\end{proof}

\begin{lemma}
\label{prop:monotonic-increase}
 $D(\mathcal{H}^{i+1}(\theta))\geq D(\mathcal{H}^{i}(\theta)),\forall i$, \ie the {\tt MPLP++} reparametrization never decreases the dual $D$.
\end{lemma}
\begin{proof}
Let's consider $D$ to be fixed for all variables except the block $D_{uv}$. Also, we assume that the costs have been normalized, \ie $\min_{s\in\SY}\theta_{u}(s)=0$, $\min_{t\in\SY}\theta_{v}(t)=0$ and $\min_{st\in\SY^2}\theta_{uv}(s,t)=0$. So, we have $D_{uv}(\theta)=0$. We thus have to show that $D_{uv}(\mathcal{H}(\theta))\geq D_{uv}(\theta)=0$. 

The aggregated potential $g_{uv}=\theta_u+\theta_v+\theta_{uv}$ can be written in the form of a $\SY \times\SY$ matrix 

\[
\begin{bmatrix}
r_1+\Delta_{1,1} & \hdots & r_1+\Delta_{1,|\SY|}\\
\vdots & \ddots & \vdots \\
r_{|\SY|}+\Delta_{|\SY|,1} & \hdots & r_{|\SY|}+\Delta_{|\SY|,|\SY|}
\end{bmatrix}
\]

where $r_s$ is the {\em row minimum} and $\Delta_{s,t}\geq 0$. So, $\Delta_{s,t}$ is $0$ for all elements of the row that are equal to $r_s$. As each element of row $s$ of $g_{uv}$ contains $\theta_u(s)$, we know that $r_s\geq \theta_u(s)$, $\forall s \in \Y$.

Now, consider the $1^{st}$ equation of the ${\tt MPLP++}$ operation~\cref{equ:handshake-update}

\begin{align*}
\theta^\H_u(s) & \textstyle :=  \frac{1}{2} \min_{t \in \Y}[g_{uv}(s,t)] = \frac{1}{2}\min_{t \in \Y} r_s+\Delta_{s,t}= \frac{r_s}{2},\ \forall s \in \SY\,,\\
\notag
\theta^\H_v(t) & \textstyle := \frac{1}{2} \min_{s \in \Y}[g_{uv}(s,t)] = \frac{1}{2} \min_{s\in\SY}r_s+\Delta_{s,t}, \  \forall t \in \SY\
\end{align*}

It is $r_s\geq\theta_u(s)\geq 0$ and therefore, $\theta^{\H}_u(s)=r_s/2\geq 0$. Likewise, $\theta^{\H}_v(t)=\dfrac{1}{2}\min_{s\in\Y}[r_s+\Delta_{s,t}]$, where $r_s\geq 0$ and $\Delta_{s,t}\geq 0$ $\forall t$, thus $\theta^{\H}_v(t)\geq 0$. Hence, $D(\H(\theta)) \geq 0$. \QED


\end{proof}

\begin{lemma}
\label{prop:fixedpt}
$\mathcal{H}$ converges to a fixed point.
\end{lemma}

\begin{proof}
Initially, the dual bound $D(\theta^{\phi})$ is computed using~\cref{equ:LP-lower-bound} which takes the min over all unaries $\theta_u(s)$ and pairwise terms $\theta_{uv}(s,t)$ individually. Thus, $D(\theta^{\phi})$ is bounded unless for any $\theta_u(s)$ or $\theta_{uv}(s,t)$ all the elements are $\infty$. If the dual is unbounded, then by strong duality~\cite{boyd2004convex} the primal is also unbounded, and the only energy attainable is $\infty$. On the other hand, if $D(\theta^\phi)$ is bounded, by the LP duality theorem, the primal energy $E(y|\theta)$ serves as an upper bound for the $D(\theta^\phi)$. 

We have proven that the BCA-update $\mathcal{H}$ brings about a monotonic improvement of $D(\theta^\phi)$. As each algorithm involves performing these updates over a sequence of edges $e^{i}$ covering the entire graph, we end up at the end of every iteration with a non-decreasing dual $D(\theta^\phi)$. Thus, each algorithm generates an increasing sequence of $D(\theta^\phi)$ which is bounded from above and by the \emph{Monotone Convergence Theorem} for real numbers $\mathbb{R}$~\cite{bartle2000introduction} (Section 3.3) the algorithm converges. \QED
\end{proof}

\begin{lemma}
\label{prop:D-UB}
$\mathcal{H}$ is bounded \wrt $D$: For any $\theta$ there exists an $M \in \mathbb{R}$ such that $D(\mathcal{H}^{i}(\theta)) < M$ for any $i$.
\end{lemma}

\begin{proof}
\label{proof:D-UB}
As $D$ is a dual LP, we know from LP-Duality Theorem ~\cite{boyd2004convex} that $D(\mathcal{H}^{i}(\theta)) \leq E(y|\theta)$, where $E$ is as in~\cref{equ:energy-min} and $y$ is the labelling. Also, an alternative upper bound can be constructed as follows.
Let $M_u=\max_{s}\theta^{\H}_u(s)$, $M_{uv}=\max_{st} \theta^{\H}_{uv}(s,t)$. We then have $|\theta^{\H}_u(s)| \leq M_u$, $\forall s$ and $|\theta^{\H}_{uv}(s,t)| \leq M_{uv}$ $\forall s, t$. This also implies $\min_{s\in\SY}\theta^{\H}_u(s) \leq M_u$ and $\min_{st\in\SY^2}\theta^{\H}_{uv}(s,t) \leq M_{uv}$.

Thus we have

\begin{gather*}
D(\H(\theta))=\sum_{u \in \mathcal{V}}\underset{s}{\textrm{min}} \ \theta^{\H}_u(s)+\sum_{uv \in \mathcal{E}}\underset{s,t}{\textrm{min}} \ \theta^{\H}_{uv}(s,t) \ 
\leq \sum_{u \in \mathcal{V}}M_u + \sum_{uv \in \mathcal{E}} M_{uv}
\end{gather*}

Therefore $D(\H(\theta))$ is always bounded by $M_{\H(\theta)}=\sum_{u \in \mathcal{V}}M_u + \sum_{uv \in \mathcal{E}} M_{uv}$. We know from~\cref{prop:fixedpt} that $\H$ converges to a fixed point, thus after a certain number of iterations no changes occur in $\H^{i}(\theta)$. To compute the required bound, one must simply take $\max_{i}M_{\H^i(\theta)}$ over all reparameterizations that have occurred. \QED
\end{proof}

\begin{lemma}
\label{prop:F-para-UB}
For any $\theta$ there exists $C>0$ such that $||\mathcal{H}^{i}(\theta)|| \leq C||\theta||$ for any $i$.
\end{lemma}

\begin{proof}
\label{proof:F-para-UB}
We start off by showing that the lemma is true for one label of a unary. Let $\theta^{\phi}(s)$ be the reparameterization of $\theta_u(s)$. Consider the edge triplet $(\theta_u,\theta_v,\theta_{uv})$. Let $m_u:=\min_{s\in\SY}\theta_u(s)$, $M_u:=\max_{s\in\SY}\theta_u(s)$, $M_v:=\max_{t\in\SY}\theta_v(t)$ and $M_{uv}:=\max_{st\in\SY^2}\theta_{uv}(s,t)$.

\begin{align*}
\theta^{\mathcal{H}}_u(s) &:= \frac{1}{2}\underset{t}{\textrm{min}} \  g_{uv}(s,t)\\
\theta^{\mathcal{H}}_u(s) &:= \frac{1}{2}\theta_u(s) + \frac{1}{2} \underset{t}{\textrm{min}} \{ \theta_v(t)+\theta_{uv}(s,t) \}\\
\theta^{\mathcal{H}}_u(s) &\leq \frac{1}{2}\theta_u(s) + \frac{1}{2}(M_v+M_{uv})\\
\theta^{\mathcal{H}}_u(s) &\leq  \left(1 +  \frac{M_v+ M_{uv}}{\theta_u(s)}\right)\theta_u(s) \leq \left(1 +  \frac{M_v+ M_{uv}}{m_u}\right)\theta_u(s)
\end{align*}

Thus, we can choose $C^{i}_u=(1+\frac{M_u+M_{uv}}{m_u})$ and get a bound $||\theta^{\mathcal{H}}_u||\leq C^{i}_u||\theta_u||$. 
As we have to find an upper bound $\forall \theta_u$ and $\forall \theta_{uv}$, we repeat the process and then take the max over all $C^{i}_u$ and $C^{i}_{uv}$ obtaining $C^{i}$. This gives us a $C^{i}$ that satisfies $||\theta^{i+1}=\mathcal{H}(\theta^{i})|| \leq C^{i}||\theta^{i}||$. By \cref{prop:fixedpt} after a certain number of iterations $n$, $\theta^{n}$ converges, so to get the required result we simply simply take $C=\prod_{i=1}^{n}C^{i}$, giving $||\H^{i}(\theta)||\leq C||\theta||$ for any $i$. \QED

\end{proof}

\begin{lemma}
\label{prop:OPT-shrinks}
$D(\mathcal{H}(\theta))=D(\theta)$ implies $\mathcal{O}^{0}(\mathcal{H}(\theta)) \subseteq \mathcal{O}^{0}(\theta)$, \ie at a fixed point no new optimal labellings are achieved. If additionally  $\varepsilon(\theta^{\H})>0$, then $\mathcal{O}^{0}(\mathcal{H}(\theta)) \subset \mathcal{O}^{0}(\theta)$, \ie the inequality is  strict.
\end{lemma}

\begin{proof}
\label{proof:OPT-shrinks}
Let $\theta^{\H}=\H(\theta)$, \ie $\theta^{\H}$ is the reparameterized cost vector obtained after reparameterizing the original cost vector $\theta$. We prove the first assertion for unary cost $u$ only, the remaining costs can be proved similarly. To do so, we assume that only after reparameterization only the optimal label of unary cost $u$ has changed.

Let this label be $s^{*}=\argmin_{s}\theta^{\H}_u(s)$. This implies $s^{*} \in \mathcal{O}(\theta^{\H})$. Now, to prove the first assertion, we have to show the inclusion $s^{*} \in \mathcal{O}^{0}_u(\theta^{\H}) \implies s^{*} \in \mathcal{O}^{0}_u(\theta)$. 

Let us additionally define a function $\eta$ that takes as input a node index $p$ and outputs its optimal labelling, \ie $k=\eta(v)$ implies $k=\argmin_{s \in \Y}\theta_p(s)$. We define this function over all unary costs except $u$.  As $\theta^{\H}$ and $\theta$ differ in only the label $u$, we can use $\eta$ for choosing the optimal labelling for both of them. 

Since, $D(\theta^{\H})=D(\theta)$, we have

\begin{equation} \label{equ:repara-opt-shrinks}
\sum_{v \in \V} \min_{s \in \Y} \theta^{\H}_v(s) + \sum_{vw \in \E} \min_{st \in \Y^2} \theta^{\H}_{vw}(s,t)=\sum_{v \in \V} \min_{s \in \Y} \theta_v(s)+ \sum_{vw \in \E} \min_{st \in \Y^2} \theta_{vw}(s,t)
\end{equation}

Since all the labels but the ones for $u$ are the same, the terms on both sides of~\cref{equ:repara-opt-shrinks} cancel out leaving

\begin{equation} \label{equ:repara-opt-shrinks2}
\min_{s \in \Y} \theta^{\H}_u(s) + \sum_{ uv \mid v \in Nb(u)} \min_{st \in \Y^2} \theta^{\H}_{uv}(s,t)=\min_{s \in \Y} \theta_u(s)+ \sum_{uv \mid v \in Nb(u)} \min_{st \in \Y^2} \theta_{uv}(s,t)
\end{equation}

Substituting $s^{*}=\argmin_{s}\theta^{\H}_u(s)$ and $\eta$ into \cref{equ:repara-opt-shrinks2} we get

\begin{equation} \label{equ:repara-opt-shrinks3}
\theta^{\H}_u(s^{*}) + \sum_{uv \mid v \in Nb(u)}  \theta^{\H}_{uv}(s^{*},\eta(v))=\min_{s \in \Y} \theta_u(s)+ \sum_{uv \mid v \in Nb(u)}  \theta_{uv}(s,\eta(v))
\end{equation}

Now, by the definition of reparameterization we have,  $\forall s \in \Y$

\begin{equation} \label{equ:repara-opt-shrinks4}
\theta^{\H}_u(s) + \sum_{uv \mid v \in Nb(u)} \theta^{\H}_{uv}(s,\eta(v))= \theta_u(s)+ \sum_{uv \mid v \in Nb(u)} \theta_{uv}(s,\eta(v))
\end{equation}

This also holds true for label $s^{*}$, thus

\begin{equation} \label{equ:repara-opt-shrinks5}
\theta^{\H}_u(s^{*}) + \sum_{uv \mid v \in Nb(u)} \theta^{\H}_{uv}(s^{*},\eta(v))= \theta_u(s^{*})+ \sum_{uv \mid v \in Nb(u)} \theta_{uv}(s^{*},\eta(v))
\end{equation}

Now, equating the RHS of~\cref{equ:repara-opt-shrinks5} and the RHS of~\cref{equ:repara-opt-shrinks3}, we get 

\begin{equation} \label{equ:repara-opt-shrinks6}
\theta_u(s^{*})+ \sum_{uv \mid v \in Nb(u)} \theta_{uv}(s^{*},\eta(v))=\min_{s \in \Y} \theta_u(s)+ \sum_{uv \mid v \in Nb(u)}  \theta_{uv}(s,\eta(v))
\end{equation}

Thus $s^{*}=\argmin_{s\in\Y}\theta_u(s) \implies s^{*}\in\mathcal{O}^{0}_u(\theta)$, proving the first assertion.

To prove the second assertion, we have to show that if $\epsilon(\theta^{\H})>0$ and $D(\H(\theta))=D(\theta)$, there exists $s'\in\mathcal{O}^{0}_u(\theta)$ such that $s'\notin\mathcal{O}^{0}_u(\theta^{\H})$.

Since, $s' \in \mathcal{O}^{0}_u(\theta)$, we have
\begin{equation}
\theta_u(s')+\sum_{v \in Nb(u)}\theta_{uv}(s',\eta(v)) = \min_{s\in\SY}\theta_u(s)+\sum_{v \in Nb(u)} \min_{s\in\Y}\theta_{uv}(s,\eta(v))
\end{equation}

From the reparameterization relation we have

\begin{equation}
\theta_u(s')+\sum_{v \in Nb(u)}\theta_{uv}(s',\eta(v)) = \theta_u^{\H}(s')+\sum_{v \in Nb(u)}\theta_{uv}^{\H}(s',\eta(v))
\end{equation}

Since $\varepsilon(\theta^{\H})>0$, it has to be the case that 

\begin{equation}
\theta_u^{\H}(s')+\sum_{v \in Nb(u)}\theta_{uv}^{\H}(s',\eta(v))>\min_{s\in\Y}\theta_u^{\H}(s)+\sum_{v \in Nb(u)}\min_{s\in\Y}\theta_{uv}^{\H}(s,\eta(v))
\end{equation}

otherwise $\varepsilon(\theta^{\H})=0$. Thus, $s'\notin\mathcal{O}^{0}_u(\theta^{\H})$.

 
%
%
%
%

\end{proof}

\begin{lemma}
\label{prop:OPT-stabilizes}
There exists an $n$, such that $D(\mathcal{H}^{n+m}(\theta))=D(\mathcal{H}^{n}(\theta))$ implies \\
 ~$\mathcal{O}^{0}(\mathcal{H}^{n+m}(\theta)) = \mathcal{O}^{0}(\mathcal{H}^{n}(\theta)),\forall m \geq 0$.
\end{lemma}

\begin{proof}
\label{proof:OPT-stabilizes}
Let $\theta^{0},\theta^{1},...$ be the sequence of vectors generated from $\mathcal{H}$ via $\theta^{i+1}=\mathcal{H}(\theta^{i})$. After a certain number of iterations, we have to show that $\mathcal{O}^{0}(\theta^{n+m})=\mathcal{O}^{0}(\theta^{n})$, for all $m \geq0$. From \cref{prop:OPT-shrinks} we have after a fixed point has been reached at iteration $i$, $\mathcal{O}^{0}(\theta^{i}) \supset \mathcal{O}^{0}(\theta^{i+1}) \supset \mathcal{O}^{0}(\theta^{i+2}) \supset ...$. Since $\mathcal{O}^{0}(\theta)$ is finite, it cannot shrink indefinitely and after a certain number ($n$) of iterations $\mathcal{O}^{0}(\theta^{n})$ will become consistent.  Yielding, $\mathcal{O}^{0}(\theta^{m+n})=\mathcal{O}^{0}(\theta^{n})$, $\forall m \geq 0$. \QED
\end{proof}

Now, we are ready to prove ~\cref{thm:alg-convergence}

Combining the above lemmas, we introduce the notion of {\em consistency-enforcing} algorithms. An algorithm is consistency-enforcing if it satisfies Lemmas \ref{prop:Fcont}, \ref{prop:Dcont},  \ref{prop:eps-is-cont}, \ref{prop:monotonic-increase}, \ref{prop:fixedpt}, \ref{prop:D-UB}, \ref{prop:F-para-UB}, \ref{prop:OPT-shrinks},  \ref{prop:OPT-stabilizes},  .

The {\tt MPLP++} operator $\mathcal{H}$ is consistency enforcing then $\lim_{i  \rightarrow \infty} \varepsilon(\mathcal{H}^{i}(\theta))=0.$

\begin{equation}
\underset{i \rightarrow \infty}{\textrm{lim}} \varepsilon(\mathcal{H}^{i}(\theta))=0
\end{equation}

\TconvergenceAC*

\begin{proof}
By virtue of~\cref{prop:F-para-UB} and ~\cref{prop:monotonic-increase}, the sequence $\theta^{i}=\mathcal{H}^{i}(\theta)$ is bounded. Therefore, by the Bolzano-Weierstrass Theorem \cite{bartle2000introduction}, there exists a converging subsequence $\theta^{i(j)}$, $j=1,2,\hdots,$ where $j > j'$ implies $i(j) > i(j')$, \ie the limit $\theta^{*}:=\lim_{j \rightarrow \infty}\theta^{i(j)}$ exists. Let us show that it holds 

\begin{equation}	\label{equ:epsilon-tends-0}
\varepsilon(\theta^{*})=0
\end{equation}

for any converging subsequence of $\theta^{i}$.

Since due to convergence of $\mathcal{H}$ (shown in \cref{prop:monotonic-increase}) and \cref{prop:D-UB} the sequence $D(\theta^{i})$ is non-decreasing and bounded from above, and therefore converges to a limit point $D^{*}:=\underset{i \rightarrow \infty}{\textrm{lim}}D(\theta^{i})$. Therefore, it also holds 

\begin{equation}
D^{*}= \underset{j \rightarrow \infty}{\textrm{lim}} D(\theta^{i(j)}) = \underset{j \rightarrow \infty}{\textrm{lim}} D(\theta^{i(j)+n}), \ \ \ \forall n \geq 0
\end{equation}

This implies 

\begin{equation}
0 = \underset{j \rightarrow \infty}{\textrm{lim}} D(\theta^{i(j)}) - \underset{j \rightarrow \infty}{\textrm{lim}} D(\theta^{i(j)+n}) = \underset{j \rightarrow \infty}{\textrm{lim}} D(\theta^{i(j)}) -  D(\H^{n}(\theta^{i(j)})).
\end{equation}

Since $D$ is continuous it holds

\begin{equation}
0 = \underset{j \rightarrow \infty}{\textrm{lim}}( D(\theta^{i(j)}) -  D(\H^{n}(\theta^{i(j)})) ).
\end{equation}

and therefore,~\cref{equ:epsilon-tends-0} holds by virtue of~\cref{prop:OPT-stabilizes}.

Since, $\varepsilon$ is a continuous function,~\cref{equ:epsilon-tends-0} implies

\begin{equation}	\label{equ:eps-to-0}
\underset{j \rightarrow \infty}{\textrm{lim}} \varepsilon(\theta^{i(j)})=0
\end{equation}

for any converging sub-sequence $\theta^{i(j)}$.

Now, considering the sequence $\varepsilon(\theta^{i})$, we know by virtue of~\cref{prop:Fcont} $s^{i}:=\sup_{j \geq i} \varepsilon(\theta^{j})$. Sequence $s^{i}$ is a monotonically non-increasing sequence of non-negative numbers and therefore it has a limit $s^{*} = \underset{i \rightarrow \infty}{\textrm{lim}} s^{i}$.

According to the ``Theorem of Superior and Inferior Limits''~\cite{bartle2000introduction} there exists  a subsequence $\varepsilon(\theta^{i{'}(j)})$ such that

\begin{equation}
\underset{j \rightarrow \infty}{\textrm{lim}} \varepsilon(\theta^{i{'}(j(k))})=s^{*}
\end{equation}

The sequence $\theta^{i'(j)}$ is bounded virtue of~\cref{prop:F-para-UB} and therefore contains a converging subsequence $\theta^{i'(j(k))}$. For this subsequence it also holds,

\begin{equation}
\underset{k \rightarrow \infty}{\textrm{lim}} \varepsilon(\theta^{i'(j(k))}) = s^{*}
\end{equation}

%

At the same time, as proved in~\cref{equ:eps-to-0}, for any converging subsequence it holds

\begin{equation}
0 = \underset{k \rightarrow \infty}{\textrm{lim}} \varepsilon(\theta^{i'(j(k))}) = s^{*} = \underset{i \rightarrow \infty}{\textrm{lim}} \underset{k \geq i}{\textrm{sup}} \ \varepsilon(\theta^{k})
\end{equation}

Finally, $0 \leq \varepsilon(\theta^{i}) \leq \underset{k \geq i}{\textrm{sup}} \  \varepsilon(\theta^{k})$ implies $\underset{i \rightarrow \infty}{\textrm{lim}} \varepsilon(\theta^{i})=0$. \QED
\end{proof}


%

\TblockOpt*
\begin{proof}
The if clause has been proven in (\cite{schlesinger1976syntactic}, Theorem 2).
The only if clause can be proven as follows:

As mentioned in the main paper and shown in~\cite{schlesingera2011diffusion,werner2007linear} the Dual LP $D(\phi)$ is a concave, piecewise linear function. 

For proving optimality of block $D_{uv}(\phi_{u \leftrightarrow v})$ we need to show $\mathbf{0} \in \partial D_{uv}(\phi_{u \leftrightarrow v})$, where $\partial D_{uv}(\phi_{u \leftrightarrow v})$ is the super-differential of $D_{uv}(\phi_{u \leftrightarrow v})$. Reconsidering the dual $D_{uv}(\phi_{u \leftrightarrow v})$ 

\begin{equation}
D_{uv}(\phi_{u \leftrightarrow v}):= \min\limits_{st\in\SY^2}\theta^{\phi}_{uv}(s,t)+ \min\limits_{s\in\SY}\theta^{\phi}_{u}(s)+\min\limits_{t\in\SY}\theta^{\phi}_{v}(t)\,,  
\end{equation}

Let $x'_u=\argmin_{s\in\Y}\theta^{\phi}_{u}(s)$, $x'_v=\argmin_{t\in\Y}\theta^{\phi}_{v}(t)$ and \\ $(x''_u,x''_v)=\argmin_{st \in \Y^2}\theta^{\phi}_{uv}(s,t)$. Taking the super-gradient of $D_{uv}(\phi_{u \leftrightarrow v})$ \wrt $\phi_{u \rightarrow v}$, we have

\begin{align}
\frac{\partial D_{uv}(\phi_{u \leftrightarrow v})}{\partial \phi_{u \rightarrow v}(s)} \colon=\begin{cases}
	0,  \ \ \ &s \neq x'_u, \ s \neq x''_u\\
	0,  \ \ \ &s = x'_u, \ s = x''_u\\
	1, \ \ \ &s = x'_u, \ s \neq x''_u\\
	-1,  \ \ \ &s \neq x'_u, \  s = x''_u\\
                    \end{cases}
\end{align}

Likewise, taking the super-gradient of $D_{uv}(\phi_{u \leftrightarrow v})$ \wrt $\phi_{v \rightarrow u}$, we have

\begin{align}
\frac{\partial D_{uv}(\phi_{u \leftrightarrow v})}{\partial \phi_{v \rightarrow u}(t)} \colon=\begin{cases}
	0,  \ \ \ &t \neq x'_v, \ t \neq x''_v\\
	0,  \ \ \ &t = x'_v, \ t = x''_v\\
	1, \ \ \ &t = x'_v, \ t \neq x''_v\\
	-1,  \ \ \ &t \neq x'_v, \  t = x''_v\\
                    \end{cases}
\end{align}

Thus, if $s=x'_u=x''_u$ and $t=x'_v=x''_v$, we have $0 \in \frac{\partial D_{uv}({\phi_{v \leftrightarrow u}})}{\partial \phi_{v \leftrightarrow u}(s,t)}$, proving the only if clause. \QED
\end{proof}


\end{document}


\pagestyle{headings}
\mainmatter
\def\ECCV18SubNumber{***}  

\title{MPLP++: Fast Parallelizable Dual Block-Coordinate Ascent for \\ Dense Graphical Models - Supplementary} 

\titlerunning{ECCV-18 submission ID \ECCV18SubNumber}

\authorrunning{ECCV-18 submission ID \ECCV18SubNumber}

\author{Anonymous ECCV submission}
\institute{Paper ID \ECCV18SubNumber}

\maketitle

\begin{figure*}
\centering
\begin{subfigure}[Worms]
{\includegraphics[width=0.30\linewidth]{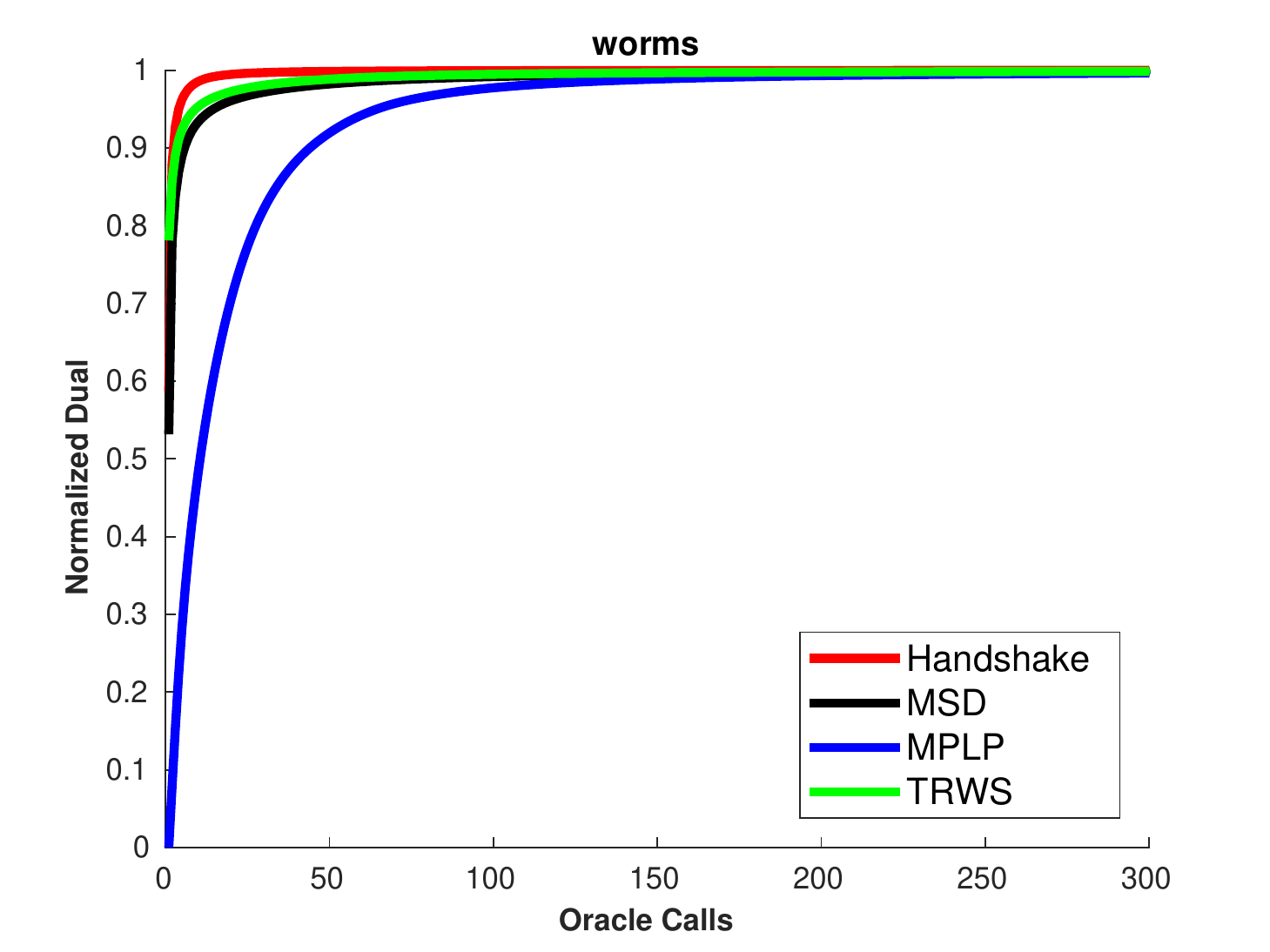}}
\end{subfigure}
\begin{subfigure}[Pose]{
\includegraphics[width=0.30\linewidth]{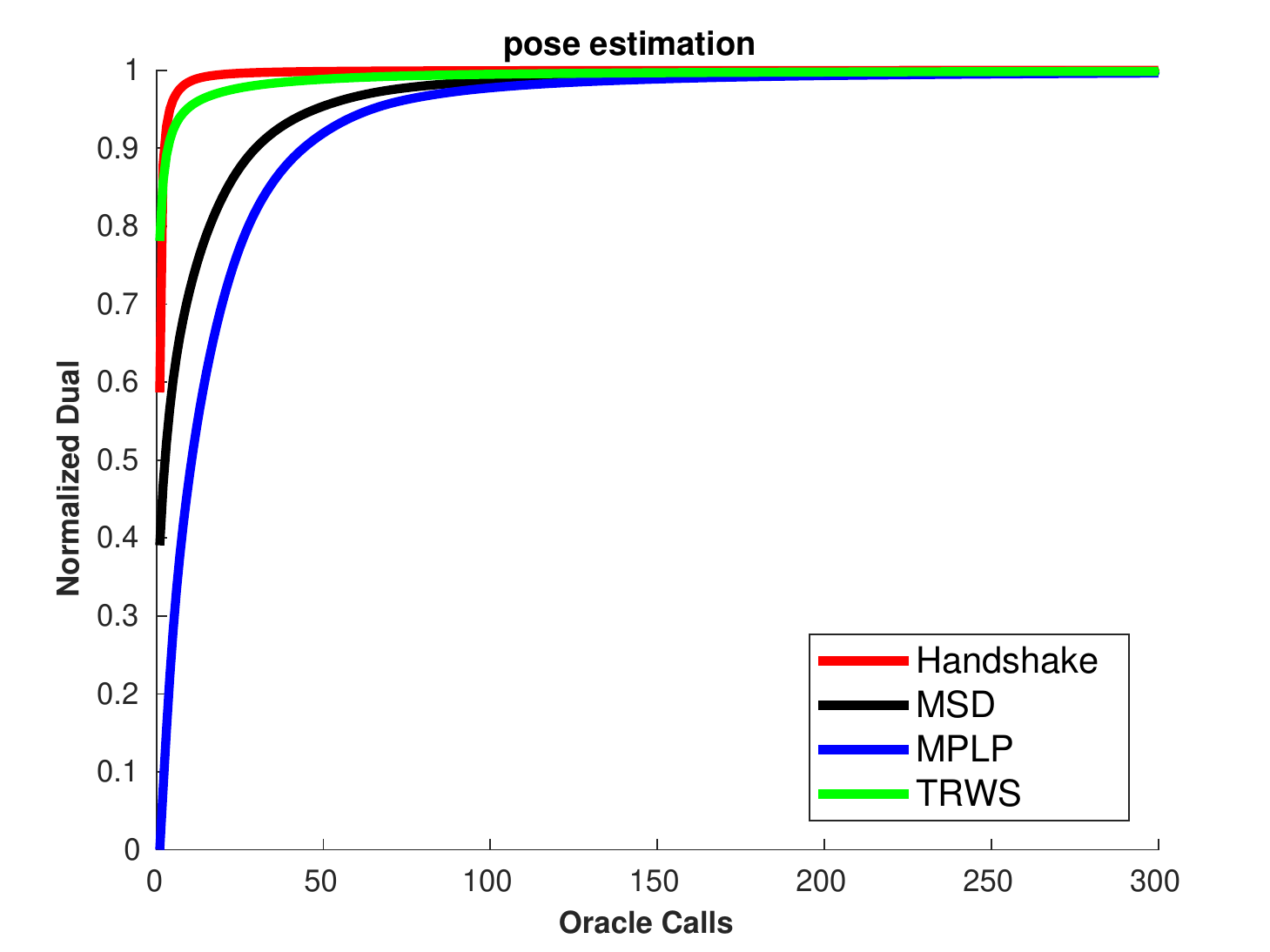}}
\end{subfigure}
\begin{subfigure}[Color-Seg]{\includegraphics[width=0.30\linewidth]{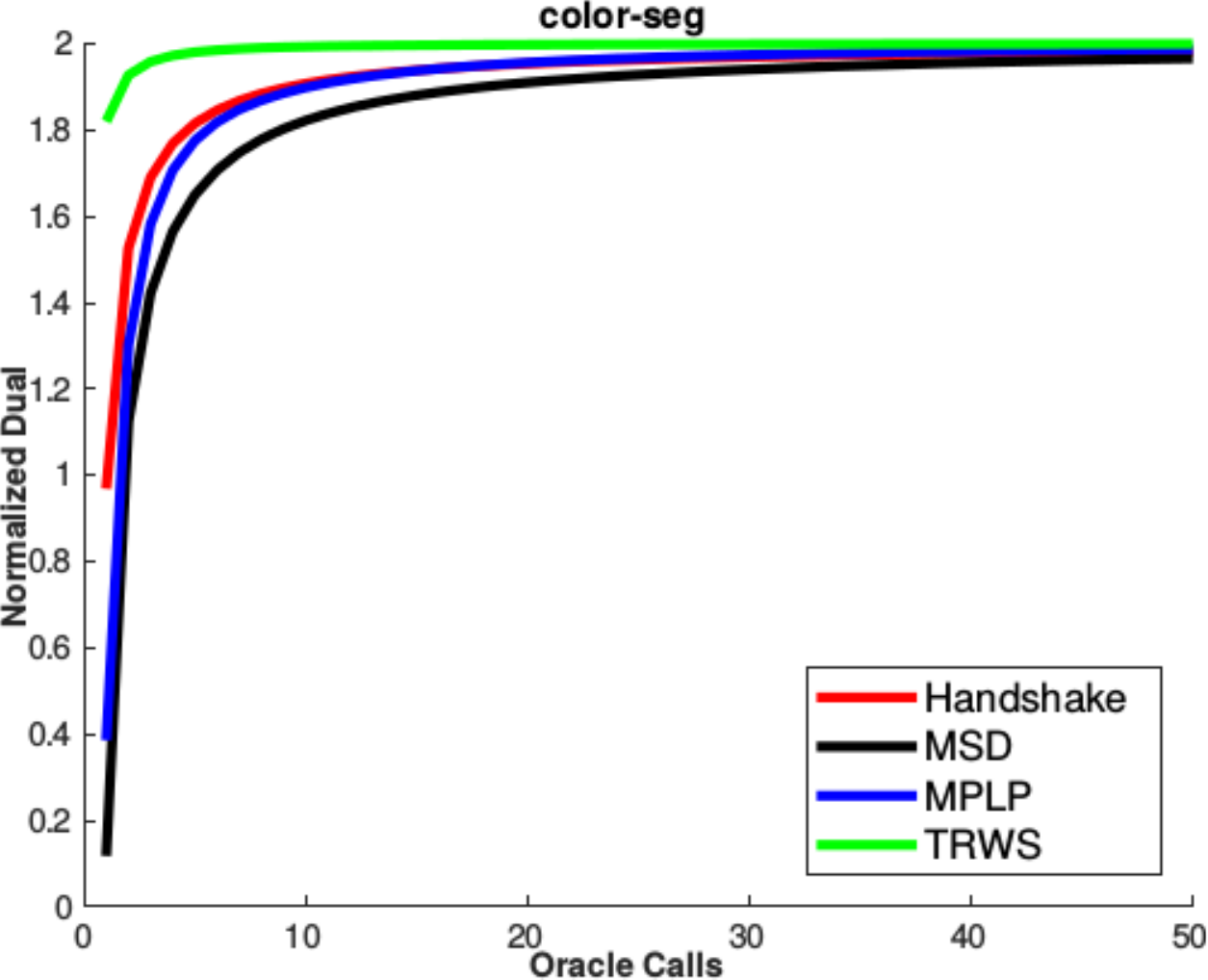}}
\end{subfigure}
\caption{Dual vs Oracle Calls: The algorithms we compare against have different message-passing schemes and end up doing different amounts of work per iteration. Thus instead of doing a straight forward plot of Dual versus iterations, we normalize the algorithms by the amount of work done per iteration. This is done by dividing each algorithm by it's oracle complexity, which measures the amount of work done per iteration in terms of messages passed. (a), (b) and (c) are dense graphs where \HaSh outperforms all other algorithms by a substantial margin. (d) and (e) on the other hand are on sparse graphs and \TRWS is dominant. Results are averaged over the entire dataset. The dual is normalized to 1 for equal weighing of every instance in the dataset.
}

\label{fig:dual_v_iters}
\end{figure*}

\begin{figure*}
\centering
\begin{subfigure}[Worms]{
\includegraphics[width=0.30\linewidth]{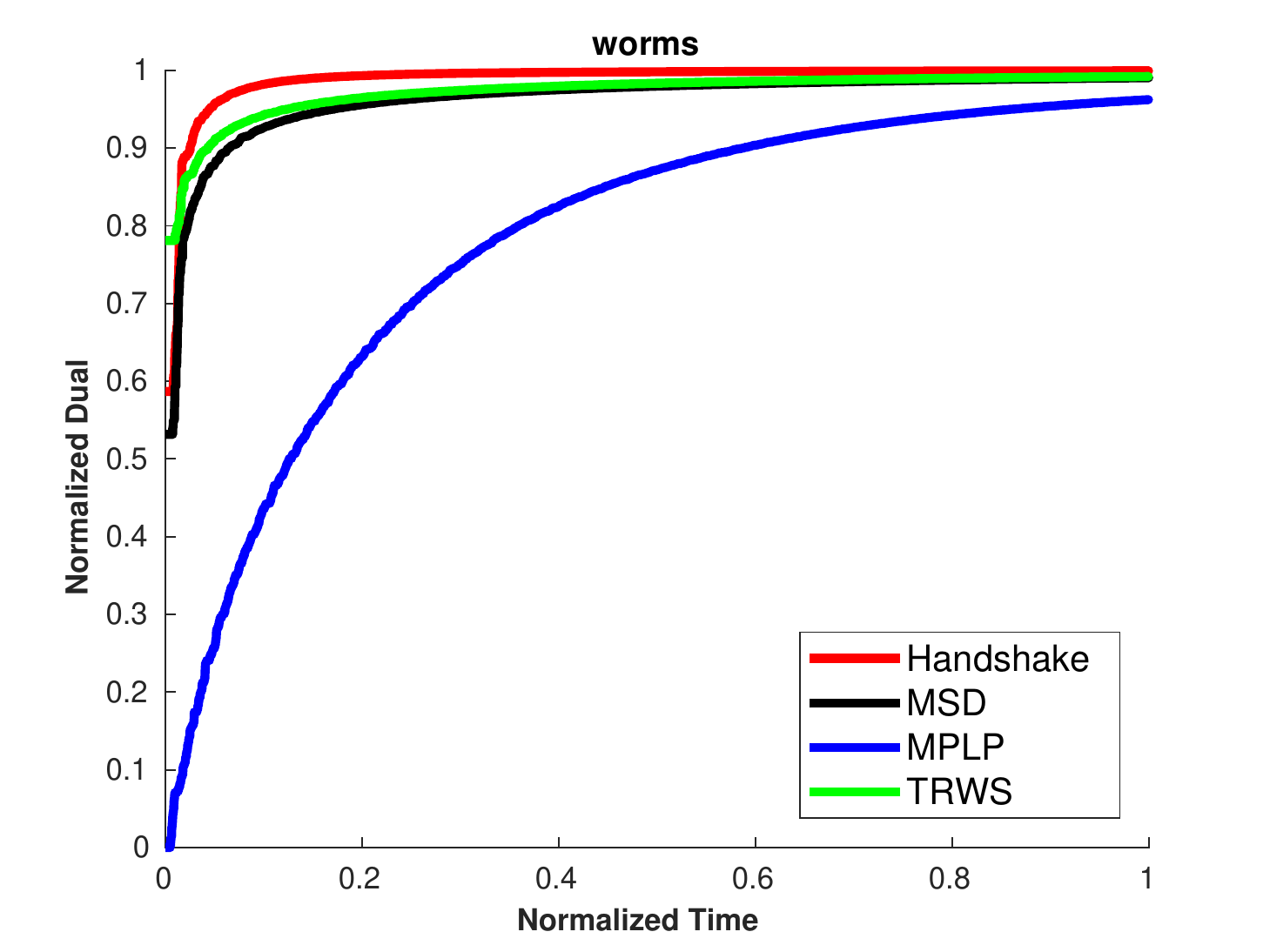}
}
\end{subfigure}
\begin{subfigure}[Pose]{
\includegraphics[width=0.30\linewidth]{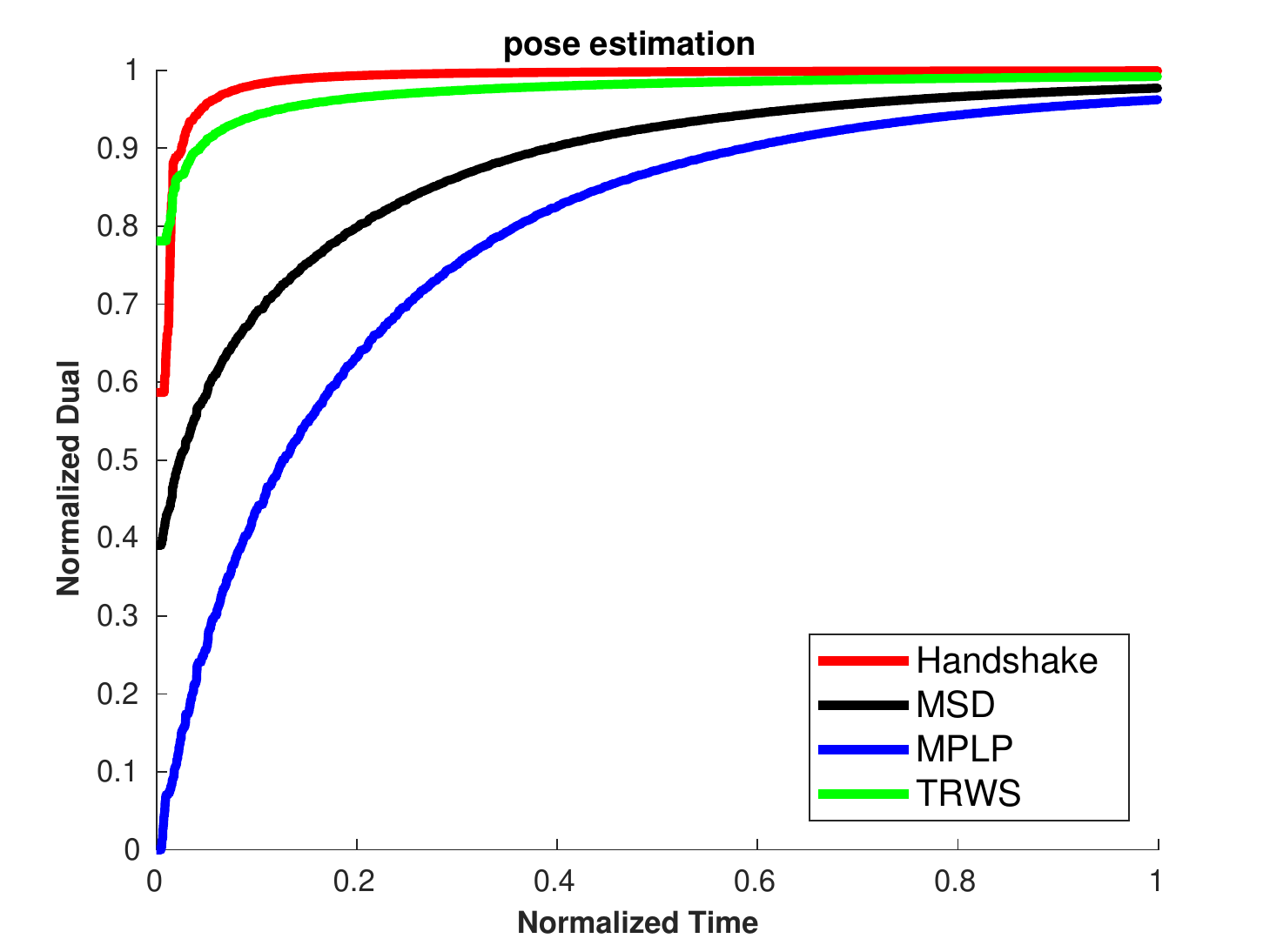}
}
\end{subfigure}
\begin{subfigure}[Color-Seg]{
\includegraphics[width=0.30\linewidth]{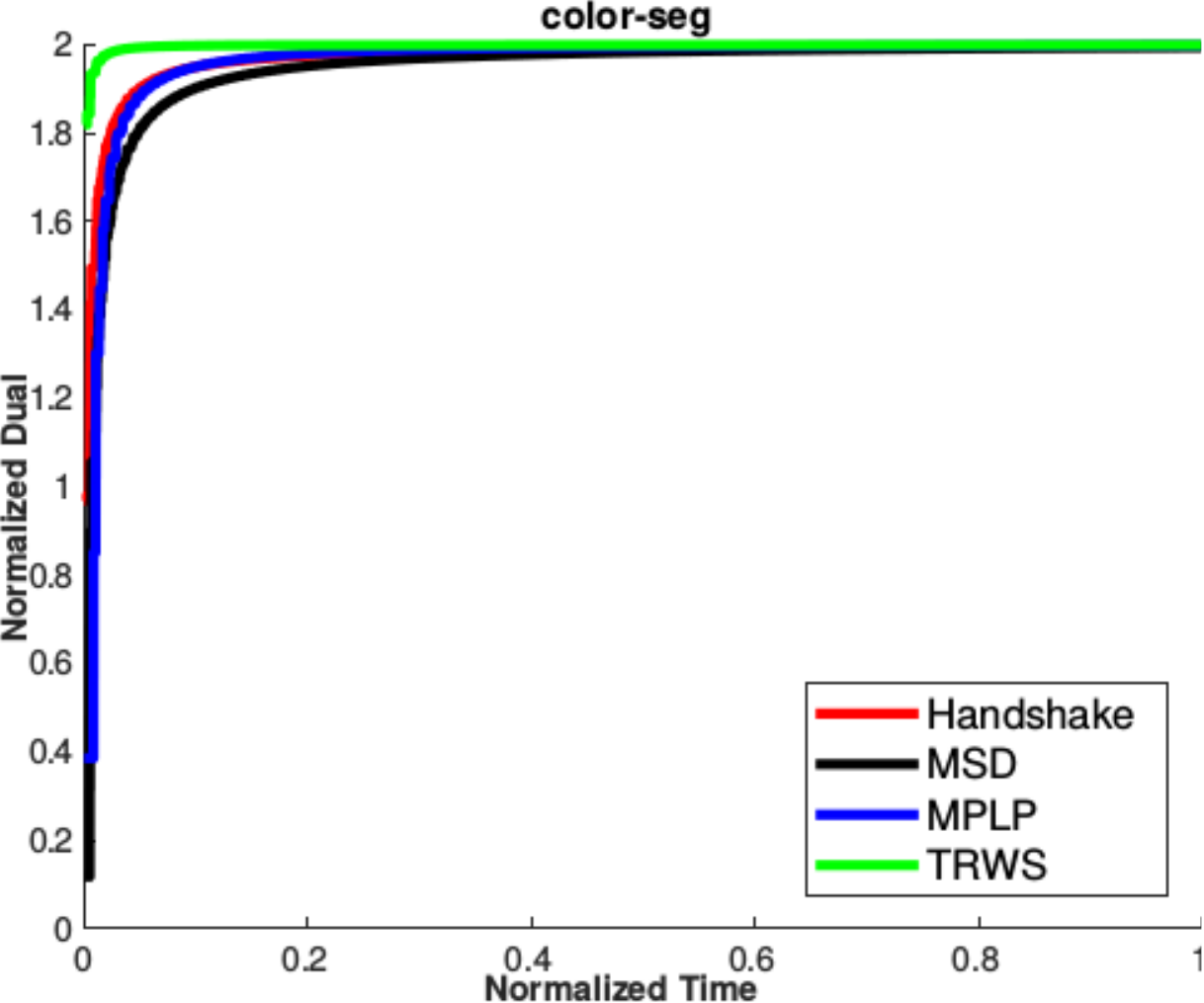}
}
\end{subfigure}

\caption{Dual vs Time (Single Threaded):  Fig. shows dual as a function of time for the single threaded versions of all the algorithms. Following the pattern in Fig. \ref{fig:dual_v_iters} for dense graphs (a), (b) and (c) \HaSh dominates all other algorithms by a considerable margin. For sparse graphs, (d) and (e) \TRWS is the fastest. Both the dual $D(\theta^\phi)$ and time have been averaged over the entire dataset and normalized to 1.The curves have been normalized such that each instance of the dataset is weighed equally.}
\label{fig:dual_v_time}
\end{figure*}




\begin{figure*}
\centering
\begin{subfigure}[100$\%$]{\includegraphics[width=0.24\linewidth]{plots/sparse_mrfs/sparse_100_dual_iters}}
\end{subfigure}
\begin{subfigure}[80$\%$]{\includegraphics[width=0.24\linewidth]{plots/sparse_mrfs/sparse_080_dual_iters}}
\end{subfigure}
\begin{subfigure}[60$\%$]{\includegraphics[width=0.24\linewidth]{plots/sparse_mrfs/sparse_060_dual_iters}}
\end{subfigure}
\begin{subfigure}[40$\%$]{\includegraphics[width=0.24\linewidth]{plots/sparse_mrfs/sparse_040_dual_iters}}
\end{subfigure}

\begin{subfigure}[20$\%$]
{\includegraphics[width=0.24\linewidth]{plots/sparse_mrfs/sparse_020_dual_iters}}
\end{subfigure}
\begin{subfigure}[10$\%$]
{\includegraphics[width=0.24\linewidth]{plots/sparse_mrfs/sparse_010_dual_iters}}
\end{subfigure}
\begin{subfigure}[5$\%$]
{\includegraphics[width=0.24\linewidth]{plots/sparse_mrfs/sparse_005_dual_iters}}
\end{subfigure}
\begin{subfigure}[1$\%$]
{\includegraphics[width=0.24\linewidth]{plots/sparse_mrfs/sparse_001_dual_iters}}
\end{subfigure}
\caption{\texttt{\textbf{Degradation With Sparsity (Dual vs Iterations): }}(a)-(h) show graphs with decreasing average connectivity given as percentage of possible edges in figure subcaption. In (a)-(f) \HaSh outperforms \TRWS. Handshake is resilient to graph sparsification even when 90$\%$ of the edges have been removed. For (g) and(h) \TRWS outperforms handshake.}
\label{fig:perf_deg_iter}
\end{figure*}

\begin{figure*}
\centering
\begin{subfigure}[Worms]{\includegraphics[width=0.32\linewidth]{plots/new_plots/worms/e_plot_worms_speedup}}
\end{subfigure}
\begin{subfigure}[Pose]
{\includegraphics[width=0.32\linewidth]{plots/new_plots/protein/e_plot_proteinfolding_speedup}}
\end{subfigure}
\begin{subfigure}[Colorseg]
{\includegraphics[width=0.32\linewidth]{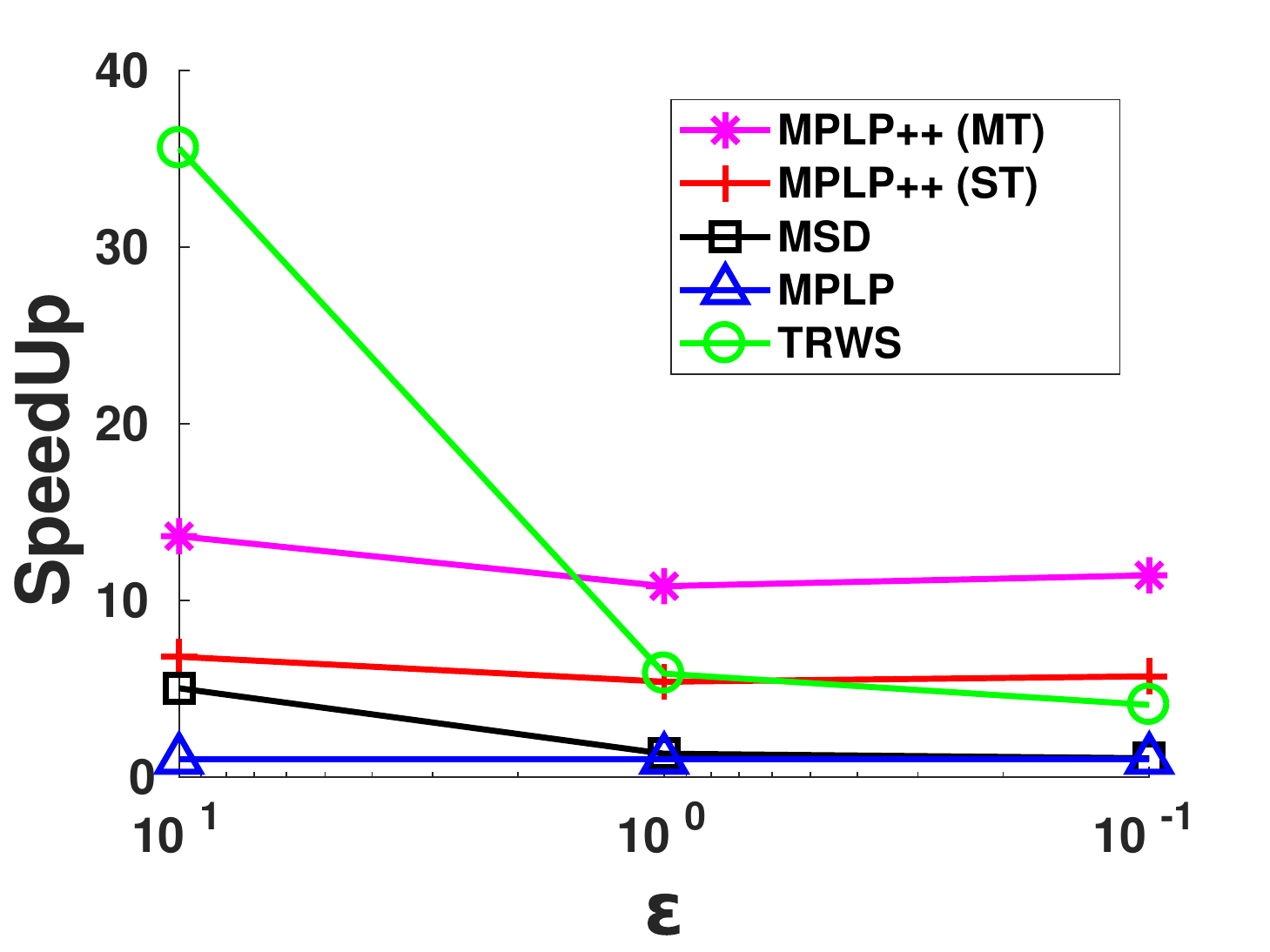}}
\end{subfigure}

\caption{Speedup comparisons with MPLP as the reference algorithm. The algorithms are run until
 Speedups are plotted for 3 $\epsilon$ 0.1,1.0}
\label{fig:speedups_dense}
\end{figure*}

\section{Formal Proofs}

\label{sec:proofs}

\subsection{Additional Notation} \label{sec:add-notation}
To reduce clutter in the proofs we define relevant short-forms here. The set $[n]$ is the set of the first $n$ natural numbers, \ie $[n]=\{1,\hdots,n\}$.  $[m][n]$ denotes the Cartesian product $[m] \times[n]$. We denote the same reparametrized unaries of two different algorithms $\mathcal{A}$ and $\mathcal{B}$ as $\ThA{u}{A}$  and $\ThA{u}{B}$. $\theta \in \mathbb{R}^{\mathcal{I}}$ is the vector stacked up of unary $\theta_u$ and pairwise $\theta_{uv}$ potentials. Let $\mathcal{O}_{u}^{\varepsilon}(\theta) =\{s \mid \theta_u(s) \leq \min_{s'}
\ \theta_u(s') + \varepsilon \}$ be the set of labellings within $\varepsilon>0$ of the optimal $\theta_u$. $\mathcal{O}_{uv}^{\varepsilon}(\theta)$ is similarly defined for $\theta_{uv}$. Let also $\mathcal{O}^{\varepsilon}(\theta)=\{\mathcal{O}^{\varepsilon}_u(\theta) \mid  \ \forall u \in \SV\} \cup \{\mathcal{O}^{\varepsilon}_{uv}(\theta) \mid  \ \forall uv \in \SE\}$. 

\begin{definition}
Tolerance factor $\varepsilon$ is the minimum value for which $\mathcal{O}^{\varepsilon}$ contains a consistent labelling, \ie one with node-edge agreement. 
\end{definition}

$\varepsilon$ is a function of $\theta$ and will be written sometimes as $\varepsilon(\theta)$. $\mathcal{O}^{\varepsilon}(\theta)$ can also be thought of as the subset of unary and pairwise labels that are within $\varepsilon$ of the optimal labelling.

$\mathcal{O}^{0}_{u}(\theta)=\{s \mid \theta_u(s)= \min_{s' \in \SY}
\theta_u(s')$\}, then represents the set of optimal labellings. $\mathcal{O}^{0}_{uv}(\theta)$ can be similarly defined for pairwise potentials $\theta_{uv}$. Then, $\mathcal{O}^{0}(\theta)=\{\mathcal{O}^{0}_u(\theta) \mid u \in \SV \} \cup \{\mathcal{O}^{0}_{uv}(\theta) \mid uv \in \SE \}$.

The {\tt MPLP++} operator $\mathcal{H}$ can act both on $g_{uv}$ like pairwise costs and on the entire set of costs $\theta \in \mathbb{R}^{\SI}$. In the former case, it is exactly as defined in Eqn.~\cref{equ:handshake-update}. The latter case corresponds to an iteration of $\mathcal{H}$. The $i$-times composition operation of $\mathcal{H}$ on $\theta$ denotes $i$ iterations of $\mathcal{H}$ on $\theta$,  \ie $\underbrace{\mathcal{H}(\mathcal{H}(...(\mathcal{H}(\theta))))}_{\text{i times}}=\mathcal{H}^{i}(\theta)$. Which is the case would be clear from the argument the $\mathcal{H}$ operator takes. Also, to denote the resulting cost vector after $i$ iterations of $\mathcal{H}$ on $\theta$, we use notation $\theta^{i}=\mathcal{H}^{i}(\theta)=\mathcal{H}(\theta^{i-1})$.

Let $\Theta_{un}(\phi,\kappa)$ be a function that measures the max absolute difference between unary reparameterizations $\theta^{\phi}$ and $\theta^{\kappa}$, defined by $\Theta_{un}(\phi,\kappa)= \max_{u \in \SV, s \in \SY}| \theta^{\phi}_{u}(s) - \theta^{\kappa}_{v}(s)|$. 
Likewise for pairwise costs, $\Theta_{pw}(\phi,\kappa)=\max_{uv \in \SE, st \in \SY^2} | \theta^{\phi}_{uv}(s,t) - \theta^{\kappa}_{uv}(s,t)|$.

We also assume that both unary and pairwise costs have been normalized, \ie $\min_{s \in \SY} \theta_u(s)=0$ and $\min_{st\in\SY^{2}}\theta_{uv}(s,t)=0$. A consequence of normalization is $\theta_u \geq 0$ and $\theta_{uv} \geq 0$. The subtracted cost does not affect the labelling and is added to a constant term.

\Pbcadominances*
\begin{proof}
$\mathbfcal{H} \geq \mathbfcal{M}$: Let $g_{uv}=\theta_{u}+\theta_{v}+\theta_{uv}$. Consider the first line of ~\cref{equ:handshake-update}. 

\begin{equation}
\theta^\H_u(s) \textstyle := \theta^\M_u(s),\quad \theta^\H_v(s) \textstyle := \theta^\M_v(s),\ \forall s \in \SY\
\end{equation}

The first line assigns to the reparameterized {\tt MPLP++} unaries, ($\theta^\H_u$, $\theta^\H_v$) the unaries resulting from applying the {\tt MPLP} update to $g_{uv}$. At this point we have $\H=\M$.  
\par\noindent
Now, consider the subsequent equations of $\H$

\begin{equation}
\begin{aligned}
\theta^\H_v(t) & \textstyle := \theta^\H_v(t) + \min_{s \in \Y}[g_{uv}(s,t) - \theta^\H_v(t)- \theta^\H_u(s)],\ \forall t \in \SY\,\\
\notag
\theta^\H_u(s) & \textstyle := \theta^\H_u(s) + \min_{t \in \Y}[g_{uv}(s,t) - \theta^\H_v(t)- \theta^\H_u(s)],\ \forall s \in \SY\,.
\end{aligned}
\end{equation}

To $\theta^\H_v(t)$, a non-negative quantity $\min_{s \in \Y}[g_{uv}(s,t) - \theta^\H_v(t)- \theta^\H_u(s)]$ is added. Similarly, for  $\theta^\H_u$. Thus, $\H\geq\M$.






%
%
%
%

\textbf{$\mathbfcal{M} \geq \mathbfcal{U}$}:$ \ThA{u}{M}(s) = \textrm{min}_{t'} g_{uv}(s,t') \geq  \textrm{min}_{s',t'} g_{uv}(s',t') =   \ThA{u}{U}(s)$. Likewise, $ \ThA{v}{M}(t) = \textrm{min}_{s'} g_{uv}(s',t) \geq  \textrm{min}_{s',t'} g_{uv}(s',t') =   \ThA{v}{U}(t)$. Thus, $\mathcal{M} \geq \mathcal{U}$. \QED
\end{proof}

\Pmonotone*
\begin{proof}
Let	 $\theta_u \geq \theta'_u$ and $\theta_v \geq \theta'_v$. We define $g_{uv}=\theta_u+\theta_v+\theta_{uv}$ and $g'_{uv}=\theta'_u+\theta'_v+\theta_{uv}$.  Thus we have $g_{uv} \geq g'_{uv}$. 

\textbf{$\mathcal{U}$ is monotonous:}\newline
Performing update $\gamma_{\mathcal{U}}[g_{uv}] \rightarrow (\theta^{\SU}_u,\theta^{\SU}_v)$ (Eqn.~\cref{equ:uniform-update}) yields

\begin{align}
\theta^{\SU}_u(s) = \theta^{\SU}_v(t):= \underset{s',t'}{\textrm{min}} \ \frac{1}{2}  \  g_{uv}(s',t')
\end{align}

Likewise  $\gamma_{\mathcal{U}}[g'_{uv}] \rightarrow (\theta'^{\SU}_u,\theta'^{\SU}_v)$ yields

\begin{align}
\theta'^{\SU}_u(s) = \theta'^{\SU}_v(t):= \underset{s',t'}{\textrm{min}} \ \frac{1}{2}  \  g'_{uv}(s',t')
\end{align}

As  $g_{uv} \geq g'_{uv}$, we have  $\min_{s't'} \  g_{uv}(s',t') \geq \min_{s't'} \ g'_{uv}(s',t')$. This implies, $\theta^{\SU}_u \geq \theta'^{\SU}_u, \ \ \  \theta^{\SU}_v \geq \theta'^{\SU}_v$. Hence proved.

\vspace{0.5cm}

\textbf{$\SM$ is monotonous:}\newline
Performing update $\gamma_{\mathcal{M}}[g_{uv}] \rightarrow (\theta^{\SM}_u,\theta^{\SM}_v)$ (Eqn.~\ref{equ:MPLP-update}) yields

\begin{gather*}
\theta^{\SM}_u(s) := \frac{1}{2} \ \underset{t'}{\textrm{min}} \ \ g_{uv}(s,t'), \ \ \ 
\theta^{\SM}_v(t) := \frac{1}{2} \ \underset{s'}{\textrm{min}} \ \ g_{uv}(s',t)\\
\end{gather*}

For $\gamma_{\mathcal{M}}[g'_{uv}] \rightarrow (\theta'^{\SM}_u,\theta'^{\SM}_v)$ yields
\begin{gather*}
\theta'^{\SM}_u(s) := \frac{1}{2} \ \underset{t'}{\textrm{min}} \ \ g'_{uv}(s,t'), \ \ \  
\theta'^{\SM}_v(t) := \frac{1}{2} \ \underset{s'}{\textrm{min}} \ \ g'_{uv}(s',t)\\
\end{gather*}

As  $g_{uv} \geq g'_{uv}$, we have $\min_{t'} \ g_{uv}(s,t') \geq \min_{t'} \  g'_{uv}(s,t') \ \ \& \ \  \underset{s'}{\textrm{min}} \ \ g_{uv}(s',t) \geq \underset{s'}{\textrm{min}} \ \  g'_{uv}(s',t)$. This  implies $\theta^{\SM}_u \geq \theta'^{\SM}_u, \ \  \theta^{\SM}_v \geq \theta'^{\SM}_v$. Hence proved.

\vspace{0.5cm}

\textbf{$\mathcal{H}$ is not monotonous:}\newline
To prove that $\mathcal{H}$ is not monotonous we show a counter-example to the monotonous condition stated in theorem~\ref{thm:mplp-is-monotonous}. 

Consider the following unary and pairwise costs, 
\begin{gather*}
\theta_u=\begin{pmatrix}
4 \\
0
\end{pmatrix}  \ \ 
\theta_v=\begin{pmatrix}
2 \\
0
\end{pmatrix}  \ \ 
\theta'_u=\begin{pmatrix}
0 \\
0
\end{pmatrix}  \ \ 
\theta'_v=\begin{pmatrix}
0 \\
0
\end{pmatrix}  \ \ 
\theta_{uv}=\begin{pmatrix}
0 & 1 \\
7 & 5
\end{pmatrix} 
\end{gather*}

These costs satisfy $\theta_u \geq \theta'_u$ and $\theta_v \geq \theta'_v$.


Now, applying the $\H$ operation to $g_{uv}$ and $g'_{uv}$, we get the following potentials

\begin{gather*}
\theta^{\mathcal{H}}_u=\begin{pmatrix}
2.5 \\
2.5
\end{pmatrix}  \ \ 
\theta^{\mathcal{H}}_v=\begin{pmatrix}
3.5 \\
2.5
\end{pmatrix} 
\theta'^{\mathcal{H}}_u=\begin{pmatrix}
0 \\
4
\end{pmatrix}  \ \ 
\theta'^{\mathcal{H}}_v=\begin{pmatrix}
0 \\
1
\end{pmatrix}  \ \ 
\end{gather*}

We thus have $\theta^{\mathcal{H}}_u \ngeq \theta'^{\mathcal{H}}_u$, providing the necessary counter-example. \QED
\end{proof}

\Tdominance*
\begin{proof}
From the definition of dominance we have, if $\gamma$ dominates $\mu$ ($\gamma \geq \mu$) then $\gamma[g_{uv}] \geq \mu[g_{uv}], \ \forall g_{uv}$. 

By the definition of {\em monotonous}, if  $\mu$ is monotonous, $g_{uv}^{1} \geq g_{uv}^{2} \implies \mu[g_{uv}^{1}] \geq \mu[g_{uv}^{2}]$.


Now during the $1^{st}$ iteration we have three cases, 

\begin{itemize}
\item Case \textbf{A}: Nodes $u$ and $v$ of edge $uv$ have not been reparametrized before. 
\item Case \textbf{B}: Nodes $u$ and $v$ of edge $uv$ have been reparametrized before. 
\item Case \textbf{C}: Only node $u$ or $v$ of edge $uv$ have been reparametrized before. 
\end{itemize}

If case $\textbf{A}$ we have by dominance $\gamma[g_{uv}] \geq \mu[g_{uv}]$. 

If case $\textbf{B}$, the unaries $u$ and $v$ have been reparametrized. Let the reparametrized unaries for $\gamma$ be $(\theta^{\gamma}_u,\theta^{\gamma}_v)$ and for $\mu$ be $(\theta^{\mu}_u,\theta^{\mu}_v)$. From $\gamma \geq \mu$ we know $\theta^{\gamma}_u \geq \theta^{\mu}_u$ and $\theta^{\gamma}_v \geq \theta^{\mu}_v$. Then, $g^{\gamma}_{uv}=\theta^{\gamma}_u+\theta^{\gamma}_v+\theta_{uv}$ and $g^{\mu}_{uv}=\theta^{\mu}_u+\theta^{\mu}_v+\theta_{uv}$. It follows $g^{\gamma}_{uv} \geq g^{\mu}_{uv}$. Consider the chain of inequalities

\begin{equation}	
\gamma[g^{\gamma}_{uv}] \geq \mu[g^{\gamma}_{uv}] \geq \mu[g^{\mu}_{uv}]
\end{equation}

The first is true by $\gamma \geq \mu$ dominance. The second is true by $\mu$-monotonicity. Thus for case B also, $\gamma$ results in reparametrized unaries that are co-ordinate wise greater than $\mu$. 

Case $\textbf{C}$, can be proven in much the same way as case $B$.

\QED

\end{proof}

\TconvergenceAC*
\label{sec:proof-arc-consistency}

To prove node-edge agreement we have to show that 

\begin{equation}
\underset{i \rightarrow \infty}{\textrm{lim}} \varepsilon(\mathcal{H}^{i}(\theta))=0
\end{equation}

where $\varepsilon$ is the tolerance factor defined in~\cref{sec:add-notation}.

 By saying that {\tt MPLP++} converges to node-edge agreement, we mean that as the algorithm progresses $\varepsilon$ tends to $0$ and the set of labels belonging to $\mathcal{O}^{\varepsilon}$ is sequentially pruned only to leave an optimal labelling satisfying the node-edge agreement condition, converting $\mathcal{O}^{\varepsilon}$ to $\mathcal{O}^{0}$.

The proof is dependent on several lemmas which are sequentially proved. 

\begin{lemma}
\label{prop:Fcont}
$\mathcal{H}$ is a continuous function.
\end{lemma}

\begin{proof}
Stated differently, we show the proposed reparameterizations in this paper are continuous. To generalize the result across all proposed reparameterizations we recall the idea of an {\em oracle call} from the main paper, \ie operations of the type  $\min_{t \in \Y}g_{uv}(s,t)$,  $\forall s \in \SY$. We show the continuity of the first equation of the {\tt MPLP++} operation only ($1^{st}$ equation of Eqn.~\cref{equ:handshake-update} in the main paper). The other equations of {\tt MPLP++} and other algorithms can be proved similarly.

We prove continuity for only one label of one unary cost (label $s$ and unary $u$ ). Let $\delta>0$. Similar proofs hold for all other unaries and pairwise costs. Consider two edges denoted by the triplet $(\theta_u,\theta_v,\theta_{uv})$ and $(\theta_u',\theta_v,\theta_{uv})$, such that 

\begin{equation} \label{equ:F-cont1}
|\theta_u'(s)-\theta_u(s)| < \delta
\end{equation}

Then, $g_{uv}:=\theta_u+\theta_v+\theta_{uv}$ and $g_{uv}':=\theta_u'+\theta_v+\theta_{uv}$. By~\cref{equ:F-cont1}, we have

\begin{equation}
\forall t \ \  |g_{uv}'(s,t)-g_{uv}(s,t)| < \delta
\end{equation} \label{equ:F-cont2}

Applying the first operation of {\tt MPLP++} to $g_{uv}$ and $g_{uv}'$, we get

\begin{align}
\ThA{u}{H}(s):=\frac{1}{2} \  \underset{t \in \SY}{\textrm{min}} \  g_{uv}(s,t)  \label{equ:Fcont3}\\
\theta'^{\mathcal{H}}_u(s):=\frac{1}{2} \  \underset{t \in \SY}{\textrm{min}} \ g'_{uv}(s,t) \label{equ:Fcont4}
\end{align}

Let $\mid \theta'^{\mathcal{H}}_u(s)-\theta^{\mathcal{H}}_u(s) \mid< \nu$, then by choosing $\delta:=\nu/2$, we have

\begin{equation}
\forall \theta\in \mathbb{R}^{\SI}, \ \forall \nu>0, \exists\delta>0, \ \forall \theta'\in \mathbb{R}^{\SI}, \  \mid \theta_u'(s)-\theta_u(s) \mid < \delta \implies \mid \theta'^{\mathcal{H}}_u(s)-\theta^{\mathcal{H}}_u(s) \mid< \nu
\end{equation} 
\QED

\end{proof}

\begin{lemma}
\label{prop:Dcont}
The dual function $D$ is a continuous function in it's input $\theta$.
\end{lemma}

\begin{proof}
\label{proof:Dcont}
Consider two different reparameterizations $\theta^{\phi},\theta^{\kappa}\in\mathbb{R}^{\SI}$. To prove $D(.)$ is a continuous function, it suffices to show $\forall \nu >0, \exists \delta>0$, such that $|\theta^{\phi} - \theta^{\kappa}|<\delta \implies |D(\theta^{\phi}) - D(\theta^{\kappa})|<\nu$. To do so, we need to recall from ~\cref{sec:add-notation}, $\Theta_{pw}(\phi,\kappa)=\max_{uv\in\SE,st\in\SY^2} |\theta^{\phi}_{uv}(s,t)-\theta^{\kappa}_{uv}(s,t)|$ and $\Theta_{un}(\phi,\kappa)=\max_{u\in\SV,s\in\SY} |\theta^{\phi}_{u}(s)-\theta^{\kappa}_{u}(s)|$. 

We then have
{ \small
\begin{gather*}
\mid D(\theta^\phi)-D(\theta^\kappa)\mid=\mid(\sum_{u \in \mathcal{V}}\underset{s}{\textrm{min}} \theta^{\phi}_u(s)+\sum_{uv \in \mathcal{E}}\underset{(s,t)}{\textrm{min}}\theta^{\phi}_{uv}(s,t))-(\sum_{u \in \mathcal{V}}\underset{s}{\textrm{min}}\theta^{\kappa}_u(s)+\sum_{uv \in \mathcal{E}}\underset{s,t}{\textrm{min}}\theta^{\kappa}_{uv}(s,t))\mid\\
=\mid \sum_{u \in \mathcal{V}}(\underset{s}{\textrm{min}}\theta^{\phi}_u(s)-\underset{s}{\textrm{min}}\theta^{\kappa}_u(s)) \ 
+\sum_{uv \in \mathcal{E}}(\underset{s,t}{\textrm{min}}\theta^{\phi}_{uv}(s,t))-\underset{s,t}{\textrm{min}}\theta^{\kappa}_{uv}(s,t))\mid\\
\leq\mid \sum_{u \in \mathcal{V}}(\underset{s}{\textrm{min}}\theta^{\phi}_u(s)-\underset{s}{\textrm{min}}\theta^{\kappa}_u(s)) \mid  
+ \mid \sum_{uv \in \mathcal{E}}(\underset{s,t}{\textrm{min}}\theta^{\phi}_{uv}(s,t))-\underset{s,t}{\textrm{min}}\theta^{\kappa}_{uv}(s,t))\mid\\
\leq \mid \sum_{u \in \mathcal{V}}\Theta_{un}(\phi,\kappa) \mid + \mid \sum_{uv \in \mathcal{E}}\Theta_{pw}(\phi,\kappa)\mid \ 
\leq \mid \mathcal{V}\mid \mid\Theta_{un}(\phi,\kappa) \mid + \mid \mathcal{E} \mid \mid\Theta_{pw}(\phi,\kappa)\mid\\
\end{gather*}
}
Thus for  $|D(\theta^\phi)-D(\theta^\kappa)| < \nu$, we need to choose a $\delta$ as a function of $|\Theta_{pw}| < \frac{1}{|\mathcal{E}|}$ and $|\Theta_{un}| < \frac{1}{|\mathcal{V}|}$. This can be done as $\H$ is continuous (by proposition~\ref{prop:Fcont}) and pointwise-maxima are also continuous~\cite{boyd2004convex}. \QED

\end{proof}

\begin{lemma}
\label{prop:eps-is-cont}
Tolerance factor $\varepsilon$ is continuous on $\theta$.
\end{lemma}

\begin{proof}
Let $\theta^{\phi}$ and $\theta^{\kappa}$ be two reparameterizations. We prove $\varepsilon$ is continuous for unaries only. The proof for pairwise terms is similar. Let $\Theta_{un}(\phi,\kappa)=\delta$. From the definition of $\Theta_{un}$ we have $\max_{u\in\V,s\in\Y}|\theta_u^{\phi}(s)-\theta_u^{\kappa}(s)|=\delta$. Thus, 

\begin{gather}
|\theta_u^{\phi}(s)-\theta_u^{\kappa}(s)|\leq\delta \label{equ:eps-cont-1}
\end{gather}

~\cref{equ:eps-cont-1} can be rewritten as

\begin{equation}
\label{equ:eps-cont-4}
\theta_u^{\kappa}(s) - \delta \leq \theta_u^{\phi}(s) \leq \theta_u^{\kappa}(s) + \delta
\end{equation}

Now consider the set $\mathcal{O}^{\varepsilon}_u(\theta)=\{s| \theta_u(s) \leq  \min_{s'} (\theta_u(s') +\varepsilon(\theta)) \}$. For a unary in $\mathcal{O}^{\varepsilon}_u(\theta)$

\begin{equation}
\label{equ:eps-cont-5}
\theta^{\phi}_u(s) \leq  \underset{s'}{\textrm{min}} ( \theta^{\phi}_u(s')+\varepsilon(\theta) )
\end{equation}
 
Substituting~\cref{equ:eps-cont-4} in~\cref{equ:eps-cont-5} we get, 

\begin{gather}
\theta^{\kappa}_u(s) - \delta \leq  \underset{s'}{\textrm{min}} ( \theta^{\kappa}_u(s') +\delta +\varepsilon(\theta) ) \implies
\theta^{\kappa}_u(s)  \leq  \underset{s'}{\textrm{min}} ( \theta^{\kappa}_u(s') +\varepsilon(\theta) +2\delta) \label{equ:eps-cont-6}
\end{gather} 

Thus if $s$ satisfies~\cref{equ:eps-cont-5}, it also satisfies~\cref{equ:eps-cont-6}, which has $\varepsilon' \geq \varepsilon + 2\delta$. \QED
 
\end{proof}

\begin{lemma}
\label{prop:monotonic-increase}
 $D(\mathcal{H}^{i+1}(\theta))\geq D(\mathcal{H}^{i}(\theta)),\forall i$, \ie the {\tt MPLP++} reparametrization never decreases the dual $D$.
\end{lemma}
\begin{proof}
Let's consider $D$ to be fixed for all variables except the block $D_{uv}$. Also, we assume that the costs have been normalized, \ie $\min_{s\in\SY}\theta_{u}(s)=0$, $\min_{t\in\SY}\theta_{v}(t)=0$ and $\min_{st\in\SY^2}\theta_{uv}(s,t)=0$. So, we have $D_{uv}(\theta)=0$. We thus have to show that $D_{uv}(\mathcal{H}(\theta))\geq D_{uv}(\theta)=0$. 

The aggregated potential $g_{uv}=\theta_u+\theta_v+\theta_{uv}$ can be written in the form of a $\SY \times\SY$ matrix 

\[
\begin{bmatrix}
r_1+\Delta_{1,1} & \hdots & r_1+\Delta_{1,|\SY|}\\
\vdots & \ddots & \vdots \\
r_{|\SY|}+\Delta_{|\SY|,1} & \hdots & r_{|\SY|}+\Delta_{|\SY|,|\SY|}
\end{bmatrix}
\]

where $r_s$ is the {\em row minimum} and $\Delta_{s,t}\geq 0$. So, $\Delta_{s,t}$ is $0$ for all elements of the row that are equal to $r_s$. As each element of row $s$ of $g_{uv}$ contains $\theta_u(s)$, we know that $r_s\geq \theta_u(s)$, $\forall s \in \Y$.

Now, consider the $1^{st}$ equation of the ${\tt MPLP++}$ operation~\cref{equ:handshake-update}

\begin{align*}
\theta^\H_u(s) & \textstyle :=  \frac{1}{2} \min_{t \in \Y}[g_{uv}(s,t)] = \frac{1}{2}\min_{t \in \Y} r_s+\Delta_{s,t}= \frac{r_s}{2},\ \forall s \in \SY\,,\\
\notag
\theta^\H_v(t) & \textstyle := \frac{1}{2} \min_{s \in \Y}[g_{uv}(s,t)] = \frac{1}{2} \min_{s\in\SY}r_s+\Delta_{s,t}, \  \forall t \in \SY\
\end{align*}

It is $r_s\geq\theta_u(s)\geq 0$ and therefore, $\theta^{\H}_u(s)=r_s/2\geq 0$. Likewise, $\theta^{\H}_v(t)=\dfrac{1}{2}\min_{s\in\Y}[r_s+\Delta_{s,t}]$, where $r_s\geq 0$ and $\Delta_{s,t}\geq 0$ $\forall t$, thus $\theta^{\H}_v(t)\geq 0$. Hence, $D(\H(\theta)) \geq 0$. \QED


\end{proof}

\begin{lemma}
\label{prop:fixedpt}
$\mathcal{H}$ converges to a fixed point.
\end{lemma}

\begin{proof}
Initially, the dual bound $D(\theta^{\phi})$ is computed using~\cref{equ:LP-lower-bound} which takes the min over all unaries $\theta_u(s)$ and pairwise terms $\theta_{uv}(s,t)$ individually. Thus, $D(\theta^{\phi})$ is bounded unless for any $\theta_u(s)$ or $\theta_{uv}(s,t)$ all the elements are $\infty$. If the dual is unbounded, then by strong duality~\cite{boyd2004convex} the primal is also unbounded, and the only energy attainable is $\infty$. On the other hand, if $D(\theta^\phi)$ is bounded, by the LP duality theorem, the primal energy $E(y|\theta)$ serves as an upper bound for the $D(\theta^\phi)$. 

We have proven that the BCA-update $\mathcal{H}$ brings about a monotonic improvement of $D(\theta^\phi)$. As each algorithm involves performing these updates over a sequence of edges $e^{i}$ covering the entire graph, we end up at the end of every iteration with a non-decreasing dual $D(\theta^\phi)$. Thus, each algorithm generates an increasing sequence of $D(\theta^\phi)$ which is bounded from above and by the \emph{Monotone Convergence Theorem} for real numbers $\mathbb{R}$~\cite{bartle2000introduction} (Section 3.3) the algorithm converges. \QED
\end{proof}

\begin{lemma}
\label{prop:D-UB}
$\mathcal{H}$ is bounded \wrt $D$: For any $\theta$ there exists an $M \in \mathbb{R}$ such that $D(\mathcal{H}^{i}(\theta)) < M$ for any $i$.
\end{lemma}

\begin{proof}
\label{proof:D-UB}
As $D$ is a dual LP, we know from LP-Duality Theorem ~\cite{boyd2004convex} that $D(\mathcal{H}^{i}(\theta)) \leq E(y|\theta)$, where $E$ is as in~\cref{equ:energy-min} and $y$ is the labelling. Also, an alternative upper bound can be constructed as follows.
Let $M_u=\max_{s}\theta^{\H}_u(s)$, $M_{uv}=\max_{st} \theta^{\H}_{uv}(s,t)$. We then have $|\theta^{\H}_u(s)| \leq M_u$, $\forall s$ and $|\theta^{\H}_{uv}(s,t)| \leq M_{uv}$ $\forall s, t$. This also implies $\min_{s\in\SY}\theta^{\H}_u(s) \leq M_u$ and $\min_{st\in\SY^2}\theta^{\H}_{uv}(s,t) \leq M_{uv}$.

Thus we have

\begin{gather*}
D(\H(\theta))=\sum_{u \in \mathcal{V}}\underset{s}{\textrm{min}} \ \theta^{\H}_u(s)+\sum_{uv \in \mathcal{E}}\underset{s,t}{\textrm{min}} \ \theta^{\H}_{uv}(s,t) \ 
\leq \sum_{u \in \mathcal{V}}M_u + \sum_{uv \in \mathcal{E}} M_{uv}
\end{gather*}

Therefore $D(\H(\theta))$ is always bounded by $M_{\H(\theta)}=\sum_{u \in \mathcal{V}}M_u + \sum_{uv \in \mathcal{E}} M_{uv}$. We know from~\cref{prop:fixedpt} that $\H$ converges to a fixed point, thus after a certain number of iterations no changes occur in $\H^{i}(\theta)$. To compute the required bound, one must simply take $\max_{i}M_{\H^i(\theta)}$ over all reparameterizations that have occurred. \QED
\end{proof}

\begin{lemma}
\label{prop:F-para-UB}
For any $\theta$ there exists $C>0$ such that $||\mathcal{H}^{i}(\theta)|| \leq C||\theta||$ for any $i$.
\end{lemma}

\begin{proof}
\label{proof:F-para-UB}
We start off by showing that the lemma is true for one label of a unary. Let $\theta^{\phi}(s)$ be the reparameterization of $\theta_u(s)$. Consider the edge triplet $(\theta_u,\theta_v,\theta_{uv})$. Let $m_u:=\min_{s\in\SY}\theta_u(s)$, $M_u:=\max_{s\in\SY}\theta_u(s)$, $M_v:=\max_{t\in\SY}\theta_v(t)$ and $M_{uv}:=\max_{st\in\SY^2}\theta_{uv}(s,t)$.

\begin{align*}
\theta^{\mathcal{H}}_u(s) &:= \frac{1}{2}\underset{t}{\textrm{min}} \  g_{uv}(s,t)\\
\theta^{\mathcal{H}}_u(s) &:= \frac{1}{2}\theta_u(s) + \frac{1}{2} \underset{t}{\textrm{min}} \{ \theta_v(t)+\theta_{uv}(s,t) \}\\
\theta^{\mathcal{H}}_u(s) &\leq \frac{1}{2}\theta_u(s) + \frac{1}{2}(M_v+M_{uv})\\
\theta^{\mathcal{H}}_u(s) &\leq  \left(1 +  \frac{M_v+ M_{uv}}{\theta_u(s)}\right)\theta_u(s) \leq \left(1 +  \frac{M_v+ M_{uv}}{m_u}\right)\theta_u(s)
\end{align*}

Thus, we can choose $C^{i}_u=(1+\frac{M_u+M_{uv}}{m_u})$ and get a bound $||\theta^{\mathcal{H}}_u||\leq C^{i}_u||\theta_u||$. 
As we have to find an upper bound $\forall \theta_u$ and $\forall \theta_{uv}$, we repeat the process and then take the max over all $C^{i}_u$ and $C^{i}_{uv}$ obtaining $C^{i}$. This gives us a $C^{i}$ that satisfies $||\theta^{i+1}=\mathcal{H}(\theta^{i})|| \leq C^{i}||\theta^{i}||$. By \cref{prop:fixedpt} after a certain number of iterations $n$, $\theta^{n}$ converges, so to get the required result we simply simply take $C=\prod_{i=1}^{n}C^{i}$, giving $||\H^{i}(\theta)||\leq C||\theta||$ for any $i$. \QED

\end{proof}

\begin{lemma}
\label{prop:OPT-shrinks}
$D(\mathcal{H}(\theta))=D(\theta)$ implies $\mathcal{O}^{0}(\mathcal{H}(\theta)) \subseteq \mathcal{O}^{0}(\theta)$, \ie at a fixed point no new optimal labellings are achieved. If additionally  $\varepsilon(\theta^{\H})>0$, then $\mathcal{O}^{0}(\mathcal{H}(\theta)) \subset \mathcal{O}^{0}(\theta)$, \ie the inequality is  strict.
\end{lemma}

\begin{proof}
\label{proof:OPT-shrinks}
Let $\theta^{\H}=\H(\theta)$, \ie $\theta^{\H}$ is the reparameterized cost vector obtained after reparameterizing the original cost vector $\theta$. We prove the first assertion for unary cost $u$ only, the remaining costs can be proved similarly. To do so, we assume that only after reparameterization only the optimal label of unary cost $u$ has changed.

Let this label be $s^{*}=\argmin_{s}\theta^{\H}_u(s)$. This implies $s^{*} \in \mathcal{O}(\theta^{\H})$. Now, to prove the first assertion, we have to show the inclusion $s^{*} \in \mathcal{O}^{0}_u(\theta^{\H}) \implies s^{*} \in \mathcal{O}^{0}_u(\theta)$. 

Let us additionally define a function $\eta$ that takes as input a node index $p$ and outputs its optimal labelling, \ie $k=\eta(v)$ implies $k=\argmin_{s \in \Y}\theta_p(s)$. We define this function over all unary costs except $u$.  As $\theta^{\H}$ and $\theta$ differ in only the label $u$, we can use $\eta$ for choosing the optimal labelling for both of them. 

Since, $D(\theta^{\H})=D(\theta)$, we have

\begin{equation} \label{equ:repara-opt-shrinks}
\sum_{v \in \V} \min_{s \in \Y} \theta^{\H}_v(s) + \sum_{vw \in \E} \min_{st \in \Y^2} \theta^{\H}_{vw}(s,t)=\sum_{v \in \V} \min_{s \in \Y} \theta_v(s)+ \sum_{vw \in \E} \min_{st \in \Y^2} \theta_{vw}(s,t)
\end{equation}

Since all the labels but the ones for $u$ are the same, the terms on both sides of~\cref{equ:repara-opt-shrinks} cancel out leaving

\begin{equation} \label{equ:repara-opt-shrinks2}
\min_{s \in \Y} \theta^{\H}_u(s) + \sum_{ uv \mid v \in Nb(u)} \min_{st \in \Y^2} \theta^{\H}_{uv}(s,t)=\min_{s \in \Y} \theta_u(s)+ \sum_{uv \mid v \in Nb(u)} \min_{st \in \Y^2} \theta_{uv}(s,t)
\end{equation}

Substituting $s^{*}=\argmin_{s}\theta^{\H}_u(s)$ and $\eta$ into \cref{equ:repara-opt-shrinks2} we get

\begin{equation} \label{equ:repara-opt-shrinks3}
\theta^{\H}_u(s^{*}) + \sum_{uv \mid v \in Nb(u)}  \theta^{\H}_{uv}(s^{*},\eta(v))=\min_{s \in \Y} \theta_u(s)+ \sum_{uv \mid v \in Nb(u)}  \theta_{uv}(s,\eta(v))
\end{equation}

Now, by the definition of reparameterization we have,  $\forall s \in \Y$

\begin{equation} \label{equ:repara-opt-shrinks4}
\theta^{\H}_u(s) + \sum_{uv \mid v \in Nb(u)} \theta^{\H}_{uv}(s,\eta(v))= \theta_u(s)+ \sum_{uv \mid v \in Nb(u)} \theta_{uv}(s,\eta(v))
\end{equation}

This also holds true for label $s^{*}$, thus

\begin{equation} \label{equ:repara-opt-shrinks5}
\theta^{\H}_u(s^{*}) + \sum_{uv \mid v \in Nb(u)} \theta^{\H}_{uv}(s^{*},\eta(v))= \theta_u(s^{*})+ \sum_{uv \mid v \in Nb(u)} \theta_{uv}(s^{*},\eta(v))
\end{equation}

Now, equating the RHS of~\cref{equ:repara-opt-shrinks5} and the RHS of~\cref{equ:repara-opt-shrinks3}, we get 

\begin{equation} \label{equ:repara-opt-shrinks6}
\theta_u(s^{*})+ \sum_{uv \mid v \in Nb(u)} \theta_{uv}(s^{*},\eta(v))=\min_{s \in \Y} \theta_u(s)+ \sum_{uv \mid v \in Nb(u)}  \theta_{uv}(s,\eta(v))
\end{equation}

Thus $s^{*}=\argmin_{s\in\Y}\theta_u(s) \implies s^{*}\in\mathcal{O}^{0}_u(\theta)$, proving the first assertion.

To prove the second assertion, we have to show that if $\epsilon(\theta^{\H})>0$ and $D(\H(\theta))=D(\theta)$, there exists $s'\in\mathcal{O}^{0}_u(\theta)$ such that $s'\notin\mathcal{O}^{0}_u(\theta^{\H})$.

Since, $s' \in \mathcal{O}^{0}_u(\theta)$, we have
\begin{equation}
\theta_u(s')+\sum_{v \in Nb(u)}\theta_{uv}(s',\eta(v)) = \min_{s\in\SY}\theta_u(s)+\sum_{v \in Nb(u)} \min_{s\in\Y}\theta_{uv}(s,\eta(v))
\end{equation}

From the reparameterization relation we have

\begin{equation}
\theta_u(s')+\sum_{v \in Nb(u)}\theta_{uv}(s',\eta(v)) = \theta_u^{\H}(s')+\sum_{v \in Nb(u)}\theta_{uv}^{\H}(s',\eta(v))
\end{equation}

Since $\varepsilon(\theta^{\H})>0$, it has to be the case that 

\begin{equation}
\theta_u^{\H}(s')+\sum_{v \in Nb(u)}\theta_{uv}^{\H}(s',\eta(v))>\min_{s\in\Y}\theta_u^{\H}(s)+\sum_{v \in Nb(u)}\min_{s\in\Y}\theta_{uv}^{\H}(s,\eta(v))
\end{equation}

otherwise $\varepsilon(\theta^{\H})=0$. Thus, $s'\notin\mathcal{O}^{0}_u(\theta^{\H})$.

 
%
%
%
%

\end{proof}

\begin{lemma}
\label{prop:OPT-stabilizes}
There exists an $n$, such that $D(\mathcal{H}^{n+m}(\theta))=D(\mathcal{H}^{n}(\theta))$ implies \\
 ~$\mathcal{O}^{0}(\mathcal{H}^{n+m}(\theta)) = \mathcal{O}^{0}(\mathcal{H}^{n}(\theta)),\forall m \geq 0$.
\end{lemma}

\begin{proof}
\label{proof:OPT-stabilizes}
Let $\theta^{0},\theta^{1},...$ be the sequence of vectors generated from $\mathcal{H}$ via $\theta^{i+1}=\mathcal{H}(\theta^{i})$. After a certain number of iterations, we have to show that $\mathcal{O}^{0}(\theta^{n+m})=\mathcal{O}^{0}(\theta^{n})$, for all $m \geq0$. From \cref{prop:OPT-shrinks} we have after a fixed point has been reached at iteration $i$, $\mathcal{O}^{0}(\theta^{i}) \supset \mathcal{O}^{0}(\theta^{i+1}) \supset \mathcal{O}^{0}(\theta^{i+2}) \supset ...$. Since $\mathcal{O}^{0}(\theta)$ is finite, it cannot shrink indefinitely and after a certain number ($n$) of iterations $\mathcal{O}^{0}(\theta^{n})$ will become consistent.  Yielding, $\mathcal{O}^{0}(\theta^{m+n})=\mathcal{O}^{0}(\theta^{n})$, $\forall m \geq 0$. \QED
\end{proof}

Now, we are ready to prove ~\cref{thm:alg-convergence}

Combining the above lemmas, we introduce the notion of {\em consistency-enforcing} algorithms. An algorithm is consistency-enforcing if it satisfies Lemmas \ref{prop:Fcont}, \ref{prop:Dcont},  \ref{prop:eps-is-cont}, \ref{prop:monotonic-increase}, \ref{prop:fixedpt}, \ref{prop:D-UB}, \ref{prop:F-para-UB}, \ref{prop:OPT-shrinks},  \ref{prop:OPT-stabilizes},  .

The {\tt MPLP++} operator $\mathcal{H}$ is consistency enforcing then $\lim_{i  \rightarrow \infty} \varepsilon(\mathcal{H}^{i}(\theta))=0.$

\begin{equation}
\underset{i \rightarrow \infty}{\textrm{lim}} \varepsilon(\mathcal{H}^{i}(\theta))=0
\end{equation}

\TconvergenceAC*

\begin{proof}
By virtue of~\cref{prop:F-para-UB} and ~\cref{prop:monotonic-increase}, the sequence $\theta^{i}=\mathcal{H}^{i}(\theta)$ is bounded. Therefore, by the Bolzano-Weierstrass Theorem \cite{bartle2000introduction}, there exists a converging subsequence $\theta^{i(j)}$, $j=1,2,\hdots,$ where $j > j'$ implies $i(j) > i(j')$, \ie the limit $\theta^{*}:=\lim_{j \rightarrow \infty}\theta^{i(j)}$ exists. Let us show that it holds 

\begin{equation}	\label{equ:epsilon-tends-0}
\varepsilon(\theta^{*})=0
\end{equation}

for any converging subsequence of $\theta^{i}$.

Since due to convergence of $\mathcal{H}$ (shown in \cref{prop:monotonic-increase}) and \cref{prop:D-UB} the sequence $D(\theta^{i})$ is non-decreasing and bounded from above, and therefore converges to a limit point $D^{*}:=\underset{i \rightarrow \infty}{\textrm{lim}}D(\theta^{i})$. Therefore, it also holds 

\begin{equation}
D^{*}= \underset{j \rightarrow \infty}{\textrm{lim}} D(\theta^{i(j)}) = \underset{j \rightarrow \infty}{\textrm{lim}} D(\theta^{i(j)+n}), \ \ \ \forall n \geq 0
\end{equation}

This implies 

\begin{equation}
0 = \underset{j \rightarrow \infty}{\textrm{lim}} D(\theta^{i(j)}) - \underset{j \rightarrow \infty}{\textrm{lim}} D(\theta^{i(j)+n}) = \underset{j \rightarrow \infty}{\textrm{lim}} D(\theta^{i(j)}) -  D(\H^{n}(\theta^{i(j)})).
\end{equation}

Since $D$ is continuous it holds

\begin{equation}
0 = \underset{j \rightarrow \infty}{\textrm{lim}}( D(\theta^{i(j)}) -  D(\H^{n}(\theta^{i(j)})) ).
\end{equation}

and therefore,~\cref{equ:epsilon-tends-0} holds by virtue of~\cref{prop:OPT-stabilizes}.

Since, $\varepsilon$ is a continuous function,~\cref{equ:epsilon-tends-0} implies

\begin{equation}	\label{equ:eps-to-0}
\underset{j \rightarrow \infty}{\textrm{lim}} \varepsilon(\theta^{i(j)})=0
\end{equation}

for any converging sub-sequence $\theta^{i(j)}$.

Now, considering the sequence $\varepsilon(\theta^{i})$, we know by virtue of~\cref{prop:Fcont} $s^{i}:=\sup_{j \geq i} \varepsilon(\theta^{j})$. Sequence $s^{i}$ is a monotonically non-increasing sequence of non-negative numbers and therefore it has a limit $s^{*} = \underset{i \rightarrow \infty}{\textrm{lim}} s^{i}$.

According to the ``Theorem of Superior and Inferior Limits''~\cite{bartle2000introduction} there exists  a subsequence $\varepsilon(\theta^{i{'}(j)})$ such that

\begin{equation}
\underset{j \rightarrow \infty}{\textrm{lim}} \varepsilon(\theta^{i{'}(j(k))})=s^{*}
\end{equation}

The sequence $\theta^{i'(j)}$ is bounded virtue of~\cref{prop:F-para-UB} and therefore contains a converging subsequence $\theta^{i'(j(k))}$. For this subsequence it also holds,

\begin{equation}
\underset{k \rightarrow \infty}{\textrm{lim}} \varepsilon(\theta^{i'(j(k))}) = s^{*}
\end{equation}

%

At the same time, as proved in~\cref{equ:eps-to-0}, for any converging subsequence it holds

\begin{equation}
0 = \underset{k \rightarrow \infty}{\textrm{lim}} \varepsilon(\theta^{i'(j(k))}) = s^{*} = \underset{i \rightarrow \infty}{\textrm{lim}} \underset{k \geq i}{\textrm{sup}} \ \varepsilon(\theta^{k})
\end{equation}

Finally, $0 \leq \varepsilon(\theta^{i}) \leq \underset{k \geq i}{\textrm{sup}} \  \varepsilon(\theta^{k})$ implies $\underset{i \rightarrow \infty}{\textrm{lim}} \varepsilon(\theta^{i})=0$. \QED
\end{proof}


%

\TblockOpt*
\begin{proof}
The if clause has been proven in (\cite{schlesinger1976syntactic}, Theorem 2).
The only if clause can be proven as follows:

As mentioned in the main paper and shown in~\cite{schlesingera2011diffusion,werner2007linear} the Dual LP $D(\phi)$ is a concave, piecewise linear function. 

For proving optimality of block $D_{uv}(\phi_{u \leftrightarrow v})$ we need to show $\mathbf{0} \in \partial D_{uv}(\phi_{u \leftrightarrow v})$, where $\partial D_{uv}(\phi_{u \leftrightarrow v})$ is the super-differential of $D_{uv}(\phi_{u \leftrightarrow v})$. Reconsidering the dual $D_{uv}(\phi_{u \leftrightarrow v})$ 

\begin{equation}
D_{uv}(\phi_{u \leftrightarrow v}):= \min\limits_{st\in\SY^2}\theta^{\phi}_{uv}(s,t)+ \min\limits_{s\in\SY}\theta^{\phi}_{u}(s)+\min\limits_{t\in\SY}\theta^{\phi}_{v}(t)\,,  
\end{equation}

Let $x'_u=\argmin_{s\in\Y}\theta^{\phi}_{u}(s)$, $x'_v=\argmin_{t\in\Y}\theta^{\phi}_{v}(t)$ and \\ $(x''_u,x''_v)=\argmin_{st \in \Y^2}\theta^{\phi}_{uv}(s,t)$. Taking the super-gradient of $D_{uv}(\phi_{u \leftrightarrow v})$ \wrt $\phi_{u \rightarrow v}$, we have

\begin{align}
\frac{\partial D_{uv}(\phi_{u \leftrightarrow v})}{\partial \phi_{u \rightarrow v}(s)} \colon=\begin{cases}
	0,  \ \ \ &s \neq x'_u, \ s \neq x''_u\\
	0,  \ \ \ &s = x'_u, \ s = x''_u\\
	1, \ \ \ &s = x'_u, \ s \neq x''_u\\
	-1,  \ \ \ &s \neq x'_u, \  s = x''_u\\
                    \end{cases}
\end{align}

Likewise, taking the super-gradient of $D_{uv}(\phi_{u \leftrightarrow v})$ \wrt $\phi_{v \rightarrow u}$, we have

\begin{align}
\frac{\partial D_{uv}(\phi_{u \leftrightarrow v})}{\partial \phi_{v \rightarrow u}(t)} \colon=\begin{cases}
	0,  \ \ \ &t \neq x'_v, \ t \neq x''_v\\
	0,  \ \ \ &t = x'_v, \ t = x''_v\\
	1, \ \ \ &t = x'_v, \ t \neq x''_v\\
	-1,  \ \ \ &t \neq x'_v, \  t = x''_v\\
                    \end{cases}
\end{align}

Thus, if $s=x'_u=x''_u$ and $t=x'_v=x''_v$, we have $0 \in \frac{\partial D_{uv}({\phi_{v \leftrightarrow u}})}{\partial \phi_{v \leftrightarrow u}(s,t)}$, proving the only if clause. \QED
\end{proof}

